
\documentclass[nohyperref]{article}

\usepackage{microtype}
\usepackage{graphicx}
\usepackage{multirow}
\usepackage{threeparttable}

\usepackage{hyperref}

\usepackage{wrapfig}
\usepackage{epsfig}
\usepackage{epstopdf}
\usepackage{hyperref}       
\usepackage{url}            
\usepackage{booktabs}       
\usepackage{amsfonts}       
\usepackage{nicefrac}       
\usepackage{microtype}      
\usepackage{xcolor}
\usepackage{amsopn}
\usepackage{amsmath}      
\usepackage{natbib}
\usepackage{algorithm}
\usepackage{algorithmic}
\usepackage{bm}
\usepackage{amsmath}
\usepackage{graphics}
\usepackage{epsfig}
\usepackage{multirow}
\usepackage{epstopdf}
\usepackage{graphicx}
\usepackage{wrapfig}
\usepackage{amssymb}
\usepackage{pifont}
\usepackage{caption}
\usepackage{subcaption}
\graphicspath{{figures/}}



\usepackage[accepted]{icml2022}

\usepackage{amsmath}
\usepackage{amssymb}
\usepackage{mathtools}
\usepackage{theorem}

\usepackage[capitalize,noabbrev]{cleveref}


\def \A {\mathcal{A}}
\def \T {\mathcal{T}}
\def \D {\mathcal{D}}
\def \I {\mathcal{I}}

\def \u {\mathbf{u}}
\def \v {\mathbf{v}}
\def \yy {\mathbf{y}}

\def \ome {\boldsymbol{\omega}}
\def \lamm {\boldsymbol{\lambda}}


\def \num {\mathtt{num}}
\def \net {\mathtt{net}}

\usepackage{bm}

\def\H{\mathbf{H}}
\def\S{\mathcal{S}}
\def\R{\mathbb{R}}
\def\Q{\mathbf{Q}}
\def\n{\mathbf{n}}
\def\b{\mathbf{b}}
\def\W{\mathbf{W}}


\def\G{\mathbf{G}}
\def\z{\mathbf{z}}

\newtheorem{lemma}{Lemma}[section]
\newtheorem{proposition}{Proposition}[section]
\newtheorem{assumption}{Assumption}[section]
\newtheorem{thm}{Theorem}[section]
\newtheorem{remark}{Remark}

\newenvironment{proof}{{\noindent\it Proof:}\quad}{
	$\hfill \square$ \newline \par}


%




\usepackage[textsize=tiny]{todonotes}

\icmltitlerunning{Optimization-Derived Learning with Essential Convergence Analysis of Training and Hyper-training}

\begin{document}
	
	\twocolumn[
	\icmltitle{Optimization-Derived Learning with Essential Convergence Analysis \\ of Training and Hyper-training}
	
	
	
	\icmlsetsymbol{equal}{*}
	
	\begin{icmlauthorlist}
		\icmlauthor{Risheng Liu}{DUT,liaoning,pengcheng}
		\icmlauthor{Xuan Liu}{DUT,liaoning}
		\icmlauthor{Shangzhi Zeng}{uvic}
		\icmlauthor{Jin Zhang}{SUSTech,center}
		\icmlauthor{Yixuan Zhang}{SUSTech}
	\end{icmlauthorlist}
	
	\icmlaffiliation{DUT}{DUT-RU International School of Information Science and Engineering, Dalian University of Technology, Dalian, Liaoning, China.}
	\icmlaffiliation{liaoning}{Key Laboratory for Ubiquitous Network and Service Software of Liaoning Province, Dalian, Liaoning, China.}
	\icmlaffiliation{pengcheng}{Peng Cheng Laboratory, Shenzhen, Guangdong, China.}
	\icmlaffiliation{uvic}{Department of Mathematics and Statistics, University
		of Victoria, Victoria, British Columbia, Canada.}
	\icmlaffiliation{SUSTech}{Department of Mathematics, SUSTech International Center for Mathematics, Southern University of Science and Technology, Shenzhen, Guangdong, China.}
	\icmlaffiliation{center}{National Center for Applied Mathematics Shenzhen, Shenzhen, Guangdong, China} 
	
	\icmlcorrespondingauthor{Jin Zhang}{zhangj9@sustech.edu.cn}
	
	\icmlkeywords{Machine Learning, ICML}
	
	\vskip 0.3in
	]
	
	
	
	\printAffiliationsAndNotice{}  
	
	\begin{abstract}

		Recently, Optimization-Derived Learning (ODL) has attracted attention from learning and vision areas, which designs learning models from the perspective of optimization. However, previous ODL approaches regard the training and hyper-training procedures as two separated stages, meaning that the hyper-training variables have to be fixed during the training process, and thus it is also impossible to simultaneously obtain the convergence of training and hyper-training variables. In this work, we design a Generalized Krasnoselskii-Mann (GKM) scheme based on fixed-point iterations as our fundamental ODL module, which unifies existing ODL methods as special cases. Under the GKM scheme, a Bilevel Meta Optimization (BMO) algorithmic framework is constructed to solve the optimal training and hyper-training variables together. We rigorously prove the essential joint convergence of the fixed-point iteration for training and the process of optimizing hyper-parameters for hyper-training, both on the approximation quality, and on the stationary analysis. Experiments demonstrate the efficiency of BMO with competitive performance on sparse coding and real-world applications such as image deconvolution and rain streak removal.

	\end{abstract}

	\section{Introduction}
	\label{sec:introduction}

	There have been a number of methods to handle learning and vision tasks,
	and conventional ones utilize either classic optimization or machine learning schemes directly.
	Using earlier optimizers to solve manually designed objective functions arises two problems:
	the task-related objective functions might be hard to solve,
	and not able to accurately model actual tasks.
	In recent years, a series of approaches called Optimization-Derived Learning (ODL) have emerged, 
	combining the ideas of optimization and learning, and leverage optimization techniques to establish learning methods~\cite{chen2021learning,monga2021algorithm,shlezinger2020model,he2021automl,hutter2019automated,yang2022transformers}.
	The fundamental idea of ODL is to incorporate trainable learning modules into an optimization process and then learn the corresponding parameterized models from collected training data. 
	Therefore, ODL aims to not only possess the convergence guarantee of optimization methods,
	but also achieve satisfactory practical performance
	with the help of neural networks. 
	Overall speaking, ODL has two goals in divergent directions, 
	i.e., 
	to train an algorithmic scheme to minimize the given objective faster 
	or to minimize the reconstruction error between the established model and the actual task.
	These two goals correspond to two processes in ODL, called training and hyper-training.
	For training, we aim to solve the optimization model (minimize the objective and find the optimal training variables),
	while for hyper-training, we aim to find the optimal hyper-parameters (hyper-training variables) to characterize the task. 

%
%
	

	\subsection{Related Works} \label{sec:related works}

	Over the past years, ODL approaches have been established based on various numerical optimization schemes and parameterization strategies (e.g., numerical hyper-parameters and network architectures)~\cite{feurer2019hyperparameter,thornton2013auto,schuler2015learning,chen2016trainable}. 	
	They have been widely applied to all kinds of learning and vision tasks, based on classic optimization schemes like Proximal Gradient or Iterative Shrinkage-Thresholding Algorithm (PG or ISTA)~\cite{daubechies2004iterative} and Alternating Direction Method of Multiplier (ADMM)~\cite{boyd2011distributed}.
	For example, 
	Learned ISTA (LISTA)~\cite{gregor2010learning,chen2018theoretical}, Differentiable Linearized ADMM (DLADMM)~\cite{xie2019differentiable}, and DUBLID~\cite{li2020efficient} can be applied to image deconvolution;
	ISTA-Net~\cite{2017ISTA} can be applied to CS reconstruction;
	ADMM-Net~\cite{yang2017admm} can be applied to CS-MRI;
	PADNet~\cite{liu2019deep} can be applied to image haze removal;
	FIMA~\cite{liu2019convergence} can be applied to image restoration;
	and Plug-and-Play ADMM~\cite{ryu2019plug,venkatakrishnan2013plug,chan2016plug} can be applied to image super resolution.

	According to divergent starting points and corresponding types of employed schemes,
	we can roughly divide existing ODL into two main categories,
	called ODL based on Unrolling with Numerical Hyper-parameters (UNH) 
	and ODL Embedded with Network Architectures (ENA), respectively. 
	UNH starts from a traditional optimization process for training, and aims to unroll the iteration with learnable hyper-parameters.
	Thus, it utilizes parameterized numerical algorithms to optimize the determinate objective function.
	In the early stages, many classic optimization schemes have been designed based on theories and experiences, 
	such as Gradient Descent (GD), 
	ISTA, 
	Augmented Lagrangian Method (ALM), 
	ADMM, 
	and Linearized ADMM (LADMM)~\cite{lin2011linearized}.
	Parameters within these schemes can be regarded as hyper-parameters, and then learned via unrolling.
	There are also some works which introduce learnable modules into the optimization process to solve the determinate objective function faster,
	such as 
	LBS~\cite{liu2018toward},
	GCM~\cite{liu2018learning},
	and TLF~\cite{liu2020investigating}.
%
%
%
%
%
	As for ENA, it starts from an optimization objective function, but it aims to design its network architecture to solve specific tasks.
	It replaces part of the original 
	training iterations with (or just directly incorporates) trainable architectures to 
	approach the ground truth more efficiently,
	so the relationship between the final and original model is unclear.	
	In LISTA and DLADMM, they learn matrices as linear layers of networks.
	There are also some methods employing networks in the classical sense,
	such as ISTA-Net and Plug-and-Play ADMM.
	Very lately, pre-trained CNN-based modules such as DPSR and DPIR~\cite{nguyen2017plug,zhang2019deep,zhang2020plug,yuan2020plug,ahmad2020plug,song2020new} are implemented to achieve image restoration.
    However, UNH cannot reduce the gap between the artificially designed determinate objective function and the actual task;
    while for ENA, the embedded complex network modules make the iterative trajectory and convergence property difficult to analyze.
%
%
%
%
%
%
	In addition, UNH and ENA have a common and inevitable flaw.
	As aforementioned, ODL aims to minimize the objective function for training and to minimize the reconstruction error for hyper-training.
	However, existing ODL methods can only achieve one of these two goals once,
	which means the training and hyper-training procedures have to be separated into two independent stages.  
	In the first stage, hyper-parameters as hyper-training variables are determined by hyper-training,
	while in the second stage, they are fixed and substituted into the training iterations to find the optimal training variables.
	Therefore, the optimal hyper-training and training variables cannot be obtained simultaneously.
	This feature makes existing ODL methods inflexible,
	and since they ignore the intrinsic relationship between hyper-training and training variables,
	the obtained solution may not be the true one~\cite{liu2020generic},
	especially when the optimal training variables are not unique.
	

	%

	From the viewpoint of theory, 
	although the empirical efficiency of ODL has been witnessed in applications, research on solid convergence analysis is still in its infancy. 
	This gap makes the broader usability of ODL questionable. 
	Recently, some works have tried to provide convergence analysis on ODL methods through classic optimization tools.  
	In particular,
	~\cite{chan2016plug,teodoro2018convergent,sun2019online} analyze the non-expansiveness of incorporated trainable architectures under the boundedness assumption of the embedded network. 
	\cite{ryu2019plug} requires that the network residual admits a Lipschitz constant strictly smaller than one. 
	However, these methods can only guarantee the convergence towards some fixed points of the approximated model, instead of the solution to the original task. 
	A naive strategy to ensure the convergence is to learn as few parameters (hyper-training variables) as possible, such as to learn nothing but the step size of ISTA~\cite{ablin2019learning}. 
	Besides, when learning a neural network, we need additional artificially designed corrections. 
	For example,~\cite{liu2019convergence,moeller2019controlling,heaton2020safeguarded} 
	use neural networks and optimization algorithms to generate temporary updates and manually design rules to select true updates. 
	The restrictions seriously limit these methods.
	In addition, most previous ODL approaches are designed specially based on a specific  optimization scheme, making their convergence theory hard to be extended to other methods.

	\subsection{Our Contributions}
	
	As mentioned above, existing ODL methods, containing UNH and ENA, can only handle learning and vision tasks by optimizing hyper-training variables and training variables separately,
	raising a series of problems.
	Besides, they only focus on some specific problems with special structures. 
	In dealing with these issues, in this work,
	we construct the Generalized Krasnoselskii-Mann (GKM) scheme as a new and general ODL formulation from the perspective of fixed-point iterations.
	This implicitly defined scheme is more flexible than traditional optimization models,
	and includes more methods than existing ODL. 
	After that we establish the Bilevel Meta Optimization (BMO) algorithmic framework
	to simultaneously solve the training and hyper-training tasks,
	inspired by the leader-follower game~\cite{von2010market,liu2021investigating}.
	Then the process of training under the fixed-point iterations is to solve the lower-level training variables,
	while the process of hyper-training is to find the optimal upper-level hyper-training variables.
	In this way, we can incorporate training and hyper-training together and obtain the true optimal solution even if the fixed points are not unique.
	A series of theoretical properties are strictly proved to guarantee the essential joint convergence of training  and hyper-training variables, both on the approximation quality and on the stationary analysis. 
	We also conduct experiments on various learning and vision tasks to verify the effectiveness of BMO. 
	Our contributions are summarized as follows.

	
	\begin{itemize}
%
%

		\item We establish a new ODL formulation from the perspective of fixed-point iterations under the GKM scheme,
		which introduces learnable parameters as hyper-training variables.
		Serving as a general form of various ODL methods, the implicitly defined scheme  not only contains existing ENA and UNH methodologies,
		but also produces combined models.
		
		\item Based on the GKM scheme, the BMO algorithm provides a leader-follower mechanism to incorporate the process of training and hyper-training. 
		Unlike existing ODL methods separating these two sub-tasks,
		our method can optimize training and hyper-training variables simultaneously,
		making it possible to investigate their intrinsic relationship to obtain the true solution.
		 
		
		\item To our best knowledge, this is the first work that provides strict essential convergence analysis of both training and hyper-training variables,
		containing the analysis on approximation quality and stationary convergence.
		This is what existing ODL methods cannot achieve
		since they 
		regard training and hyper-training as independent procedures.
		
	\end{itemize}

	\section{The Proposed Algorithmic Framework} \label{sec:algorithm}
	
	In this section, we first present a general ODL platform by generalizing the classical fixed-point scheme~\cite{edelstein1978nonexpansive}, 
	and existing ODL methods (UNH and ENA) can be regarded as special cases of our scheme. 
	Then by considering the processes of training and hyper-training from meta optimization~\cite{liu2021investigating}, 
	we put forward our Bilevel Meta Optimization (BMO) algorithm. 

	\subsection{Generalized Krasnoselskii-Mann Scheme for ODL}
	\label{sec:GKM}
	
	Here we propose the Generalized Krasnoselskii-Mann (GKM) learning scheme as a general form of ODL methods.
	To begin with, we consider the following optimization model as the training process:
	\begin{equation}\label{eq:min f}
		\min_{\u \in U} f(\u), \ s.t. \ \A(\u)=\yy,
	\end{equation}
	where $\u \in U$ is the training variable, $f(\u)$ is the objective function related to the task, and $\A(\u)=\yy$ is a necessary linear constraint ($\A$ is a linear operator).
	Denote $\D(\cdot)$ as the corresponding non-expansive algorithmic operator.
	Then to solve Eq.~\eqref{eq:min f} is to iterate for the fixed point of $\D(\cdot)$.
	Thus the model in Eq.~\eqref{eq:min f} can be transformed into
	\begin{equation}\label{eq:u in fix}
		\u \in \mathtt{Fix} (\D(\cdot)),
	\end{equation}
	where $\mathtt{Fix}(\cdot)$ represents the set of fixed points.
	This serves as our fundamental training scheme,
	and is actually more general than Eq.~\eqref{eq:min f}, 
	because it not only contains traditional optimization models,
	but also represents other implicitly defined ODL models and implicit networks~\cite{fung2022jfb},
	which are designed based on optimization but further added with learnable modules.

	This process can be implemented via the following classical Krasnoselskii-Mann (KM) updating scheme~\cite{reich2000convergence,borwein1992krasnoselski}, whose $k$-th iteration step is
	$
	\label{eq:km}
	\T(\u^k) = \u^k + \alpha (\D(\u^k) - \u^k),
	$
	where $\alpha\in(0,1)$, and $\D(\cdot)$ is the operator for some basic numerical methods such as GD, PG, and ALM. 
	It can be observed that here $\T$ is an $\alpha$-averaged non-expansive operator.

	In this work, we further generalize the classical KM schemes to the following Generalized KM (GKM) learning scheme: 
	\begin{equation}\label{eq:gkm}
		\T(\u^k,\ome) = \u^k + \alpha (\D(\u^k,\ome) - \u^k),
	\end{equation}
	where 
	$
	\D(\cdot, \ome) \in 
	\left\{  \D_\num(\cdot, \ome) \circ \D_\net(\cdot, \ome) \right\}.
	$ 
	Here 
	$\circ$ represents compositions of operators,
	$\D_\num$ denotes numerical operators in traditional optimization schemes,
	and $\D_\net$ denotes iterative handcrafted network architectures.
	The variability of hyper-parameters $\ome$ as hyper-training variables makes the scheme more flexible.
	These hyper-training variables correspond to different parameters for various ODL methods.
	For example, in classic optimization schemes, $\ome$ denotes step size;
	in LISTA or DLADMM, $\ome$ comes from differentiable proximal operators, thresholds, support selection or penalties;
	in other methods with network modules, such as LBS, GCM, TLF, ISTA-Net, Plug-and-Play ADMM, and pre-trained CNN, $\ome$ denotes network parameters.	
	Note that in order to guarantee the non-expansiveness of $\D$, we utilize some normalization techniques on these parameters, 
	such as spectral normalization~\cite{miyato2018spectral}.
	By specifying $\D$ as different types of $\D_\num$ and $\D_\net$,
	we can contain diverse ODL methods (e.g., UNH, ENA, and their combinations) within our GKM scheme. 
	Examples of specific forms of operator $\D$ can be found in Appendix~\ref{sec:appendix B about D}.

	\subsection{Bilevel Meta Optimization for Training and Hyper-training} 
	
	As mentioned in Section~\ref{sec:related works}, existing ODL methods separately consider training and hyper-training as two independent stages.
	Hence, they fail to investigate the intrinsic relationship between training variables $\u$ and hyper-training variables $\ome$. 
	In this work, we would like to utilize the perspective from meta optimization~\cite{neumuller2011parameter} to incorporate these two coupled processes of training and hyper-training. 
	Thus, the optimal training and hyper-training variables can be obtained simultaneously.
	We formulate this meta optimization task within the leader-follower game framework. 
	In the leader-follower (or Stackelberg) game, the leader commits to a strategy, while the follower observes the leader's commitment and decides how to play after that. 

	Specifically in our task, by recognizing hyper-training variables $\ome$ and training variables $\u$ as the leader and follower respectively, 
	we have that with $\ome$, the iteration module $\T$ is determined, from which $\u$ finds the fixed point of $\T$. 
	Therefore, the leader $\ome$ optimizes its objective via the best response of the follower $\u$~\cite{liu2021investigating}.
	In order to clearly describe this hierarchical relationship, we introduce the following bilevel formulation to model ODL:
	\begin{equation}\label{bilevel_fix}
		\min\limits_{\u\in U,\ome\in\Omega}\ell(\u;\ome), 
		\text{ s.t. } \u\in\mathtt{Fix}(\T(\cdot,\ome)),
	\end{equation}
	where $\ell$ denotes the objective function for measuring the performance of the training process, which usually is set to be the loss function. 
	Thus, the upper level corresponds to the hyper-training process.
	Hereafter, we call this formulation as Bilevel Meta Optimization (BMO).
	Note that this is more general than a traditional bilevel optimization problem, 
	since the lower level for training is the solution mapping of a broader fixed-point iteration
	as mentioned for Eq.~\eqref{eq:u in fix} in Section~\ref{sec:GKM}.

	BMO can overcome several issues in existing ODL methods.
	When updating $\ome$ in the upper level for hyper-training, 
	instead of only using the information from original data, 
	BMO allows us to utilize the task-related priors by updating $\ome$ and $\u$ simultaneously.
	On the other hand, in theory, BMO makes us able to analyze the essential joint convergence of $\ome$ and $\u$ under their nested relationship, rather than only consider the convergence of $\u$.
	Thus we are able to obtain the true optimal solution of the problem,
	instead of just the fixed points of the lower iteration,
	especially when the fixed points are not unique.
	We will give the detailed convergence analysis in Section~\ref{sec:theoritical}.

	Now we establish the solution strategy to solve $\u$ and $\ome$ simultaneously under BMO. 
	Inspired by~\cite{liu2020generic,BDA}, in order to make efficient use of the hierarchical information contained in Eq.~\eqref{bilevel_fix}, 
	we update variables from divergent perspectives,
	aggregating the information from training and hyper-training.

	First, we give the optimization direction $\v_{l}$ from the perspective of training process in the lower level of Eq.~\eqref{bilevel_fix}. 
	We use $\T\left(\cdot,\ome\right)$ to update $\u^k$ at the $k$-th step,
	That is, to solve $\u\in\mathtt{Fix}(\T(\cdot,\ome))$,
	we define $\v_{l}^{k} =\T \left( \u^{k-1},\ome \right)$.

	Subsequently, we further give the descent direction based on the hyper-training process in the upper level of Eq.~\eqref{bilevel_fix}.  
	To make the updating direction contain the information of hyper-training variables $\ome$, 
	a simple idea is to directly use the gradient of loss function $\ell$ with respect to $\u$. 
	However, the gradient descent of $\ell$ may destroy the non-expansive property. 
	Hence, we add an additional positive-definite correction matrix $\H_{\ome}$ parameterized by $\ome$, 
	and set the step size as a decreasing sequence (i.e., $s_k\rightarrow0$)
	to ensure the correctness of the descent direction, i.e.,
\begin{equation}
	\v_{u}^{k} =\u^{k-1}-s_k \H_{\ome}^{-1} \frac{\partial }{\partial \u}\ell(\u^{k-1},\ome).
\end{equation}

	Next, we aggregate the two iterative directions and introduce a projection operator to generate the final updating direction:
\begin{equation}
\u^k =\mathtt{Proj}_{U, \H_{\ome}}\left(\mu \v_{u}^{k}+(1-\mu) \v_{l}^{k}\right).\end{equation}
	Here $\mathtt{Proj}_{U,\H_{\ome}}(\cdot)$ is the projection operator associated to $\H_{\ome}$ 
	and is defined as $\mathtt{Proj}_{U,\H_{\ome}}(\u) = \mathrm{argmin}_{\bar{\u} \in U} \|\bar{\u} - \u\|_{\H_{\ome}}$.
	Note that the projection $\mathtt{Proj}_{U,\H_{\ome}}$ is only for ensuring the boundedness of $\u^k$ in the theoretical analysis. 
	In practice, $U$ is usually chosen as a sufficiently large bounded set (e.g., $\R^n$), 
	and thus the projection operator can be ignored. 
	Finally, after $K$ updates of the variable $\u$, 
	we update the hyper-training variables $\ome$ by gradient descent. 
	Note that  $\frac{\partial }{\partial \u}\ell(\u^{K},\ome)$ can be obtained by automatic differential efficiently~\cite{franceschi2017forward}.
	Finally, we summarize the full BMO solution strategy in Algorithm~\ref{alg:bmo}.

	\begin{algorithm}[H]
		\caption{The Solution Strategy of BMO}\label{alg:bmo}
		\begin{algorithmic}[1]
			\REQUIRE Step sizes $\{s_k\}$, $\gamma$ and parameter $\mu$
			\STATE Initialize $\ome^0$ .
			\FOR {$t=1\rightarrow T$}
			\STATE Initialize  $\u^0$.
			\FOR {$k=1\rightarrow K$}
			
			\STATE $\v_{l}^{k} =\T\left(\u^{k-1},\ome^{t-1}\right)$.
			\STATE $\v_{u}^{k} =\u^{k-1} - s_k\H^{-1}_{\ome} \frac{\partial }{\partial \u}\ell(\u^{k-1},\ome^{t-1}) $.
			\STATE $\u^k =\mathtt{Proj}_{U, \H_{\ome}}\left(\mu \v_{u}^{k}+(1-\mu) \v_{l}^{k}\right)$.
			\ENDFOR
			\STATE $\ome^{t}=\ome^{t-1}-\gamma\frac{\partial}{\partial \ome}\ell(\u^K,\ome^{t-1})$.
			\ENDFOR
		\end{algorithmic}
	\end{algorithm}

	\section{Joint Convergence Analysis} \label{sec:theoritical} 
	
	In this section,
	we discuss the essential convergence analysis of our proposed BMO algorithm (Algorithm~\ref{alg:bmo}) for the GKM scheme,
	towards the optimal solution and stationary points of optimization problem in Eq.~\eqref{bilevel_fix} with respect to both $\u$ and $\ome$. 
	Note that this joint convergence of training and hyper-training also provides a unified theoretical guarantee for existing ODL methods containing UNH and ENA.
	Complete theoretical analysis is stated in Appendix~\ref{sec:appendix proof}.

	By introducing an auxiliary function, the problem in Eq.~\eqref{bilevel_fix} can be equivalently rewritten as the following 
	\begin{equation}\label{eq:phi_def}
		\min_{\ome \in \Omega} \ \varphi(\ome),\quad  \text{where} \quad \varphi(\ome) := \inf_{\u \in \mathtt{Fix}(\T(\cdot,\ome)) \cap U } \ \ell(\u,\ome).
	\end{equation}
	The sequence $\{\ome^t\}$ generated by BMO (Algorithm~\ref{alg:bmo}) actually solves the following approximation problem of Eq.~\eqref{bilevel_fix}
	\begin{equation}\label{eq:phiK_def}
		\min_{\ome \in \Omega} \ \varphi_K(\ome) := \ell(\u^K(\ome),\ome),
	\end{equation}
	where $\u^K(\ome)$ is derived by solving the problem $\inf_{\u \in \mathtt{Fix}(\T(\cdot,\ome)) \cap U } \ \ell(\u,\ome)$ and can be given by 
	\begin{equation}\label{bilevel_alg}
		\left\{
		\begin{aligned}
			\v^k_l(\ome) & =  \T(\u^{k-1}(\ome),\ome), \\
			\v^k_u(\ome) & = \u^{k-1}(\ome) - s_k \H_{\ome}^{-1}\frac{\partial }{\partial \u}\ell(\u^{k-1},\ome), \\
			\u^k(\ome) & = \mathtt{Proj}_{U,\H_{\ome}} \big( \mu \v^k_u(\ome) + (1-\mu) \v^k_l(\ome) \big),
		\end{aligned}\right.
	\end{equation}
	where $k = 1,\ldots, K$.

	\subsection{Approximation Quality and Convergence}
	In this part, we will show that Eq.~\eqref{eq:phiK_def} is actually an appropriate approximation to Eq.~\eqref{bilevel_fix} in the sense that any limit point $(\bar{\u},\bar{\ome})$ of the sequence $\left\{\left( \u^K(\ome^K),\ome^K\right)\right\}$ with $\ome^K \in \mathrm{argmin}_{\ome \in\Omega}\varphi_{K}(\ome)$ is a solution to the bilevel problem in Eq.~\eqref{bilevel_fix}.
	Thus we can obtain the optimal solution of Eq.~\eqref{bilevel_fix} by solving Eq.~\eqref{eq:phiK_def}.
	We make the following standing assumption throughout this part.

	\begin{assumption}\label{ass:assum_F}
		$\Omega$ is a compact set and $U$ is a convex compact set. $\mathtt{Fix}(\T(\cdot,\ome))$ is nonempty for any $\ome \in \Omega$. $\ell(\u,\ome)$ is continuous on $\R^n \times \Omega$. For any $\ome \in \Omega$, $\ell(\cdot,\ome) : \R^n \rightarrow \R$ is $L_\ell$-smooth, convex and bounded below by $M_0$.
	\end{assumption}
	Please notice that $\ell$ is usually defined to be the MSE loss, and thus Assumption~\ref{ass:assum_F} is quite standard for ODL~\cite{ryu2019plug,zhang2020plug}. 
	We first present some necessary  preliminaries.
	For any two matrices $\H_1, \H_2 \in \R^{n \times n}$, we consider the following partial ordering relation:
\begin{equation}
	\H_1 \succeq \H_2 \quad \Leftrightarrow \quad  \langle \u,\H_1\u \rangle \ge \langle \u,\H_2\u \rangle, \quad \forall \u \in \R^n.
\end{equation}
	If $\H \succ 0$, $\langle \u_1, \H \u_2 \rangle$ for $\u_1,\u_2 \in \R^n$ defines an inner product on $\R^n$. 
	Denote the induced norm with $\| \cdot \|_\H$, 
	i.e., $\| \u \|_\H := \sqrt{\langle \u,\H \u \rangle}$ for any $\u \in \R^n$.
	Denote the graph of operator $\D(\cdot,\ome)$ to be 
	\[
			\mathrm{gph} \,\D(\cdot,\ome) := \{(\u,\v) \in \R^n \times \R^n ~|~ \v = \D(\u,\ome)\}.
	\]
	We assume $\D(\cdot,\ome)$ satisfies the following assumption throughout this part.
	\begin{assumption} \label{ass:assum_T}
		There exist $\H_{ub} \succeq \H_{lb} \succ 0$, 
		such that for each $\ome \in \Omega$, there exists $\H_{ub} \succeq \H_{\ome} \succeq \H_{lb}$ such that
		\begin{itemize}
			\item[(1)] $\D(\cdot,\ome)$ is non-expansive with respect to $\| \cdot \|_{\H_{\ome}}$, i.e., for all $(\u_1,\u_2) \in \R^n \times \R^n$,
			\begin{equation*}
				\|\D(\u_1,\ome) - \D(\u_2,\ome) \|_{\H_{\omega}} \le \| \u_1 - \u_2\|_{\H_{\ome}}.
			\end{equation*}
			\item[(2)] $\D(\cdot,\ome)$ is closed, i.e.,
			$\mathrm{gph} \,\D(\cdot,\ome)$
			is closed.
		\end{itemize}
	\end{assumption}

	Under Assumption~\ref{ass:assum_T}, we obtain the non-expansive property of $\T(\cdot,\ome)$ defined in Eq.~\eqref{eq:gkm} from~\cite{Heinz-MonotoneOperator-2011}[Proposition~4.25] immediately.
	Then 
	we can show that the sequence $\{\u^k(\ome)\}$ generated by Eq.~\eqref{bilevel_alg} not only converges to the solution set of $\inf_{\u \in \mathtt{Fix}(\T(\cdot,\ome)) \cap U } \ \ell(\u,\ome)$ 
	but also admits a uniform convergence towards the fixed-point set $\mathtt{Fix}(\T(\cdot,\ome)$ 
	with respect to $\| \u^k(\ome) - \T(\u^k(\ome),\ome) \|_{\H_{lb}}^2$ for $\ome \in \Omega$.

	\begin{thm}
		Let $\{\u^k(\ome)\}$ be the sequence generated by Eq.~\eqref{bilevel_alg} with $\mu \in (0,1)$ and $s_k = \frac{s}{k+1}$, $s \in (0, \frac{\lambda_{\min}(\H_{lb})}{L_{\ell}} )$, where $\lambda_{\min}(\H_{lb})$ denotes the smallest eigenvalue of matrix $\H_{lb}$. Then, we have for any $\ome \in \Omega$,
		\begin{equation*}
			\begin{array}{c}
				\lim\limits_{k \rightarrow \infty}\mathrm{dist}(\u^k(\ome),\mathtt{Fix}(\T(\cdot,\ome)) = 0,
			\end{array}
		\end{equation*}
		\begin{equation*}
			\begin{array}{c}
				\lim\limits_{k \rightarrow \infty}\ell(\u^k(\ome),\ome) =  \varphi(\ome).
			\end{array}
		\end{equation*}
		Furthermore, there exits $C > 0$ such that for any $\ome \in \Omega$,
		\[
		\| \u^k(\ome) - \T(\u^k(\ome),\ome) \|_{\H_{lb}}^2 \le C\sqrt{\frac{1+\ln(1+k)}{k^{\frac{1}{4}}}}.
		\]
	\end{thm}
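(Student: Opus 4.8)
The plan is to fix $\ome \in \Omega$, work throughout in the inner product induced by $\H_\ome$, and exploit that $\T(\cdot,\ome) = I + \alpha(\D(\cdot,\ome)-I)$ is $\alpha$-averaged and nonexpansive in $\|\cdot\|_{\H_\ome}$ by Assumption~\ref{ass:assum_T}. Fix a minimizer $\u^\star \in \mathrm{argmin}_{\u \in \mathtt{Fix}(\T(\cdot,\ome)) \cap U}\ell(\u,\ome)$, so $\ell(\u^\star,\ome)=\varphi(\ome)$. The engine is a single one-step estimate for $a_k := \|\u^k(\ome)-\u^\star\|_{\H_\ome}^2$, which I would derive by (i) nonexpansiveness of $\mathtt{Proj}_{U,\H_\ome}$ together with $\u^\star \in U$, (ii) the identity $\|\mu a+(1-\mu)b\|^2=\mu\|a\|^2+(1-\mu)\|b\|^2-\mu(1-\mu)\|a-b\|^2$ on the convex combination, (iii) the averaged-operator defect inequality $\|\T\u-\u^\star\|_{\H_\ome}^2\le\|\u-\u^\star\|_{\H_\ome}^2-\tfrac{1-\alpha}{\alpha}\|\u-\T\u\|_{\H_\ome}^2$ applied to $\v_l$, and (iv) convexity of $\ell(\cdot,\ome)$, i.e. $\langle \u^{k-1}-\u^\star, \nabla_\u\ell\rangle \ge \ell(\u^{k-1},\ome)-\varphi(\ome)$, applied to $\v_u$. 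With $r_k := \|\u^k(\ome)-\T(\u^k(\ome),\ome)\|_{\H_\ome}$ and $\beta := (1-\mu)\tfrac{1-\alpha}{\alpha}>0$ this gives
\[
a_k\le a_{k-1}-\beta\, r_{k-1}^2-2\mu s_k\big(\ell(\u^{k-1},\ome)-\varphi(\ome)\big)+C_1 s_k^2,
\]
where $C_1$ uniformly bounds $\mu\|\nabla_\u\ell\|_{\H_\ome^{-1}}^2$ thanks to compactness of $U,\Omega$, smoothness of $\ell$, and $\H_{lb}\preceq\H_\ome\preceq\H_{ub}$.

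Summing from $1$ to $N$, using $a_N\ge 0$, $s_k=\tfrac{s}{k+1}$ (so $\sum s_k^2<\infty$ and $\sum_{k\le N}s_k\le s\ln(1+N)+O(1)$), and the uniform bound $|\ell(\u^{k-1},\ome)-\varphi(\ome)|\le C_2$ from compactness, yields the key uniform estimate $\sum_{k=1}^N r_{k-1}^2\le C_3(1+\ln(1+N))$. The same telescoped inequality shows $\sum_k s_k(\ell(\u^{k-1},\ome)-\varphi(\ome))$ is bounded above, and since $\sum_k s_k=\infty$ this forces $\liminf_k \ell(\u^k,\ome)\le\varphi(\ome)$. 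Once $r_k\to 0$ is in hand (next paragraph), every cluster point of the bounded sequence $\{\u^k(\ome)\}$ lies in $\mathtt{Fix}(\T(\cdot,\ome))\cap U$, since $I-\T(\cdot,\ome)$ is continuous and $\mathtt{Fix}$ is closed by Assumption~\ref{ass:assum_T}(2); a standard subsequence argument then gives $\mathrm{dist}(\u^k(\ome),\mathtt{Fix}(\T(\cdot,\ome)))\to 0$ and the matching lower bound $\liminf_k\ell(\u^k,\ome)\ge\varphi(\ome)$.

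For the explicit residual rate I would first record an \emph{almost-monotonicity} property. Writing the update as $\u^k=\mathtt{Proj}_{U,\H_\ome}(\T_\mu\u^{k-1}+\epsilon_k)$ with $\T_\mu:=\mu I+(1-\mu)\T$ (an $(1-\mu)\alpha$-averaged, hence nonexpansive, operator with $\mathtt{Fix}(\T_\mu)=\mathtt{Fix}(\T)$) and $\|\epsilon_k\|_{\H_\ome}=O(s_k)$, nonexpansiveness of $\mathtt{Proj}_{U,\H_\ome}$ and of $\T_\mu$ gives, for $\delta_k:=\|\u^k(\ome)-\u^{k-1}(\ome)\|_{\H_\ome}$, the increment bound $\delta_{k+1}\le\delta_k+O(s_k)$; moreover $\delta_k$ and $(1-\mu)r_{k-1}$ differ by an $O(s_k)$ term. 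Combining the summability $\sum_{k\le N}\delta_k^2=O(1+\ln N)$ with this almost-monotone increment through a windowed averaging argument — for a window of length $m$ ending at $N$, bound $\delta_N$ by its window average (Cauchy--Schwarz against $\sum\delta_k^2$) plus the accumulated perturbation $O(s\,m/N)$, then optimize $m$ — produces a bound of the stated form $r_N^2\le C\sqrt{\tfrac{1+\ln(1+N)}{N^{1/4}}}$ with $C$ uniform over $\Omega$, and in particular $r_N\to 0$, which closes the two limits above. Since $\H_{lb}\preceq\H_\ome$ implies $\|\cdot\|_{\H_{lb}}\le\|\cdot\|_{\H_\ome}$, the same bound holds in the $\H_{lb}$-norm.

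The main obstacle is the indefinite selection term $\ell(\u^{k-1},\ome)-\varphi(\ome)$: because the iterates need not be fixed points this quantity can be negative, so the residual is only \emph{logarithmically} summable rather than bounded, which is exactly what slows the last-iterate rate and blocks a direct Robbins--Siegmund / quasi-Fej\'er conclusion. Upgrading $\liminf_k\ell(\u^k,\ome)\le\varphi(\ome)$ to the full limit $\lim_k\ell(\u^k,\ome)=\varphi(\ome)$ is the delicate step; I would handle it through the Fej\'er-type inequality on $a_k$ relative to the minimizer $\u^\star$, using the vanishing increments $|\ell(\u^k,\ome)-\ell(\u^{k-1},\ome)|\le G\,\delta_k\to 0$ to rule out excursions above $\varphi(\ome)$. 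A secondary but pervasive difficulty is making every constant ($L_\ell$, the bounds on $\ell$ and $\nabla_\u\ell$ over $U$, $\lambda_{\min}(\H_{lb})$, $\lambda_{\max}(\H_{ub})$, $\mathrm{diam}(U)$) uniform in $\ome$ — which is precisely the role of Assumptions~\ref{ass:assum_F}--\ref{ass:assum_T} and the sandwiching $\H_{lb}\preceq\H_\ome\preceq\H_{ub}$, and which must be tracked through every step so that the final $C$ is independent of $\ome$.
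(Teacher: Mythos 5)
Your scaffolding reproduces the paper's architecture: the one-step Fej\'er-type inequality obtained from the optimality condition of $\v_u^k$, convexity and $L_\ell$-smoothness of $\ell$, the averaged-operator defect bound, and (firm) nonexpansiveness of $\mathtt{Proj}_{U,\H_\ome}$ is exactly Lemma~\ref{simple_bilevel_lem}; telescoping to the log-bounded sums, the $\liminf$ argument via $\sum_k s_k=\infty$, and the almost-monotone-increment-plus-window idea all appear in Lemmas~\ref{lem_bounded}--\ref{lem_monontone} and Propositions~\ref{prop:simple_bilevel_convergence}--\ref{simple_bilevel_complexity}. But the step you yourself flag as delicate --- upgrading $\liminf_k\ell(\u^k,\ome)\le\varphi(\ome)$ to the full limit --- is a genuine gap, and your sketch does not close it. Off the fixed-point set the selection term $\ell(\u^{k-1},\ome)-\varphi(\ome)$ can be negative, and during such phases your Lyapunov quantity $a_k=\|\u^k-\u^\star\|_{\H_\ome}^2$ may grow by up to $2\mu s_k(\varphi(\ome)-M_0)$ per step, which is \emph{not} summable since $\sum_k s_k=\infty$; conversely, during an excursion above $\varphi(\ome)+\epsilon$ your inequality only says that the distance to one particular minimizer $\u^\star$ decreases, which by itself contradicts nothing. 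Vanishing increments $\delta_k\to 0$ do not repair this: slow excursions are fully compatible with $\delta_k\to 0$ unless the growth of $a_k$ during sub-level phases is controlled, which your bookkeeping cannot do. The paper's resolution (Proposition~\ref{prop:simple_bilevel_convergence}, following Cabot) contains two ideas absent from your plan: (i) track the distance to the \emph{set} $\hat{\S}(\ome)$ via $\mathtt{Proj}_{\hat{\S}}(\u^k)$, not to a fixed $\u^\star$; and (ii) introduce the stopping times $\tau_n$, the last index at which the weighted residual terms plus $\mu s\beta_{k-1}(\ell(\v_u^{k})-\ell^*)$ go negative. In the case these occur infinitely often, the defining inequality of $\tau_n$ together with $\ell\ge M_0$ and $\beta_{\tau_n}\to 0$ forces the residuals at times $\tau_n$ to vanish, so cluster points of $\{\u^{\tau_n}\}$ are fixed points with $\ell\le\ell^*$, i.e., lie in $\hat{\S}(\ome)$; hence $h_{\tau_n}=\tfrac12\mathrm{dist}_\H^2(\u^{\tau_n},\hat{\S})\to 0$, and the Fej\'er decrease valid on $[\tau_n,n]$ transports this to $h_n\to 0$, giving both stated limits at once.

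There is a second, more technical flaw in your rate derivation. The claimed two-sided relation ``$\delta_k$ and $(1-\mu)r_{k-1}$ differ by an $O(s_k)$ term'' holds only in one direction once the projection is active: $\delta_k\le(1-\mu)r_{k-1}+O(s_k)$ is always true, but $\mathtt{Proj}_{U,\H_\ome}$ can truncate a step pointing outward across the boundary of $U$ so that $\delta_k$ is tiny while $r_{k-1}$ is not (the theorem is stated for a general convex compact $U$, not $U=\R^n$). Consequently a window bound on $\delta_N$ does not bound $r_N$. The paper avoids this (Proposition~\ref{simple_bilevel_complexity}) by first extracting $\delta_n^2\le C_1(1+\ln n)/n$ from the almost-monotone increment recursion of Lemma~\ref{lem_monontone} combined with the log-bounded sums, and then running the windowed argument \emph{directly on the residual}, using its own almost-monotonicity $\|\v_l^{k+1}-\u^k\|_\H^2\le\|\v_l^{k}-\u^{k-1}\|_\H^2+12D\|\u^k-\u^{k-1}\|_\H$ (from nonexpansiveness of $\T$ and boundedness of $U$) together with $\sum_{k\le n}\|\u^k-\v_l^{k+1}\|_\H^2=O(1+\ln n)$, with window length $m\asymp n^{1/4}$; this in fact yields the stronger bound $r_n^2\le C(1+\ln(1+n))/n^{1/4}$, of which the square-root bound in the statement is a consequence. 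Your uniformity bookkeeping in $\ome$ (the sandwich $\H_{lb}\preceq\H_\ome\preceq\H_{ub}$, compactness of $U\times\Omega$, $\ell\ge M_0$, and boundedness of $\varphi$ on $\Omega$) matches the paper and is sound.
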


	Thanks to the uniform convergence property of the sequence $\{\u^k(\ome)\}$, 
	inspired by the arguments used in~\cite{BDA},
	we can establish the convergence on both $\u$ and $\ome$ of BMO (Algorithm~\ref{alg:bmo}) towards the solution of optimization problem in Eq.~\eqref{bilevel_fix} in the following theorem.

	\begin{thm}
		Let $\{\u^k(\ome)\}$ be the sequence generated by Eq.~\eqref{bilevel_alg} with $\mu \in (0,1)$ and $s_k = \frac{s}{k+1}$, $s \in (0, \frac{\lambda_{\min}(\H_{lb})}{L_{\ell}} )$. 
		Then, let $\ome^K \in \mathrm{argmin}_{\ome \in\Omega}\varphi_{K}(\ome)$, and we have
		\begin{itemize}
			\item[(1)] any limit point $(\bar{\u},\bar{\ome})$ of the sequence $\{(\u^K(\ome^K),\ome^K) \}$ is a solution to the problem in Eq.~\eqref{bilevel_fix}, i.e., $\bar{\ome}\in\mathrm{argmin}_{\ome\in\Omega}\varphi(\ome)$ and $\bar{\u} = \T(\bar{\u},\bar{\ome}) $.
			\item[(2)] $\inf_{\ome \in \Omega}\varphi_K(\ome) \rightarrow \inf_{\ome \in \Omega} \varphi(\ome)$ as $K \rightarrow \infty$.
		\end{itemize}
	\end{thm}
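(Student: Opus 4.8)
The plan is to leverage the two \emph{uniform-in-$\ome$} conclusions of the previous theorem --- the pointwise limit $\ell(\u^k(\ome),\ome)\to\varphi(\ome)$ and the fixed-point residual bound with an $\ome$-independent constant $C$ --- together with a compactness argument, in the spirit of the epi-convergence reasoning of~\cite{BDA}. Since each $\u^K(\ome^K)$ lies in the compact set $U$ (it is a projection onto $U$) and $\ome^K\in\Omega$ is compact, the sequence $\{(\u^K(\ome^K),\ome^K)\}$ is bounded and thus admits limit points. I would fix an arbitrary limit point $(\bar{\u},\bar{\ome})$ and pass to a subsequence, relabeled by $K$, along which $\u^K(\ome^K)\to\bar{\u}$ and $\ome^K\to\bar{\ome}$.

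First I would establish feasibility of the limit. Evaluating the residual estimate of the previous theorem at $\ome=\ome^K$ and $k=K$ gives $\|\u^K(\ome^K)-\T(\u^K(\ome^K),\ome^K)\|_{\H_{lb}}^2 \le C\sqrt{(1+\ln(1+K))/K^{1/4}}\to 0$; since $\H_{lb}\succ 0$, this norm is equivalent to the Euclidean one, so the residual vanishes. Passing to the limit then yields $\bar{\u}=\T(\bar{\u},\bar{\ome})$, i.e.\ $\bar{\u}\in\mathtt{Fix}(\T(\cdot,\bar{\ome}))\cap U$; in particular $\bar{\u}$ is feasible for the lower-level problem defining $\varphi(\bar{\ome})$, whence $\ell(\bar{\u},\bar{\ome})\ge\varphi(\bar{\ome})$. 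The same argument applies verbatim to any limit point.

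Next I would prove Part~(2) by bounding $\inf_{\ome}\varphi_K$ from both sides. For the upper bound, given $\epsilon>0$ choose $\ome_\epsilon$ with $\varphi(\ome_\epsilon)\le\inf_{\ome}\varphi(\ome)+\epsilon$ (the infimum is finite since $\ell\ge M_0$ by \cref{ass:assum_F}); the pointwise limit gives $\inf_{\ome}\varphi_K(\ome)\le\varphi_K(\ome_\epsilon)=\ell(\u^K(\ome_\epsilon),\ome_\epsilon)\to\varphi(\ome_\epsilon)$, so letting $\epsilon\downarrow 0$ yields $\limsup_K\inf_{\ome}\varphi_K\le\inf_{\ome}\varphi$. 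For the lower bound, every subsequential limit of the real sequence $\{\inf_{\ome}\varphi_K(\ome)\}=\{\varphi_K(\ome^K)\}$ equals $\ell(\tilde{\u},\tilde{\ome})$ for some limit point $(\tilde{\u},\tilde{\ome})$, using joint continuity of $\ell$ from \cref{ass:assum_F}; by the feasibility step each such point satisfies $\ell(\tilde{\u},\tilde{\ome})\ge\varphi(\tilde{\ome})\ge\inf_{\ome}\varphi$, so $\liminf_K\inf_{\ome}\varphi_K\ge\inf_{\ome}\varphi$. The two bounds force $\inf_{\ome}\varphi_K\to\inf_{\ome}\varphi$, which is Part~(2).

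Finally, for the fixed limit point $(\bar{\u},\bar{\ome})$, Part~(2) gives $\ell(\bar{\u},\bar{\ome})=\lim_K\varphi_K(\ome^K)=\inf_{\ome}\varphi$; combined with $\inf_{\ome}\varphi\le\varphi(\bar{\ome})\le\ell(\bar{\u},\bar{\ome})$ from the feasibility step, this collapses to $\varphi(\bar{\ome})=\inf_{\ome}\varphi$, so $\bar{\ome}\in\mathrm{argmin}_{\ome}\varphi$, and together with $\bar{\u}=\T(\bar{\u},\bar{\ome})$ the pair $(\bar{\u},\bar{\ome})$ solves \eqref{bilevel_fix}, establishing Part~(1). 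I expect the main obstacle to be the limit passage in the feasibility step: it requires the operator $\T(\cdot,\ome)$ (equivalently $\D$) to vary continuously in $\ome$, rather than merely the per-$\ome$ closedness granted by \cref{ass:assum_T}. The decisive enabler is the $\ome$-uniformity of the constant $C$, which is precisely what legitimizes evaluating the residual bound at the \emph{moving} iterate $\ome^K$; I would then invoke joint continuity of $\T$ (or a joint closed-graph property of $\ome\mapsto\mathrm{gph}\,\D(\cdot,\ome)$) to conclude $\bar{\u}=\T(\bar{\u},\bar{\ome})$.
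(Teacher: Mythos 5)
Your proof is correct and is essentially the paper's argument: you verify the same two inputs from the preceding theorem (the $\ome$-uniform residual bound, whose $\ome$-independent constant is exactly what licenses evaluation at the moving iterate $\ome^K$, and the pointwise limit $\varphi_k(\ome)\to\varphi(\ome)$) and run the same compactness argument, differing only in bookkeeping---you prove part (2) self-containedly by a two-sided sandwich and deduce part (1) from it, whereas the paper proves (1) directly via the chain $\varphi(\bar{\ome})\le\ell(\u^i(\ome^i),\ome^i)+\epsilon=\varphi_i(\ome^i)+\epsilon\le\varphi_i(\ome)+\epsilon$ and outsources (2) to \cite{BDA}[Theorem 1]. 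The joint closedness of $\T$ in $(\u,\ome)$ that you flag as the delicate limit-passage is indeed needed, but the paper's proof of its auxiliary proposition relies on it just as tacitly (the step ``since $\T$ is closed on $U$'' is applied along a sequence with varying $\ome^i$, which strictly requires closedness of the graph of $(\u,\ome)\mapsto\T(\u,\ome)$ rather than the per-$\ome$ closedness stated in the assumption), so your explicit caveat matches, and indeed makes honest, the same tacit requirement in the paper's own argument.
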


	\subsection{Stationary Analysis}
	
	Here we provide the convergence analysis of our algorithm with respect to stationary points,
	i.e., any limit point $\bar{\ome}$ of the sequence $\{\ome^K\}$ satisfies $\nabla \varphi(\bar{\ome}) = 0$, where $\varphi(\ome)$ is defined in Eq.~\eqref{eq:phi_def}.
	We consider the following assumptions. 
	For $\D(\cdot,\ome)$ we request a stronger assumption than Assumption~\ref{ass:assum_T} on contractive property.
	
	\begin{assumption} \label{ass:assum_stationary ell}
		$\Omega$ is compact and $U = \R^n$. 
		$\mathtt{Fix}(\T(\cdot,\ome))$ is nonempty
		for any $\ome \in \Omega$. 
		$\ell(\u,\ome)$ is twice continuously differentiable on $\R^n \times \Omega$. 
		For any $\ome \in \Omega$, $\ell(\cdot,\ome) : \R^n \rightarrow \R$ is $L_\ell$-smooth, convex and bounded below by $M_0$.
	\end{assumption}

	\begin{assumption} \label{ass:assum_stationary D}
	    $\D(\cdot,\ome)$ is contractive w.r.t. $\| \cdot \|_{\H_{\ome}}$.
	\end{assumption}

	Denote $\hat{\S}(\ome):= \mathrm{argmin}_{\u \in \mathtt{Fix}(\T(\cdot, \ome)) \cap U } \ell (\u,\ome)$,
	and we have the following stationary result.
	
	\begin{thm}
		Suppose Assumptions~\ref{ass:assum_T},~\ref{ass:assum_stationary ell} and~\ref{ass:assum_stationary D} are satisfied, 
		$\frac{\partial}{\partial \u} \T(\u,\ome)$ and $\frac{\partial}{\partial \ome} \T(\u,\ome)$ are Lipschitz continuous with respect to $\u$,
		and $\hat{\S}(\ome)$ is nonempty for all $\ome \in \Omega$. 
		Let $\{\u^k(\ome)\}$ be the sequence generated by Eq.~\eqref{bilevel_alg} with $\mu \in (0,1)$ and $s_k = \frac{s}{k+1}$, $s \in (0, \frac{\lambda_{\min}(\H_{lb})}{L_{\ell}} )$.
		Let $\ome^K$ be an $\varepsilon_K$-stationary point of $\varphi_{K}(\ome)$, i.e., 
		$
		\| \nabla \varphi_K(\ome^K) \| = \varepsilon_K.
		$
		Then if $\varepsilon_K \rightarrow 0$, we have that any limit point $\bar{\ome}$ of the sequence $\{\ome^K\}$ is a stationary point of $\varphi$, i.e., 
		$
			\nabla \varphi(\bar{\ome}) = 0. 
		$
	\end{thm}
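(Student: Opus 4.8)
The plan is to reduce the claim to a uniform convergence statement for gradients, $\nabla\varphi_K \to \nabla\varphi$ on $\Omega$, and then pass to the limit. First I would exploit that Assumption~\ref{ass:assum_stationary D} upgrades non-expansiveness to contraction: since $\T(\cdot,\ome)=\u+\alpha(\D(\cdot,\ome)-\u)$, if $\D(\cdot,\ome)$ is $\kappa$-contractive in $\|\cdot\|_{\H_{\ome}}$ then $\T(\cdot,\ome)$ is $(1-\alpha(1-\kappa))$-contractive, and by compactness of $\Omega$ this factor is bounded by some $\rho<1$ uniformly in $\ome$. Hence $\mathtt{Fix}(\T(\cdot,\ome))=\{\u^*(\ome)\}$ is a singleton, $\hat{\S}(\ome)=\{\u^*(\ome)\}$, and $\varphi(\ome)=\ell(\u^*(\ome),\ome)$. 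Differentiating $\u^*(\ome)=\T(\u^*(\ome),\ome)$, and using that $\|\tfrac{\partial}{\partial\u}\T\|_{\H_{\ome}}\le\rho<1$ makes $I-\tfrac{\partial}{\partial\u}\T$ invertible, the implicit function theorem gives that $\u^*(\cdot)$ is $C^1$ with
\[
\nabla_{\ome}\u^*(\ome)=\Big(I-\tfrac{\partial}{\partial\u}\T(\u^*(\ome),\ome)\Big)^{-1}\tfrac{\partial}{\partial\ome}\T(\u^*(\ome),\ome),
\]
so $\varphi$ is $C^1$ with $\nabla\varphi(\ome)=(\nabla_{\ome}\u^*(\ome))^{\top}\tfrac{\partial}{\partial\u}\ell(\u^*(\ome),\ome)+\tfrac{\partial}{\partial\ome}\ell(\u^*(\ome),\ome)$.

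The core step is the uniform convergence $\nabla\varphi_K \to \nabla\varphi$. Since $U=\R^n$ under Assumption~\ref{ass:assum_stationary ell}, the projection in Eq.~\eqref{bilevel_alg} is the identity and the iteration reduces to $\u^k(\ome)=\mu\v^k_u(\ome)+(1-\mu)\v^k_l(\ome)$. Differentiating this recursion in $\ome$ produces the linear recursion
\[
\nabla_{\ome}\u^k(\ome)=A_k(\ome)\,\nabla_{\ome}\u^{k-1}(\ome)+B_k(\ome),
\]
with $A_k=\mu I+(1-\mu)\tfrac{\partial}{\partial\u}\T(\u^{k-1},\ome)-\mu s_k\H_{\ome}^{-1}\tfrac{\partial^2}{\partial\u^2}\ell$ and $B_k$ collecting the $\ome$-explicit terms, chiefly $(1-\mu)\tfrac{\partial}{\partial\ome}\T(\u^{k-1},\ome)$ together with $O(s_k)$ contributions from $\v^k_u$. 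Because $s_k\to0$ and $\u^k(\ome)\to\u^*(\ome)$ (with a rate uniform over the compact $\Omega$, by the contraction and by the uniform residual bound proved earlier), the Lipschitz continuity in $\u$ of $\tfrac{\partial}{\partial\u}\T$ and $\tfrac{\partial}{\partial\ome}\T$ and the continuity of the second derivatives of $\ell$ give $A_k(\ome)\to A_\infty(\ome)=\mu I+(1-\mu)\tfrac{\partial}{\partial\u}\T(\u^*,\ome)$ and $B_k(\ome)\to B_\infty(\ome)=(1-\mu)\tfrac{\partial}{\partial\ome}\T(\u^*,\ome)$. As $\|A_\infty(\ome)\|_{\H_{\ome}}\le\mu+(1-\mu)\rho<1$ uniformly, the recursion is asymptotically contractive, and a standard estimate for such recursions yields $\nabla_{\ome}\u^k\to(I-A_\infty)^{-1}B_\infty=\nabla_{\ome}\u^*(\ome)$ uniformly. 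Substituting into $\nabla\varphi_K(\ome)=(\nabla_{\ome}\u^K)^{\top}\tfrac{\partial}{\partial\u}\ell(\u^K,\ome)+\tfrac{\partial}{\partial\ome}\ell(\u^K,\ome)$ and invoking uniform continuity of the derivatives of $\ell$ on the bounded iterate set then gives $\nabla\varphi_K\to\nabla\varphi$ uniformly on $\Omega$.

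With uniform convergence in hand the conclusion is routine. Given a limit point $\bar\ome$ of $\{\ome^K\}$, choose a subsequence $\ome^{K_j}\to\bar\ome$ and bound
\begin{align*}
\|\nabla\varphi(\bar\ome)\|\le{}&\|\nabla\varphi(\bar\ome)-\nabla\varphi(\ome^{K_j})\|\\
&+\|\nabla\varphi(\ome^{K_j})-\nabla\varphi_{K_j}(\ome^{K_j})\|+\varepsilon_{K_j}.
\end{align*}
The first term tends to $0$ by continuity of $\nabla\varphi$ and $\ome^{K_j}\to\bar\ome$, the second by the uniform convergence just established, and the third equals $\varepsilon_{K_j}\to0$ by hypothesis; hence $\nabla\varphi(\bar\ome)=0$.

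The main obstacle I anticipate is the uniform convergence of the Jacobians $\nabla_{\ome}\u^K(\ome)$. Controlling the product of the time-varying operators $A_k(\ome)$ uniformly in $\ome$ requires a contraction factor bounded away from $1$ uniformly over $\Omega$, uniform boundedness of the iterates $\u^k(\ome)$ so that the Lipschitz constants of the partials of $\T$ and the second derivatives of $\ell$ apply, and summable control of the vanishing $O(s_k)$ perturbations from $\v^k_u$. Notably, the non-uniqueness of fixed points that complicated the approximation-quality analysis no longer intervenes here, since the contraction hypothesis of Assumption~\ref{ass:assum_stationary D} renders $\u^*(\ome)$ unique and smooth; the whole difficulty is therefore concentrated in this single differentiated recursion.
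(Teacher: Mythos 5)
Your proposal is correct and follows essentially the same route as the paper: it reduces the claim to the uniform convergence $\sup_{\ome\in\Omega}\|\nabla\varphi_K(\ome)-\nabla\varphi(\ome)\|\to 0$ (the paper's Proposition~A.2), proved by first establishing uniform convergence $\u^k(\ome)\to\u^*(\ome)$ via the contraction factor $\mu+(1-\mu)\rho+O(s_k)<1$, then differentiating the recursion in $\ome$ and comparing against the implicit-function identity $\nabla_\ome\u^*=(\I-\partial_\u\T)^{-1}\partial_\ome\T$ — your ``standard estimate for asymptotically contractive recursions'' is exactly the paper's Lemma ($a_{k+1}\le\rho a_k+b_k$, $b_k\to 0$ $\Rightarrow$ $a_k\to 0$, from BDA) — and finishing with the same triangle-inequality argument at a limit point using continuity of $\nabla\varphi$ and $\varepsilon_K\to 0$.
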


	\section{Applications for Learning and Vision} \label{sec:application}
	
	Here we demonstrate some examples on how to address various real-world learning and vision applications under our BMO framework by jointly optimizing the training and hyper-training variables. 
	In sparse coding and image deconvolution,
	we model the task to be sparsification of coefficients as training, so the training variable $\u$ is model parameters; 
	in rain streak removal, the task is to decompose the background and rain streak layer as a generalized training process, so $\u$ is the model output (a clear image).
	In all tasks, we embed learnable iterations to yield better results (closer to ground truth
	or target) than directly minimizing the original objective function.
	More detailed information of the operator $\D$ can be found in Appendix~\ref{sec:appendix B about D}, containing the proof that $\D$ in our applications satisfies the assumptions.

	\textbf{Sparse Coding.}
	Sparse coding has become a popular technique to extract features from raw data~\cite{zhang2015survey}. 
	The difficulty within is to recover the sparse vector from the noisy data and ill-posed transform matrix. 
	To be specific, given the input dataset $\b\in\R^m$ and transform matrix $\Q\in\R^{m\times n}$, 
	our goal is to find the model parameters representation $\u_1\in\R^n$ and noise $\u_2\in\R^n$ as training variables such that they satisfy the following model $\Q\u_1+\u_2=\b$, 
	and the representation $\u_1$ and noise $\u_2$ are expected to be sparse enough,
	i.e., $\kappa\Vert\u_1\Vert_1+\Vert\u_2\Vert_1$ is minimized. 
	Hence, the optimization problem in Eq.~\eqref{eq:min f} can be formulated as 
	\begin{equation}\label{eq:sc}
		\min_{\u}\kappa\Vert\u_1\Vert_1+\Vert\u_2\Vert_1, \text{ s.t. } \Q\u_1+\u_2=\b.
	\end{equation}
Classic numerical optimization methods usually use ADMM with gradient descent and projection operations to solve Eq.~\eqref{eq:sc}. 
 UNH attempts to learn descent directions that solve the problem faster and better than ADMM; ENA starts from a traditional numerical algorithm for an optimization objective function and replaces the projection operator with a functionally similar network, thus solving the specific task (rather than solve the original objective) and maintaining interpretability.  
 Notice that in comparison to some methods which intend to learn effective $\Q^\intercal$ for each update of variables, BMO use the same learnable module each time to update variables, which decouples our parameter quantity from the number of updates.
As for the descent operation, we consider 
$\D_{\mathtt{DLADMM}}$ as $\D_{\net}$. 

	\textbf{Image Deconvolution}
	For practical applications, we first consider image deconvolution (image deblurring)~\cite{richardson1972bayesian,andrews1977digital},
	a typical low-level vision task as a branch of image restoration, 
	whose purpose is to recover the clean image from the blurred one.
	The input image can be expressed as $\b = \Q \ast\u + \n$, 
	where $\Q ,\u$, and $\n$ respectively represent the blur kernel, latent clean image, and additional noise,
	and $\ast$ denotes the two-dimensional convolution operator. 
	Here we apply regularization methods based on Maximum A Posteriori (MAP) estimation. 
	Then the problem can be expressed as $\min_{\u\in U}\Vert\Q \ast \u-\b\Vert^2_2+\Phi(\u)$, 
	where $\Phi(\u)$ is the prior function of the image. 
	Since the image after wavelet transform is usually sparse, 
	we consider $\Phi(\u)=\kappa\Vert \W\u\Vert_1$, 
	where $\W$ is the wavelet transform matrix. 
	The objective function is \begin{equation}
\min_{\u}\Vert\Q \ast \u-\b\Vert^2_2+\kappa\Vert\W\u\Vert_1.\end{equation}
	Hence, similar to sparse coding, based on the wavelet transform model, we learn a wavelet coefficients as the training variables for image deconvolution. 
	For handling this problem,
	we composite $\D_{\mathtt{PG}}$ as $\D_\num$ with handcrafted network architectures DRUNet in DPIR~\cite{zhang2020plug} as $\D_{\net}$.

	\textbf{Rain Streak Removal.}
	With signal-dependent or signal-independent noise, images captured under rainy conditions often suffer from weak visibility~\cite{wang2019survey,li2019single}. 
	The rain streak removal task aims to decompose an input rainy image $\b$ into a rain-free background $\u_b$ and a rain streak layer $\u_r$, i.e., $\b=\u_b + \u_r$, 
	and hence enhances its visibility. 
	This problem is ill-posed since the dimension of unknowns $\u_b$ and $\u_r$ to be recovered is twice as many as that of the input $\b$. 
	The problem can be reformulated as the following optimization problem
	\begin{equation}\label{derain_}
		\min\limits_{\u_b,\u_r}\frac{1}{2}\Vert\u_b+\u_r-\b\Vert_2^2+\psi_b(\u_b)+\psi_r(\u_r),
	\end{equation}
	where $\psi_b(\u_b)$ denotes the priors on the background layer 
	and $\psi_r(\u_r)$ represents the priors on rain streak layer. 
	Here we set $\psi_b(\u_b)=\kappa_b\Vert\u_b\Vert_1$,$\psi_r(\u_r)=\kappa_r\Vert\nabla\u_r\Vert_1$. 
	By introducing auxiliary variables $\v_b$ and $\v_r$, the problem in Eq.~\eqref{eq:min f} is specified as 
	\begin{equation}\label{derain}
		\begin{aligned}
			\min\limits_{\u_b,\u_r,\v_b,\v_r} & \frac{1}{2}\Vert\u_b+\u_r-\b\Vert_2^2+\kappa_b\Vert\v_b\Vert_1+\kappa_r\Vert\v_r\Vert_1, \\
			\text{ s.t. }\quad  & \v_b=\u_b,\v_r=\nabla\u_r,
		\end{aligned}
	\end{equation}
	where $\nabla=\left[ \nabla_h;\nabla_v\right] $ denotes the gradient in horizontal and vertical directions. 
	The training variables in this task are output images $\u_b$ and $\u_r$ based on the simple summation model $\b=\u_b + \u_r$. 
	UNH usually uses ALM to solve Eq.~\eqref{derain}. ENA starts from ALM and uses the network to approximate the solution of the sub-problems.
	Here we consider $\D_{\mathtt{ALM}}$ as $\D_\num$ and the designed iterable network architectures RCDNet~\cite{wang2020model} as $\D_{\net}$.

	\section{Experimental Results} \label{sec:experiment}
	
	In this section, we illustrate the performance of BMO on sparse coding, image deconvolution and rain streak removal tasks. 
	More detailed parameter setting and network architectures can be found in Appendix~\ref{sec:appendix C experiments}.

	\subsection{Sparse Coding}

	We first investigate the performance in sparse coding, and compare BMO with LADMM and DLADMM, respectively as an example of UNH and ENA.
	Note that here we do not compare with pure networks, because we focus on convergence behaviors which pure networks cannot guarantee.
	Following the setting in~\cite{chen2018theoretical}, 
	we experiment on the classic Set14 dataset,
	in which salt-and-pepper noise is added to $10 \%$ pixels of each image. 
	Furthermore, the rectangle of each image is divided into non-overlapping patches of size $16 \times 16$. We use the patchdictionary method~\cite{xu2014fast} to learn a $256 \times 512$ dictionary $\Q$. 

	\begin{figure*}[!htbp]
		\centering
		\begin{subfigure}[t]{0.19\textwidth}
			{\includegraphics[height=3.2cm,width=3.2cm]{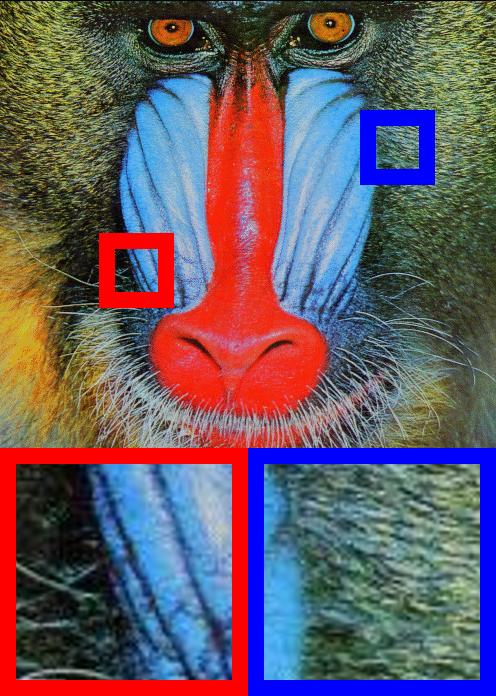}}
			\subcaption{Ground Truth}
		\end{subfigure}
		\begin{subfigure}[t]{0.19\textwidth}
			{\includegraphics[height=3.2cm,width=3.2cm]{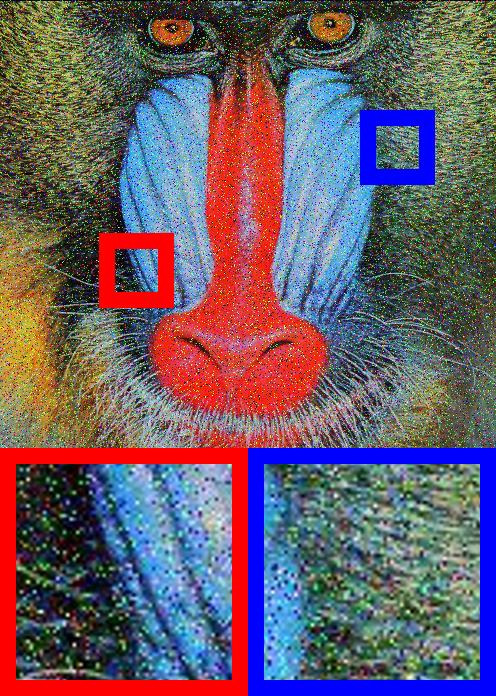}}
			\subcaption{Noisy Image}
		\end{subfigure}
		\begin{subfigure}[t]{0.19\textwidth}
			{\includegraphics[height=3.2cm,width=3.2cm]
				{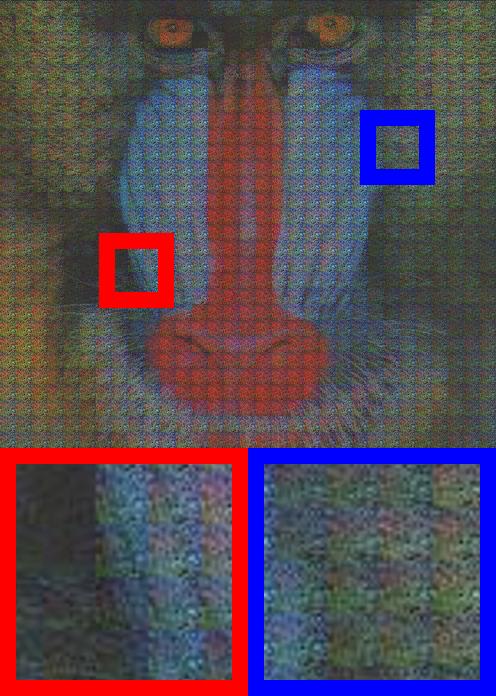}}
			\subcaption{LADMM}
		\end{subfigure}
		\begin{subfigure}[t]{0.19\textwidth}
			{\includegraphics[height=3.2cm,width=3.2cm]
				{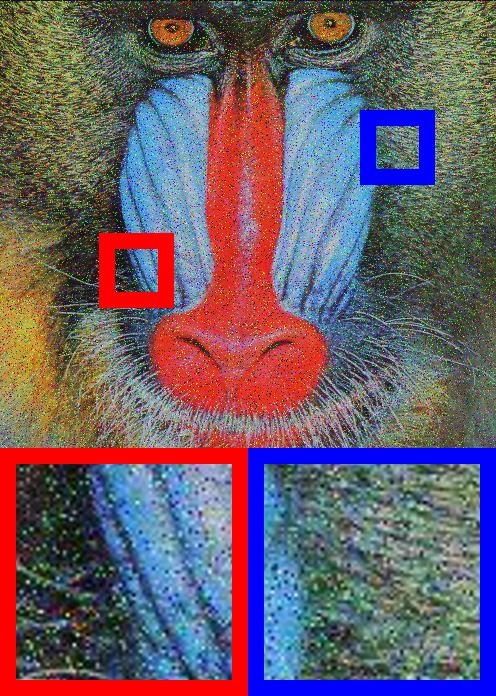}}
			\subcaption{DLADMM}
		\end{subfigure}
		\begin{subfigure}[t]{0.19\textwidth}
			{\includegraphics[height=3.2cm,width=3.2cm]
				{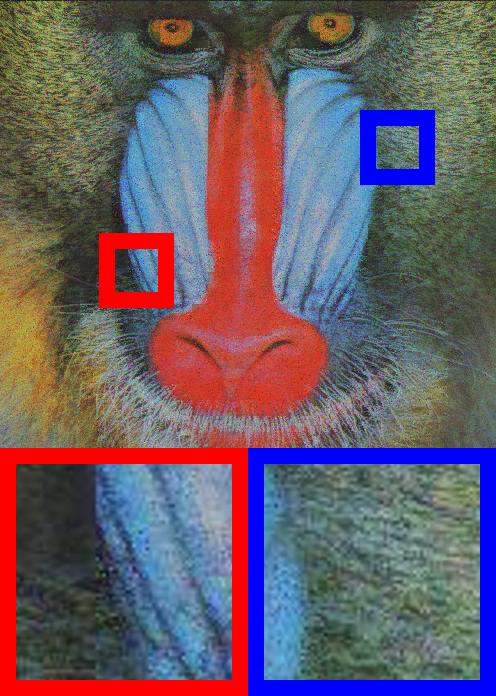}}
			\subcaption{BMO}
		\end{subfigure}
		\caption{Denoising results of the baboon image. 
			The larger red and blue boxes are the enlarged images of corresponding smaller boxes.
		}
		\label{fig:baboon}
	\end{figure*}

	\begin{figure}[ht]
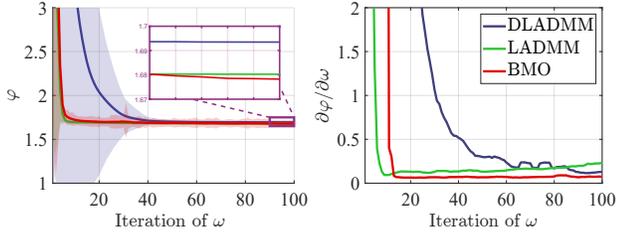

		\centering
		\includegraphics[height=3cm,width=4cm]{lossenlarge.pdf}
		\includegraphics[height=3cm,width=4cm]{grad.pdf}
		\caption{The convergence curves of $\varphi$ and $\frac{\partial\varphi}{\partial\ome}$ by
			DLADMM, LADMM and our BMO. 
			LADMM only learns the step size in ADMM, and DLADMM learns a matrix in ADMM. 
			It can be seen that our method achieves the optimal convergence of loss function with a stationary gradient curve.
		}
		\label{fig:convergence behavior}
	\end{figure}

	Table~\ref{tab:sparse code table} shows the PSNR and SSIM results.
	It can be seen that the performance of our BMO on both PSNR and SSIM is superior than LADMM and DLADMM.
	In Figure~\ref{fig:baboon}, we compare the visual quality of denoised images on the Baboon image, 
	and the quality of image recovered by BMO is visibly higher than that of LADMM and DLADMM. 
	This is because LADMM can only learn few hyper-parameters (such as the step size) for not destroying convergence, 
	and the structure of DLADMM is restricted by the number of layers, leading to a distance from the real fixed-point model. 
	However, thanks to the hybrid strategy to incorporate training and hyper-training, %
	BMO allows more hyper-training variables to improve the performance.
		
	\begin{table} 
		\centering
		\small
		\caption{PSNR and SSIM results for sparse coding on Set14.}
		\label{tab:sparse code table}
		\begin{tabular}{cccc} 
			\hline
			\multicolumn{1}{c}{Methods } & \multicolumn{1}{c}{Layers } & PSNR  & SSIM  \\
			\hline
			\multirow{2}{*}{DLADMM} & 5     & 15.59$\pm$0.81  & 0.52$\pm$0.13  \\
			\cline{2-4}          & 25    & 15.64$\pm$0.87  & 0.52$\pm$0.13  \\
			\hline
			\multirow{2}{*}{LADMM } & 5     & 10.47$\pm$2.36  & 0.41$\pm$0.14  \\
			\cline{2-4}          & 25    & 11.31$\pm$2.29  & 0.41$\pm$0.15  \\
			\hline
			\multirow{2}{*}{BMO} & 5     & \textbf{18.82$\pm$1.59}  & \textbf{0.63$\pm$0.16}  \\
			\cline{2-4}          & 25    & \textbf{18.98$\pm$2.53}  & \textbf{0.65$\pm$0.15}  \\
			\hline
		\end{tabular}%
	\end{table}	

	
	In Figure~\ref{fig:convergence behavior}, we further analyze the convergence behavior of hyper-training variables $\ome$ in 
	$\varphi_K(\ome)$ defined in Eq.~\eqref{eq:phiK_def}. 
	Although the final convergence of $\varphi$ by the three methods is close, 
	it can be seen from the gradient curve 
	that BMO outperforms in convergence speed and stability. 
	DLADMM can ensure the convergence of the upper loss function $\varphi$, 
	but the convergence speed is slow. 
	LADMM performs poorly in the convergence 
	of the gradient of upper loss function $\varphi$. 

	\begin{figure}[!tbp]
		\centering
		\begin{subfigure}[t]{0.48\textwidth}{
				\includegraphics[height=3cm,width=4cm]{15inneru.pdf}
				\includegraphics[height=3cm,width=4cm]{15inneruex.pdf}}
			\subcaption{Iterations for training = 15} 
			\label{sub:15}
		\end{subfigure}

		\begin{subfigure}[t]{0.48\textwidth}{
				\includegraphics[height=3cm,width=4cm]{25inneru.pdf}
				\includegraphics[height=3cm,width=4cm]{25inneruex.pdf}}
			\subcaption{Iterations for training = 25}
			\label{sub:5}
		\end{subfigure}
		
		\caption{Convergence curves of $\Vert\u^{k+1}-\u^{k}\Vert/\Vert\u^{k}\Vert$ with respect to $k$, the number of iterations of $\u$ for testing, after (\subref{sub:15}) 15 and (\subref{sub:5}) 25 iterations for training. 
			Solid lines on the right column represent the iterations for testing are less than those for training (trained iterations), while dotted lines represent the iterations for testing are more than those for training (untrained iterations). 
		}
		\label{fig:inner loop}
	\end{figure}

	Then, we verify the convergence of training variables $\u$ in the lower fixed-point iteration 
	with iterations of $\u$ for testing
	in Figure~\ref{fig:inner loop}.
	Note that for DLADMM, the number of iterations for training have to be more than those for testing, so in the right column we only show the curves of LADMM and BMO. 
	It can be observed that BMO performs better 
	in convergence stability and convergence speed with the increasing of iterations for testing. 
	LADMM convergences fast at first, 
	but it cannot further improve the convergence performance due to too few hyper-training variables. 
	DLADMM has slow convergence speed 
	because its number of hyper-parameters and network structure are restricted.

	In the right column of Figure~\ref{fig:inner loop}, 
	we show the convergence curve when the number of iterations for testing is more than those for training to further verify the stability and non-expansiveness of the learned lower iterative module. 
	Still, as can be seen, BMO is superior to LADMM,
	and the mapping learned by BMO on testing data can indeed continue to converge in the iterations beyond training steps, 
	implying that we have effectively learned a non-expansive mapping with convergence. 
	Another interesting finding is that even when the convergence curves oscillate a little in the trained iterations,
	our method can still learn a stable non-expansive mapping and converge successfully in the untrained iterations. 
	More investigations about the impact on the number of iterations for training are given in Appendix~\ref{sec:appendix C1 sparse coding}.


	Furthermore, we verify the influence of non-expansive property of $\D$ on the convergence of $\ome$ in Figure~\ref{SN}. 
	It can be seen that the non-expansive property reduces the gradient of $\varphi$ by an order of magnitude, and provides a better convergence of the lower iteration.
	These verify the important influence of the non-expansive property on the convergence. 

	\begin{figure}[!tbp]
		\centering
		\includegraphics[height=3cm,width=4cm]{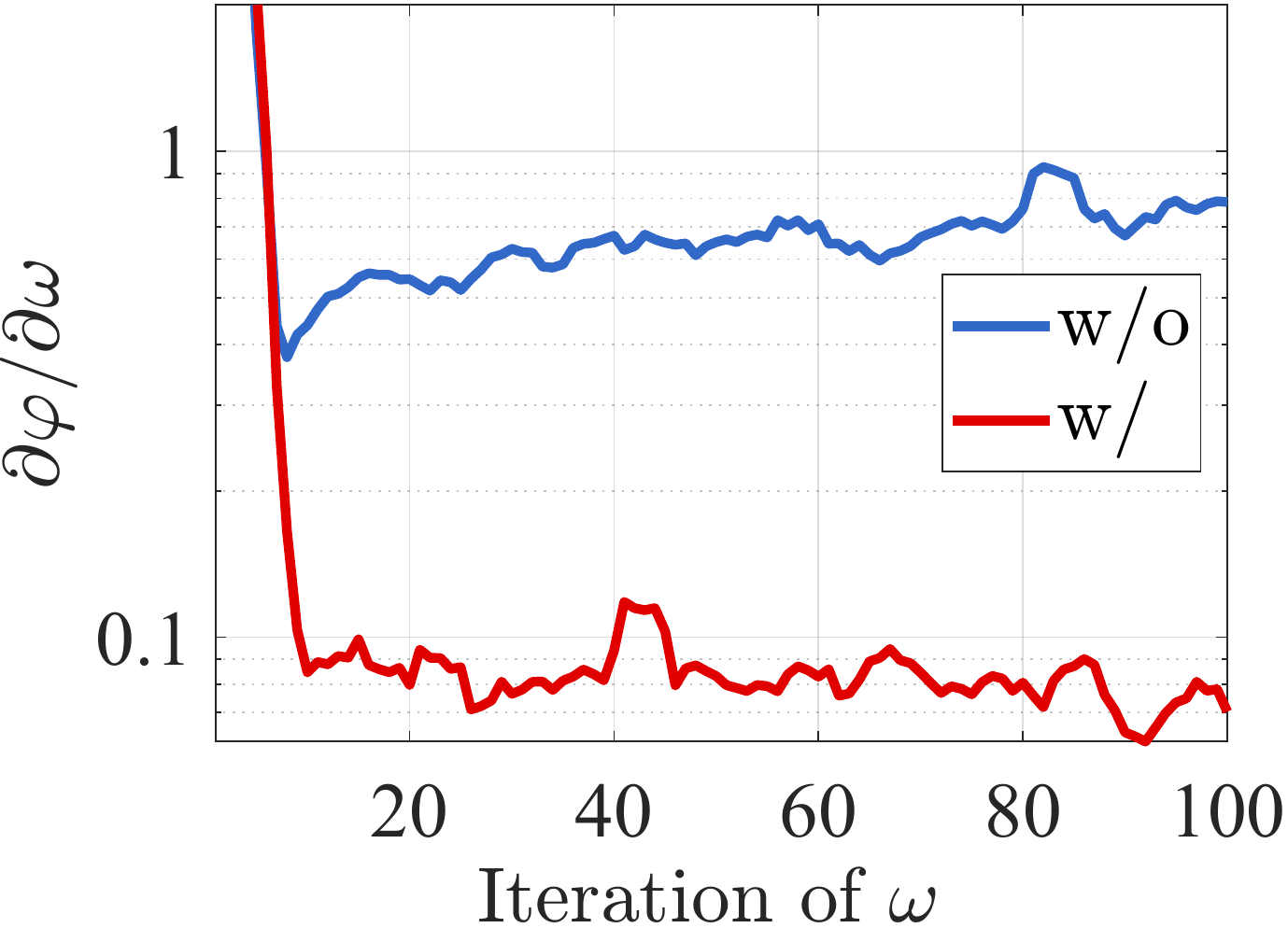}
		\includegraphics[height=3cm,width=4cm]{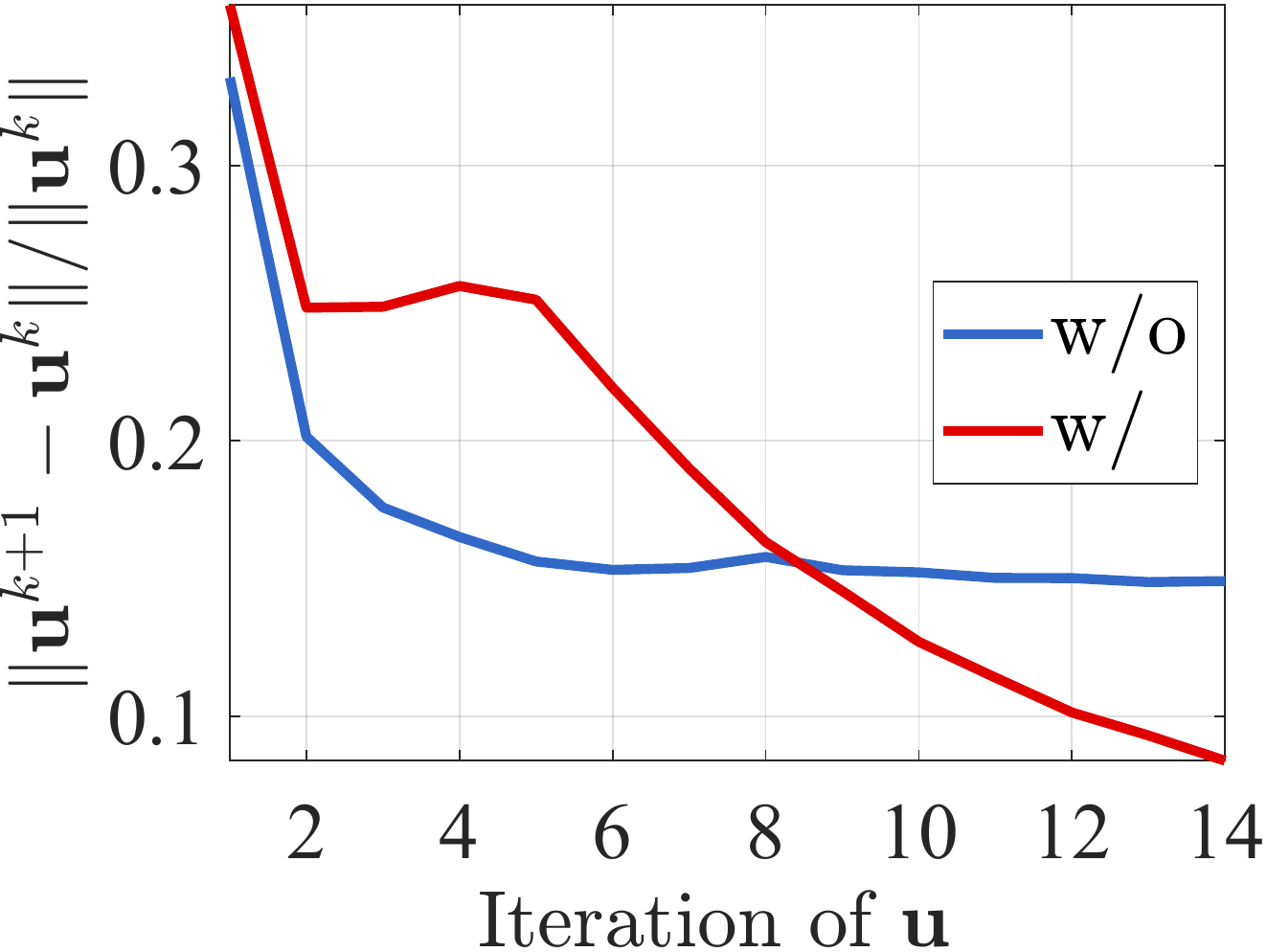}
		\caption{Convergence curves of $\frac{\partial\varphi}{\partial\ome}$ and $\Vert\u^{k+1}-\u^{k}\Vert/\Vert\u^{k}\Vert$ with or without non-expansive mapping. }
		\label{SN}
	\end{figure}

	\begin{figure*}[!htbp] 
		\captionsetup[subfigure]{justification=raggedright,singlelinecheck=false}
		\centering
		\begin{subfigure}[t]{0.13\textwidth}
			\includegraphics[height=1.2\linewidth,width=1.0\linewidth]{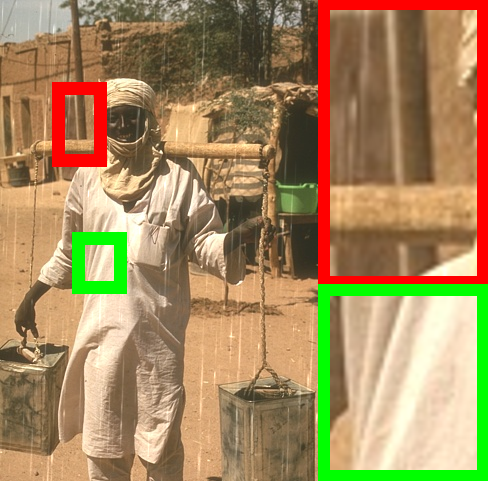}
		\end{subfigure}
		\begin{subfigure}[t]{0.13\textwidth}
			\includegraphics[height=1.2\linewidth,width=1.0\linewidth]{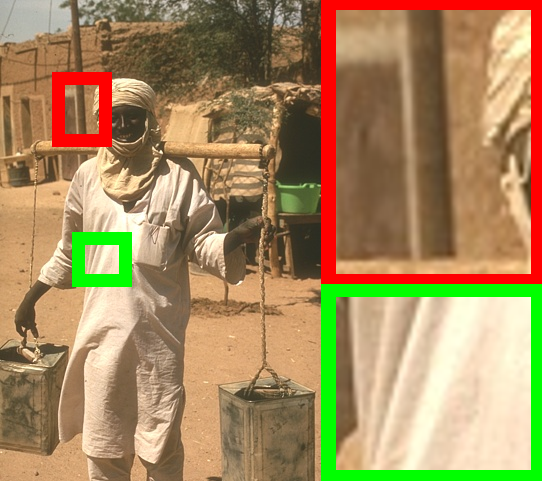}
			\subcaption*{26.59/0.930}
		\end{subfigure}
		\begin{subfigure}[t]{0.13\textwidth}
			\includegraphics[height=1.2\linewidth,width=1.0\linewidth]{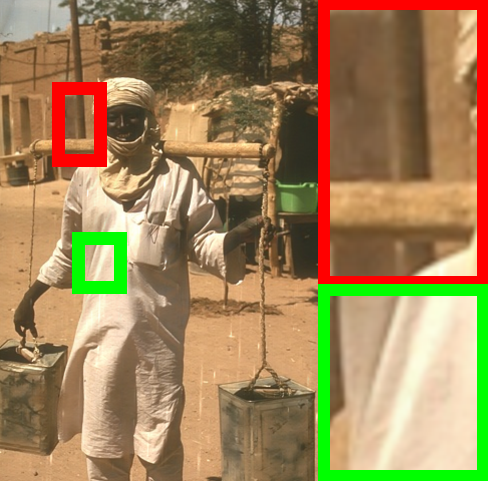}
			\subcaption*{20.49/0.594}
		\end{subfigure}
		\begin{subfigure}[t]{0.13\textwidth}
			\includegraphics[height=1.2\linewidth,width=1.0\linewidth]{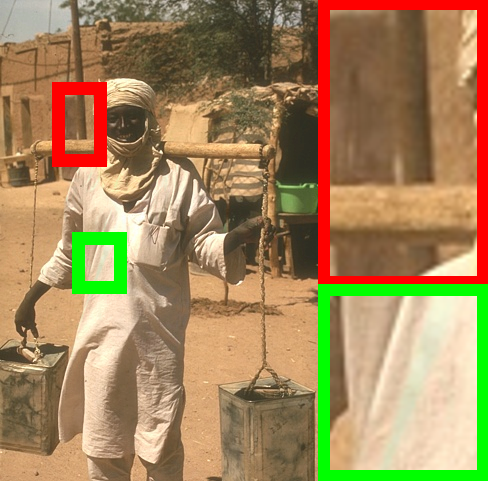}
			\subcaption*{31.13/0.965}
		\end{subfigure}
		\begin{subfigure}[t]{0.13\textwidth}
			\includegraphics[height=1.2\linewidth,width=1.0\linewidth]{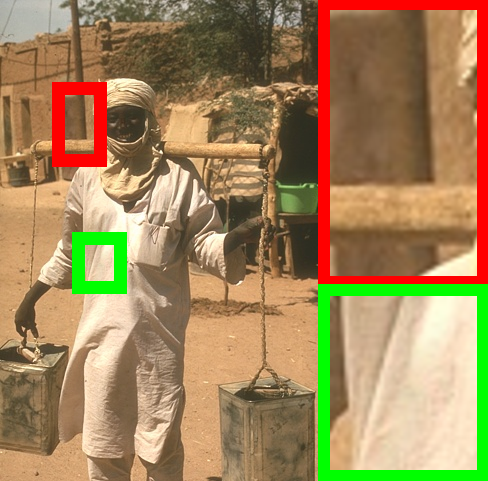}
			\subcaption*{30.89/0.967}
		\end{subfigure}
		\begin{subfigure}[t]{0.13\textwidth}
			\includegraphics[height=1.2\linewidth,width=1.0\linewidth]{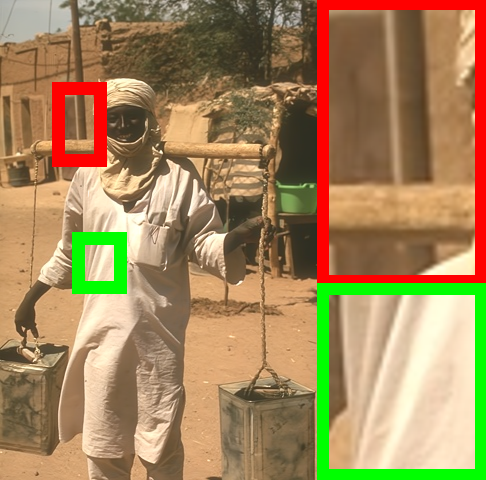}
			\subcaption*{31.19/0.968}
		\end{subfigure}
		\begin{subfigure}[t]{0.13\textwidth}
			\includegraphics[height=1.2\linewidth,width=1.0\linewidth]{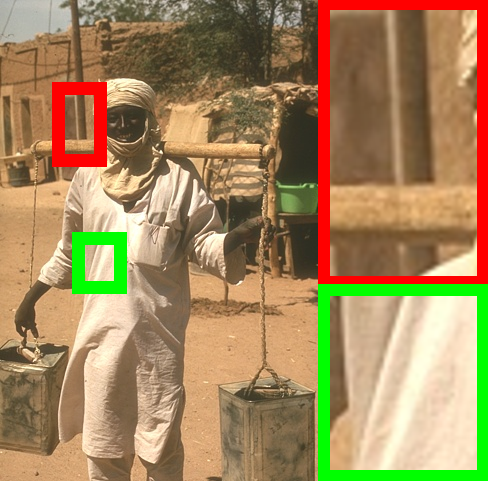}
			\subcaption*{{\textbf{33.94/0.980}}}
		\end{subfigure}
		
		\centering
		\begin{subfigure}[t]{0.13\textwidth}
			\includegraphics[height=1.2\linewidth,width=1.0\linewidth]{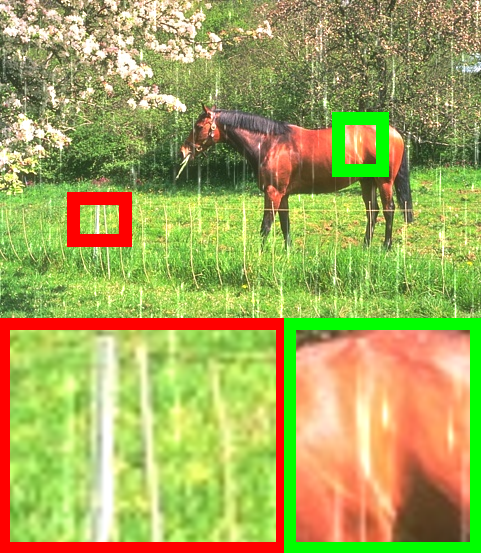}
			\subcaption*{\protect\\ Input}
		\end{subfigure}
		\begin{subfigure}[t]{0.13\textwidth}
			\centering
			\includegraphics[height=1.2\linewidth,width=1.0\linewidth]{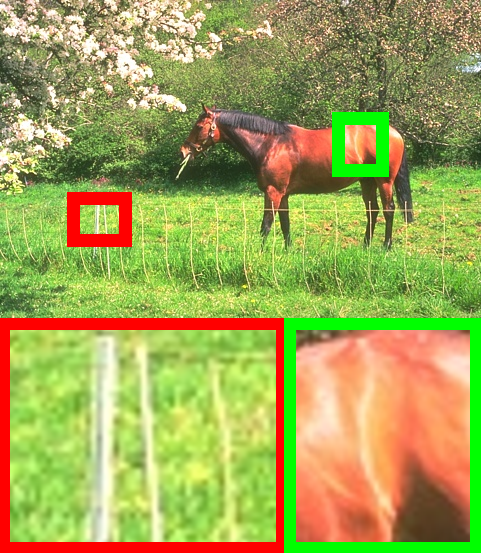}
			\subcaption*{33.76/0.945 \protect\\ Ground Truth}
		\end{subfigure}
		\begin{subfigure}[t]{0.13\textwidth}
			\includegraphics[height=1.2\linewidth,width=1.0\linewidth]{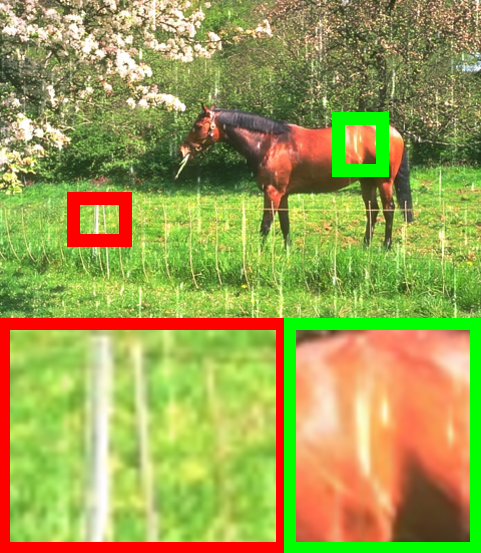}
			\subcaption*{26.04/0.811 \protect\\ DDN \protect\\ \cite{fu2017removing}}
		\end{subfigure}
		\begin{subfigure}[t]{0.13\textwidth}
			\includegraphics[height=1.2\linewidth,width=1.0\linewidth]{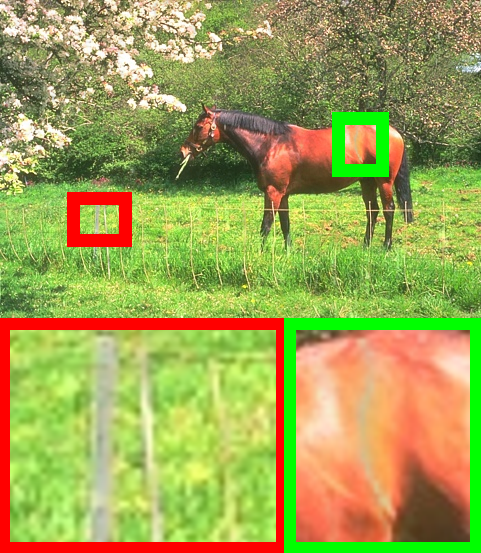}
			\subcaption*{37.39/0.978 \protect\\ JORDER\cite{derain}}
		\end{subfigure}
		\begin{subfigure}[t]{0.13\textwidth}
			\includegraphics[height=1.2\linewidth,width=1.0\linewidth]{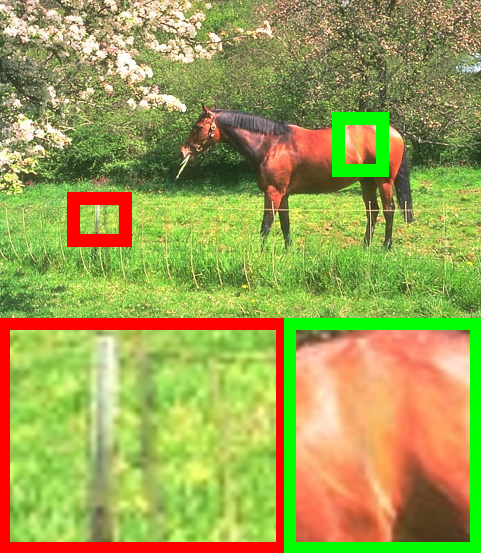}
			\subcaption*{36.72/0.980 \protect\\ PReNet\cite{ren2019progressive}}
		\end{subfigure}
		\begin{subfigure}[t]{0.13\textwidth}
			\includegraphics[height=1.2\linewidth,width=1.0\linewidth]{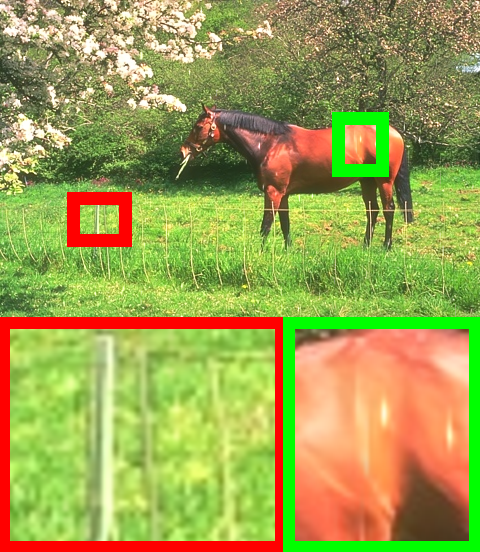}
			\subcaption*{{39.89/0.982}
			\protect\\ MPRNet\cite{zamir2021multi}}
		\end{subfigure}
		\begin{subfigure}[t]{0.13\textwidth}
			\includegraphics[height=1.2\linewidth,width=1.0\linewidth]{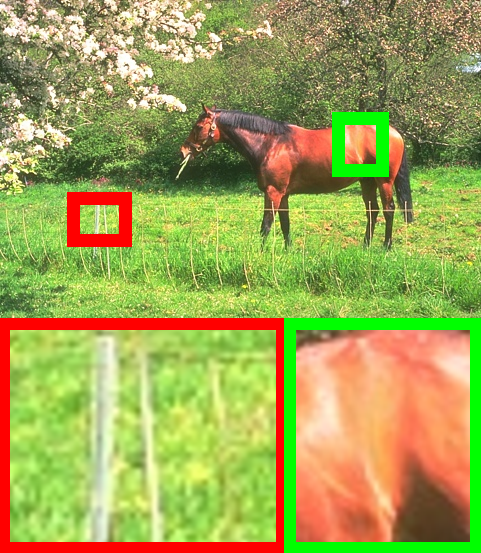}
			\subcaption*{{\textbf{42.49/0.991}} \protect\\ BMO}
		\end{subfigure}
		\caption{Performance of the rain streak removal task on two samples from Rain100L 
		The larger red and green boxes are the enlarged images of corresponding smaller boxes, and the numbers respectively correspond to PSNR and SSIM.}
		\label{fig Deraining results}
	\end{figure*}

	\subsection{Image Deconvolution}
	
	For the practical application on image deconvolution,  similar to~\cite{zhang2020plug}, 
	we use a large dataset containing 400 images from Berkeley Segmentation Dataset, 
	4744 images from Waterloo Exploration Database, 900 images from DIV2K dataset, 
	and 2750 images from Flick2K dataset.
	We test the performance of BMO on three classical testing images in Table~\ref{tab:deblurring table}. 
	and compare our method with representative methods including numerically designed method EPLL~\cite{zoran2011learning}, 
	learning-based method FDN~\cite{kruse2017learning}, 
	and plug-and-play methods including IRCNN, IRCNN+, and DPIR ~\cite{zhang2017learning,zhang2020plug}. 
	With embedded handcrafted network architectures $\D_{\net}$ and numerical schemes $\D_{\mathtt{PG}}$ to guarantee competitive performance, 
	BMO performs best in the last five columns and achieves top two in the first column 
	compared with state-of-the-art methods under different levels of noise on three testing images.
	Note that here we choose DRUNet in DPIR~\cite{zhang2020plug} as $\D_{\net}$,
	and the overall preferable results of BMO than directly using DPIR demonstrate the effect of using the composition of $\D_{\num}$ and $\D_{\net}$.

%

	\begin{table}
		\caption{PSNR (dB) results compared with state-of-the-art methods on Set3c~\cite{levin2009understanding} for image deconvolution. } 
		\renewcommand\arraystretch{1.1}
		\setlength\tabcolsep{2pt}
		\begin{center}
			\begin{small}
				\begin{tabular}{c|ccc|ccc}
					\hline
					Noise level &\multicolumn{3}{|c|}{$\sigma=1\%$}&\multicolumn{3}{|c}{$\sigma=3\%$}\\
					Image&Butterfly&Leaves&Starfish&Butterfly&Leaves&Starfish\\
					\hline
					EPLL&20.55&19.22&24.84&18.64&17.54&22.47\\
					FDN&27.40&26.51&27.48&24.27&23.53&24.71\\
					IRCNN&32.74&33.22&33.53&28.53&28.45&28.42\\
					IRCNN+&32.48&33.59&32.18&28.40&28.14&28.20\\
					DPIR&{\textbf{34.18}}&{{35.12}}&{{33.91}}&{{29.45}}&{{30.27}}&{{29.46}}\\
					BMO&{{33.67}}&{\textbf{35.39}}&{\textbf{33.98}}&{\textbf{29.46}}&{\textbf{30.69}}&{\textbf{29.64}}\\
					
					\hline
				\end{tabular}
			\end{small}
		\end{center}
		\label{tab:deblurring table}
	\end{table}

	\subsection{Rain Streak Removal} 
	
	As another example of applications, we carry out the rain streak removal experiment on synthesized rain datasets, including Rain100L and Rain100H. 
	Rain100L contains 200 rainy/clean image pairs  for training and another 100 pairs for testing,
	and Rain100H has been updated to include 1800 images for training and 200 images for testing. 
	We report the quantitative results of BMO in Table~\ref{tab:derain_table}
	with a series of state-of-the-art methods,
	containing DSC~\cite{fu2017clearing}, GMM~\cite{li2016rain}, JCAS~\cite{gu2017joint}, Clear~\cite{fu2017clearing}, DDN~\cite{fu2017removing}, RESCAN~\cite{li2018recurrent},  PReNet~\cite{ren2019progressive}, SPANet~\cite{wang2019spatial}, JORDER\_E~\cite{derain}, SIRR~\cite{wei2019semi}, MPRNet~\cite{zamir2021multi}, and RCDNet~\cite{wang2020model}. 
	It can be seen that BMO achieves higher PSNR and SSIM on both benchmark datasets.
	Note that BMO has a competitive performance compared with RCDNet, but BMO also provides better theoretical property.

	\begin{table}
		\small
		\centering
		\caption{Averaged PSNR and SSIM results among various methods for the single image rain removal task on two widely used synthesized datasets: Rain100L and Rain100H~\cite{derain}. }
		
		\begin{tabular}{c|cc|cc}
			\hline   { \multirow{1}{*}{Datasets}  } &   \multicolumn{2}{c|}{ Rain 100L } & \multicolumn{2}{c}{  { Rain 100H }} \\
			{Metrics} &   { PSNR} &   {SSIM } &   { PSNR} &   {SSIM } \\
			\hline   
			{ Input } & 26.90 & 0.838 & 13.56 & 0.370 \\
			{ DSC } & 27.34 & 0.849 & 13.77 & 0.319 \\
			{ GMM } & 29.05 & 0.871 & 15.23 & 0.449 \\
			{ JCAS } & 28.54 & 0.852 & 14.62 & 0.451 \\
			{ Clear } & 30.24 & 0.934 & 15.33 & 0.742 \\
			{ DDN } & 32.38 & 0.925 & 22.85 & 0.725 \\
			{ RESCAN } & 38.52 & 0.981 & 29.62 & 0.872 \\
			{ PReNet } & 37.45 & 0.979 & 30.11 & 0.905 \\
			{ SPANet } & 35.33 & 0.969 & 25.11 & 0.833 \\
			\text{ JORDER\_E } & 38.59 & 0.983 & 30.50 & 0.896 \\
			{ SIRR } & 32.37 & 0.925 & 22.47 & 0.716 \\
			{ MPRNet } & 36.40 & 0.965 & 30.41 & 0.890 \\
			{ RCDNet } & 40.00 & 0.986 & \textbf{31.28} & 0.903 \\
			{ BMO } & \textbf{40.07} & \textbf{0.986} &30.96 & \textbf{0.905} \\
			\hline

		\end{tabular}
		\label{tab:derain_table}
		
	\end{table}

				In Figure~\ref{fig Deraining results},
	we visually report the performance of rain streak removal task on two images from Rain100L~\cite{derain} compared with DDN, JORDER, PReNet and MPRNet.
	From the first row, one can observe that our BMO preserves the outline of the door in the background when removing the rain streak, 
	compared with DDN, JORDER, and PReNet, 
	which remain less details of the outline. 
	In the second row, 
	other learning-based methods tend to blur some image textures or leave distinct rain marks, 
	while BMO remains the original color and outline, and gains better performance on PSNR and SSIM.

	\section{Conclusions} 

	This paper first introduces the GKM scheme to unify various existing ODL approaches,
	and then proposes our BMO algorithmic framework to jointly solve the training and hyper-training task.
	We prove the essential convergence of training and hyper-training variables, from the perspective of both the approximation quality, and the stationary analysis.
	Experiments demonstrate our efficiency on sparse coding and  real-world applications on image processing.

	%
	%
	
	\section*{Acknowledgements}
	
	This work is partially supported by 	
	the National Natural Science Foundation of China (Nos. 61922019, 61733002, 62027826, and 11971220), 
	the National Key R \& D Program of China (No. 2020YFB1313503), 
	the major key project of PCL (No. PCL2021A12),
	the Fundamental Research Funds for the Central Universities,
	Shenzhen Science and Technology Program (No. RCYX20200714114700072), 
	and Guangdong Basic and Applied Basic Research Foundation (No. 2022B1515020082).

	\newpage 
	\bibliography{ODLdraft}
	\bibliographystyle{icml2022}

	\newpage
	\appendix
	\onecolumn
	\section*{Appendices}

	\section{Detailed Proofs} \label{sec:appendix proof}

	In this section,
	we discuss the convergence analysis of our proposed BMO algorithm (Algorithm~\ref{alg:bmo}) for the GKM scheme,
	towards the optimal solution and stationary points of optimization problem in Eq.~\eqref{bilevel_fix} with respect to both $\u$ and $\ome$. 
	Note that this joint convergence of training and hyper-training also provides a unified theoretical guarantee for existing ODL methods containing UNH and ENA.

	By introducing an auxiliary function, the problem in Eq.~\eqref{bilevel_fix} can be equivalently rewritten as the following 
	\begin{equation}\label{eq:appe phi_def}
		\min_{\ome \in \Omega} \ \varphi(\ome),\quad  \text{where} \quad \varphi(\ome) := \inf_{\u \in \mathtt{Fix}(\T(\cdot,\ome)) \cap U } \ \ell(\u,\ome).
	\end{equation}
	The sequence $\{\ome^t\}$ generated by BMO (Algorithm~\ref{alg:bmo}) actually solves the following approximation problem of Eq.~\eqref{bilevel_fix}
	\begin{equation}\label{phiK_def}
		\min_{\ome \in \Omega} \ \varphi_K(\ome) := \ell(\u^K(\ome),\ome),
	\end{equation}
	where $\u^K(\ome)$ is derived by solving the simple bilevel problem $\inf_{\u \in \mathtt{Fix}(\T(\cdot,\ome)) \cap U } \ \ell(\u,\ome)$ and can be given by 
	\begin{equation}\label{simple_bilevel_alg}
		\left\{
		\begin{aligned}
			\v^k_l(\ome) & =  \T(\u^{k-1}(\ome),\ome), \\
			\v^k_u(\ome) & = \u^{k-1}(\ome) - s_k \H_{\ome}^{-1}\frac{\partial }{\partial \u}\ell(\u^{k-1},\ome), \\
			\u^k(\ome) & = \mathtt{Proj}_{U,\H_{\ome}} \big( \mu \v^k_u(\ome) + (1-\mu) \v^k_l(\ome) \big),
		\end{aligned}\right.
	\end{equation}
	where $k = 1,\ldots, K$.

	\subsection{Approximation Quality and Convergence}
	In this part, we will show that Eq.~\eqref{phiK_def} is actually an appropriate approximation to Eq.~\eqref{bilevel_fix} in the sense that any limit point $(\bar{\u},\bar{\ome})$ of the sequence $\left\{\left( \u^K(\ome^K),\ome^K\right)\right\}$ with $\ome^K \in \mathrm{argmin}_{\ome \in\Omega}\varphi_{K}(\ome)$ is a solution to the bilevel problem in Eq.~\eqref{bilevel_fix}.
	Thus we can obtain the optimal solution of Eq.~\eqref{bilevel_fix} by solving Eq.~\eqref{phiK_def}.
	We make the following standing assumption throughout this part.

	\begin{assumption}\label{assum_F}
		$\Omega$ is a compact set and $U$ is a convex compact set. $\mathtt{Fix}(\T(\cdot,\ome))$ is nonempty for any $\ome \in \Omega$. $\ell(\u,\ome)$ is continuous on $\R^n \times \Omega$. For any $\ome \in \Omega$, $\ell(\cdot,\ome) : \R^n \rightarrow \R$ is $L_\ell$-smooth, convex and bounded below by $M_0$.
	\end{assumption}
	Please notice that $\ell$ is usually defined to be the MSE loss, and thus Assumption~\ref{assum_F} is quite standard for ODL~\cite{ryu2019plug,zhang2020plug}. 
	We first present some necessary  preliminaries.
	For any two matrices $\H_1, \H_2 \in \R^{n \times n}$, we consider the following partial ordering relation:
	\[
	\H_1 \succeq \H_2 \quad \Leftrightarrow \quad  \langle \u,\H_1\u \rangle \ge \langle \u,\H_2\u \rangle, \quad \forall \u \in \R^n.
	\]
	If $\H \succ 0$, $\langle \u_1, \H \u_2 \rangle$ for $\u_1,\u_2 \in \R^n$ defines an inner product on $\R^n$. 
	Denote the induced norm with $\| \cdot \|_\H$, 
	i.e., $\| \u \|_\H := \sqrt{\langle \u,\H \u \rangle}$ for any $\u \in \R^n$.
	We assume $\D(\cdot,\ome)$ satisfies the following assumption throughout this part.
	\begin{assumption} \label{assum_T}
		There exist $\H_{ub} \succeq \H_{lb} \succ 0$, 
		such that for each $\ome \in \Omega$, there exists $\H_{ub} \succeq \H_{\ome} \succeq \H_{lb}$ such that
		\begin{itemize}
			\item[(1)] $\D(\cdot,\ome)$ is non-expansive with respect to $\| \cdot \|_{\H_{\ome}}$, i.e., for all $(\u_1,\u_2) \in \R^n \times \R^n$,
			\begin{equation*}
				\|\D(\u_1,\ome) - \D(\u_2,\ome) \|_{\H_{\omega}} \le \| \u_1 - \u_2\|_{\H_{\ome}}.
			\end{equation*}
			\item[(2)] $\D(\cdot,\ome)$ is closed, i.e.,
			\[
			\mathrm{gph} \,\D(\cdot,\ome) := \{(\u,\v) \in \R^n \times \R^n ~|~ \v = \D(\u,\ome)\}
			\]
			is closed.
		\end{itemize}
	\end{assumption}

	Under Assumption~\ref{assum_T}, we obtain the following non-expansive properties of $\T(\cdot,\ome)$ defined in Eq.~\eqref{eq:gkm} from~\cite{Heinz-MonotoneOperator-2011}[Proposition~4.25] immediately.
	\begin{lemma}\label{lemma2.1}
		Given $\alpha \in (0,1)$, $\ome \in \Omega$, let $\T(\cdot,\ome) := (1-\alpha)\I + \alpha\D(\cdot,\ome)$, where $\I$ denotes the identity operator, then $\T(\cdot,\ome)$ is closed and satisfies that for any $(\u_1,\u_2) \in \R^n \times \R^n,$
		\begin{equation}
			\begin{aligned}
				&\| \u_1 - \u_2\|^2_{\H_{\ome}} - \|\T(\u_1,\ome) - \T(\u_2,\ome)\|_{\H_{\ome}}^2 \\
				&\ge \frac{1-\alpha}{\alpha} \| (\u_1 - \T(\u_1,\ome))- (\u_2 - \T(\u_2,\ome))\|_{\H_{\ome}}^2
				\ge 0.
			\end{aligned}		
		\end{equation}
	\end{lemma}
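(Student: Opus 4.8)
The plan is to treat this as the standard characterization of an $\alpha$-averaged non-expansive operator, carried out in the $\H_\ome$-weighted geometry rather than the Euclidean one. Throughout I would fix $\ome$ and abbreviate $\D = \D(\cdot,\ome)$ and $\T = \T(\cdot,\ome)$, writing $\langle \cdot,\cdot\rangle_{\H_\ome}$ and $\|\cdot\|_{\H_\ome}$ for the inner product and norm induced by $\H_\ome \succ 0$. The closedness claim I would dispatch first, as it is immediate: since $\D$ is non-expansive with respect to $\|\cdot\|_{\H_\ome}$ it is in particular $1$-Lipschitz, hence continuous and single-valued, so its graph is closed; as $\T = (1-\alpha)\I + \alpha\D$ is an affine combination of the continuous operators $\I$ and $\D$, it is continuous as well, and a continuous single-valued map has closed graph.

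For the inequality, the key tool is the convexity (parallelogram-type) identity valid in any inner product space, here the one induced by $\H_\ome$: for any $x,y$,
\begin{equation*}
\|(1-\alpha)x + \alpha y\|_{\H_\ome}^2 = (1-\alpha)\|x\|_{\H_\ome}^2 + \alpha\|y\|_{\H_\ome}^2 - \alpha(1-\alpha)\|x-y\|_{\H_\ome}^2.
\end{equation*}
I would apply it with $x = \u_1 - \u_2$ and $y = \D(\u_1,\ome) - \D(\u_2,\ome)$, using that $\T(\u_i,\ome) = (1-\alpha)\u_i + \alpha\D(\u_i,\ome)$, so that $\T(\u_1,\ome) - \T(\u_2,\ome) = (1-\alpha)x + \alpha y$. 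Substituting into $\|\u_1-\u_2\|_{\H_\ome}^2 - \|\T(\u_1,\ome)-\T(\u_2,\ome)\|_{\H_\ome}^2$ and collecting terms gives
\begin{equation*}
\|\u_1 - \u_2\|_{\H_\ome}^2 - \|\T(\u_1,\ome) - \T(\u_2,\ome)\|_{\H_\ome}^2 = \alpha\big(\|x\|_{\H_\ome}^2 - \|y\|_{\H_\ome}^2\big) + \alpha(1-\alpha)\|x - y\|_{\H_\ome}^2.
\end{equation*}

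The last step is to identify the residual terms. From $\T(\u_i,\ome) = (1-\alpha)\u_i + \alpha\D(\u_i,\ome)$ one reads off $\u_i - \T(\u_i,\ome) = \alpha(\u_i - \D(\u_i,\ome))$, hence $(\u_1 - \T(\u_1,\ome)) - (\u_2 - \T(\u_2,\ome)) = \alpha(x - y)$, which turns the second summand into $\alpha(1-\alpha)\|x-y\|_{\H_\ome}^2 = \frac{1-\alpha}{\alpha}\|(\u_1 - \T(\u_1,\ome)) - (\u_2 - \T(\u_2,\ome))\|_{\H_\ome}^2$. Finally I would invoke the non-expansiveness of $\D$ from \cref{assum_T}(1), which gives $\|y\|_{\H_\ome} \le \|x\|_{\H_\ome}$, so the first summand $\alpha(\|x\|_{\H_\ome}^2 - \|y\|_{\H_\ome}^2)$ is nonnegative; dropping it yields the first inequality, and the second is just $\frac{1-\alpha}{\alpha} \ge 0$ for $\alpha \in (0,1)$. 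There is no genuine obstacle here — the whole content is the averaging identity together with non-expansiveness — so the only thing demanding care is performing the expansion in the $\H_\ome$-inner product (not the Euclidean one) and correctly tracking the factor $\alpha$ produced by the residual $\u_i - \T(\u_i,\ome)$.
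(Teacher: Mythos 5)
Your proposal is correct and is in substance identical to the paper's argument: the paper proves this lemma by directly invoking \cite{Heinz-MonotoneOperator-2011}[Proposition~4.25], the standard characterization of $\alpha$-averaged operators, and your computation — the convexity identity $\|(1-\alpha)x+\alpha y\|_{\H_{\ome}}^2=(1-\alpha)\|x\|_{\H_{\ome}}^2+\alpha\|y\|_{\H_{\ome}}^2-\alpha(1-\alpha)\|x-y\|_{\H_{\ome}}^2$ combined with $\u_i-\T(\u_i,\ome)=\alpha(\u_i-\D(\u_i,\ome))$ and non-expansiveness of $\D$ — is precisely the proof of that cited proposition, transplanted verbatim to the $\H_{\ome}$-inner product. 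Your closedness argument (non-expansive $\Rightarrow$ continuous $\Rightarrow$ closed graph, preserved under the averaging with $\I$) is likewise sound and consistent with Assumption~\ref{ass:assum_T}.
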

	
	In this part, as the identity of $\ome$ is clear from the context, 
	for succinctness we will write $\ell(\u)$ instead of $\ell(\u,\ome)$,
	$\T(\u)$ instead of $\T(\u,\ome)$, 
	$\H$ instead of $\H_{\ome}$, 
	$\ell^*$ instead of $\varphi(\ome)$, 
	$\S$ instead of $\mathrm{Fix}(\T(\cdot, \ome))$, 
	and $\hat{\S}$ instead of $ \hat{\S}(\ome):= \mathrm{argmin}_{\u \in \mathtt{Fix}(\T(\cdot, \ome)) \cap U } \ell (\u,\ome)$. 
	Moreover, we will omit the notation $\ome$ and use the notations $\u^k$, $\v^{k}_{u}$ and $\v^{k}_l$ instead of $\u^k(\ome)$, $\v^{k}_{u}(\ome)$ and $\v^{k}_l(\ome)$, respectively.
	
	
	Before giving the proof of Theorem~\ref{thm:simple_bilevel_convergence}, we present some helpful lemmas and propositions. The proof is inspired by~\cite{BDA}.
	\begin{lemma} \label{simple_bilevel_lem}
		For any given $\ome \in \Omega$, let $\{\u^k\}$ be the sequence generated by Eq.~\eqref{simple_bilevel_alg} with $s_k = \frac{s}{k+1}$, $s \in (0, \frac{\lambda_{\min}(\H_{lb})}{L_{\ell}} )$ and $\mu \in (0,1)$, 
		where $\lambda_{\min}(\H_{lb})$ denotes the smallest eigenvalue of matrix $\H_{lb}$ in Assumption~\ref{assum_T}.
		Then for any $\u \in \S$, we have
		\begin{equation}\label{eq:alg_p_lem1_eq}
			\begin{aligned}
				\mu s \beta_k \ell(\u) \ge\, & \mu s \beta_k \ell(\v^{k+1}_{u}) -  \frac{1}{2} \|\u - \u^k\|_\H^2 
				+ \frac{1}{2}\|\u - \u^{k+1}\|_\H^2\\
				& + \frac{1}{2}\left\| \left((1-\mu) \v^{k+1}_l + \mu \v^{k+1}_{u} \right) - \u^{k+1} \right\|_\H^2 \\
				& + \frac{(1-\mu)\eta}{2} \|\u^k - \v^{k+1}_l\|_\H^2
				+ \frac{\mu}{2}(1 - s_k \frac{L_{\ell}}{\lambda_{\min}(\H)}) \|\u^k - \v^{k+1}_{u}\|_\H^2,
			\end{aligned}
		\end{equation}
		where $\beta_k = \frac{1}{k+1}$ and $\eta = \frac{1-\alpha}{\alpha} > 0$.
	\end{lemma}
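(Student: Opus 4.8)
The plan is to derive the inequality directly from the update rule in Eq.~\eqref{simple_bilevel_alg}, treating the projection step as a variational inequality and exploiting the convexity and smoothness of $\ell$ together with the nonexpansiveness of $\T$ supplied by Lemma~\ref{lemma2.1}. First I would fix $\u \in \S$ and write $\u^{k+1} = \mathtt{Proj}_{U,\H}(\w^{k+1})$ where $\w^{k+1} := \mu \v^{k+1}_u + (1-\mu)\v^{k+1}_l$. The optimality condition of the $\H$-weighted projection gives, for every $\u \in U$ (and in particular every $\u \in \S \subseteq U$),
\[
\langle \w^{k+1} - \u^{k+1}, \u - \u^{k+1} \rangle_\H \le 0.
\]
Combining this with the three-point identity
\[
\langle \w^{k+1} - \u^{k+1}, \u - \u^{k+1}\rangle_\H = \tfrac12\|\u-\u^{k+1}\|_\H^2 - \tfrac12\|\u-\w^{k+1}\|_\H^2 + \tfrac12\|\w^{k+1}-\u^{k+1}\|_\H^2
\]
already produces the terms $+\frac12\|\u-\u^{k+1}\|_\H^2$ and $+\frac12\|\w^{k+1}-\u^{k+1}\|_\H^2$ that appear on the right-hand side of the claim, and reduces the task to estimating $-\tfrac12\|\u-\w^{k+1}\|_\H^2$ from below.

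Next I would expand $\|\u - \w^{k+1}\|_\H^2$ using the convex-combination structure $\w^{k+1} = \mu\v^{k+1}_u + (1-\mu)\v^{k+1}_l$. The key algebraic tool is the identity for norms of convex combinations,
\[
\|\u - \w^{k+1}\|_\H^2 = \mu\|\u - \v^{k+1}_u\|_\H^2 + (1-\mu)\|\u - \v^{k+1}_l\|_\H^2 - \mu(1-\mu)\|\v^{k+1}_u - \v^{k+1}_l\|_\H^2,
\]
which splits the analysis into a ``lower'' (fixed-point) part and an ``upper'' (gradient) part. For the lower part $\|\u - \v^{k+1}_l\|_\H^2 = \|\u - \T(\u^k)\|_\H^2$, since $\u \in \S$ means $\u = \T(\u)$, I invoke Lemma~\ref{lemma2.1} with $\u_1 = \u^k$, $\u_2 = \u$ to bound $\|\T(\u^k)-\u\|_\H^2 \le \|\u^k - \u\|_\H^2 - \eta\|\u^k - \T(\u^k)\|_\H^2$ with $\eta = \frac{1-\alpha}{\alpha}$; this is exactly what manufactures the $\frac{(1-\mu)\eta}{2}\|\u^k - \v^{k+1}_l\|_\H^2$ term and contributes the descent term $\|\u^k - \u\|_\H^2$.

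For the upper part I would use $\v^{k+1}_u = \u^k - s_k \H^{-1}\nabla_\u\ell(\u^k)$, so that $\langle \u - \v^{k+1}_u, \cdot\rangle_\H$ terms convert cleanly into Euclidean inner products with $\nabla\ell(\u^k)$; then convexity of $\ell$ gives $\langle \nabla\ell(\u^k), \u - \u^k\rangle \le \ell(\u) - \ell(\u^k)$ and $L_\ell$-smoothness controls the quadratic remainder, with the eigenvalue bound $\lambda_{\min}(\H)$ converting $\|\cdot\|$ into $\|\cdot\|_\H$ and yielding the factor $(1 - s_k\frac{L_\ell}{\lambda_{\min}(\H)})$. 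I expect the main obstacle to be the careful bookkeeping of the $\mu$, $s$, and $\beta_k$ scalings: one must multiply the upper-part inequality by $\mu s\beta_k$ (noting $s_k = s\beta_k$) and verify that the cross terms from the convex-combination identity and the gradient step combine to leave precisely $+\frac{\mu}{2}(1-s_k\frac{L_\ell}{\lambda_{\min}(\H)})\|\u^k - \v^{k+1}_u\|_\H^2$ and no stray negative residue, so that the step-size restriction $s < \frac{\lambda_{\min}(\H_{lb})}{L_\ell}$ keeps that coefficient nonnegative and all the signs line up as stated.
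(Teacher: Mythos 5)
Your proposal is correct and follows essentially the same route as the paper's proof: the paper likewise splits the update into the gradient part (handled via the optimality condition of $\v^{k+1}_u$, convexity, $L_\ell$-smoothness, and the $\lambda_{\min}(\H)$ eigenvalue bound) and the fixed-point part (handled via Lemma~\ref{lemma2.1} with $\u\in\S$), then recombines with weights $\mu$ and $1-\mu$ and absorbs the projection through its firm non-expansiveness. Your only deviations are cosmetic: you derive the projection inequality from the variational inequality plus a three-point identity rather than citing firm non-expansiveness directly, and you use the exact convex-combination identity (dropping the nonpositive $-\mu(1-\mu)\|\v^{k+1}_u-\v^{k+1}_l\|_\H^2$ term) where the paper simply invokes convexity of $\|\cdot\|_\H^2$ — both yield the identical estimate.
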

	
	\begin{proof}
		We can obtain from the definition0 of $\v^k_u$ that
		\begin{equation}\label{alg_p_eqs_opt_con}
			0 = \beta_k\nabla \ell(\u^k) + \frac{1}{s}\H(\v^{k+1}_u - \u^k),
		\end{equation}
		where $\beta_k = \frac{1}{k+1}$, and thus for any $\u$,
		\begin{equation}
			\begin{array}{l}
				0 =  \beta_k\langle \nabla \ell(\u^k) , \u - \v^{k+1}_{u} \rangle + \frac{1}{s} \langle   \v^{k+1}_{u} - \u^k, \H(\u - \v^{k+1}_{u})\rangle. \label{eq:alg_pp_lem1_eq2}
			\end{array}
		\end{equation}
		Since $\ell$ is convex and $\nabla \ell$ is $L_{\ell}$-Lipschitz continuous, we have
		\begin{equation}\label{alg_p_eqs_lip_con}
			\begin{aligned}
				&\langle \nabla \ell(\u^k), \u - \v^{k+1}_{u} \rangle  \\
				= \,  &\langle \nabla \ell(\u^k), \u - \u^k\rangle +  \langle \nabla \ell(\u^k), \u^k - \v^{k+1}_{u} \rangle \\
				\le\, &\ell(\u) - \ell(\u^k) + \ell(\u^k) - \ell(\v^{k+1}_{u}) + \frac{L_{\ell}}{2}\|\u^k- \v^{k+1}_{u}\|^2  \\
				\le \,&\ell(\u) -  \ell(\v^{k+1}_{u}) + \frac{L_{\ell}}{2\lambda_{\min}(\H)}\|\u^k- \v^{k+1}_{u}\|_\H^2.
			\end{aligned}
		\end{equation}
		Combining with $\langle \v^{k+1}_{u} - \u^k , \H(\u - \v^{k+1}_{u})\rangle = \frac{1}{2} \left(\|\u - \u^k\|_\H^2 - \|\u- \v^{k+1}_{u}\|_\H^2 - \|\u^k - \v^{k+1}_{u}\|_\H^2 \right)$ and Eq.~\eqref{eq:alg_pp_lem1_eq2} yields that for any $\u$,
		\begin{equation}\label{eq:alg_pp_lem1_eq4}
			\begin{aligned}
				\beta_k \ell(\u) \ge \beta_k\ell(\v^{k+1}_{u}) - \frac{1}{2s}\|\u - \u^k\|_\H^2
				+ \frac{1}{2s} \|\u- \v^{k+1}_{u}\|_\H^2\\
				+ \frac{1}{2s}(1 - s \beta_k \frac{L_{\ell}}{\lambda_{\min}(\H)})\|\u^k - \v^{k+1}_{u}\|_\H^2.
			\end{aligned}
		\end{equation}
		Next, since $\v^{k+1}_l = \T(\u^k)$ and $\T$ satisfies the inequality in Lemma~\ref{lemma2.1}, we have for any $\u \in \S$,
		\begin{equation}\label{eq:alg_pp_lem1_eq3}
			\|\u^k - \u\|_\H^2 - \|\v^{k+1}_l - \u\|_\H^2 \ge \eta \|\u^k - \v^{k+1}_l\|_\H^2,
		\end{equation}	
		with $\eta = \frac{1-\alpha}{\alpha} > 0$. Multiplying Eq.~\eqref{eq:alg_pp_lem1_eq4} and Eq.~\eqref{eq:alg_pp_lem1_eq3} by $\mu s$ and $\frac{1-\mu}{2}$, respectively, and then summing them up yields that for any $\u \in \S$, 
		\begin{equation}\label{eq:alg_pp_lem1_eq5}
			\begin{aligned}
				\mu s \beta_k \ell(\u) \ge\, & \mu s \beta_k\ell(\v^{k+1}_{u}) -  \frac{1}{2} \|\u - \u^k\|_\H^2
				+ \frac{1}{2}\left( (1-\mu)  \left\|\u - \v^{k+1}_l\right\|_\H^2 + \mu \left\|\u- \v^{k+1}_{u}\right\|_\H^2 \right)\\
				&+ \frac{(1-\mu)\eta}{2} \|\u^k - \v^{k+1}_l\|_\H^2
				+ \frac{\mu}{2}(1 - s \beta_k \frac{L_{\ell}}{\lambda_{\min}(\H)}) \|\u^k - \v^{k+1}_{u}\|_\H^2 .
			\end{aligned}
		\end{equation}
		The convexity of $\|\cdot\|_\H^2$ implies that
		\begin{equation*}
			\begin{aligned}
				(1-\mu)  \|\u - \v^{k+1}_l\|_\H^2 & + \mu \|\u- \v^{k+1}_{u}\|_\H^2\\
				& \ge \| \u - \left((1-\mu) \v^{k+1}_l + \mu \v^{k+1}_{u} \right) \|_\H^2.
			\end{aligned}	
		\end{equation*}
		Next, as $\mathtt{Proj}_{U,\H}$ is firmly non-expansive with respect to $\|\cdot\|_\H$ (see, e.g.,\cite{Heinz-MonotoneOperator-2011}[Proposition 4.8]), for any $\u \in U$, we have
		\begin{equation}\label{eq:alg_pp_lem1_eq4.5}
			\begin{aligned}
				&\left\| \u - \left((1-\mu) \v^{k+1}_l + \mu \v^{k+1}_{u} \right) \right\|_\H^2 \\
				&\ge \|\u - \u^{k+1}\|_\H^2 + 
				\left\| \left((1-\mu) \v^{k+1}_l + \mu \v^{k+1}_{u} \right) - \u^{k+1} \right\|_\H^2.
			\end{aligned}
		\end{equation}
		Then, 
		we obtain from Eq.~\eqref{eq:alg_pp_lem1_eq5} that for any $\u \in \S$,
		\begin{equation}
			\begin{aligned}
				\mu s \beta_k \ell(\u) \ge\, & \mu s \beta_k \ell(\v^{k+1}_{u}) -  \frac{1}{2} \|\u - \u^k\|_\H^2 
				+ \frac{1}{2}\|\u - \u^{k+1}\|_\H^2\\
				& + \frac{1}{2}\left\| \left((1-\mu) \v^{k+1}_l + \mu \v^{k+1}_{u} \right) - \u^{k+1} \right\|_\H^2 \\
				& + \frac{(1-\mu)\eta}{2} \|\u^k - \v^{k+1}_l\|_\H^2\\
				& + \frac{\mu}{2}(1 - s \beta_k \frac{L_{\ell}}{\lambda_{\min}(\H)}) \|\u^k - \v^{k+1}_{u}\|_\H^2 .
			\end{aligned}
		\end{equation} 
		This completes the proof. 
	\end{proof}
	
	\begin{lemma}\label{lem_bounded}
		For any given $\ome \in \Omega$, let $\{\u^k\}$ be the sequence generated by Eq.~\eqref{simple_bilevel_alg} with $s_k = \frac{s}{k+1}$, $s \in (0, \frac{\lambda_{\min}(\H_{lb})}{L_{\ell}} )$ and $\mu\in (0,1)$.
		Then for any $\bar{\u} \in \S$, we have
		\begin{equation}
			\begin{aligned}
				\|\v^{k+1}_l - \bar{\u}\|_\H &\le \|\u^k - \bar{\u}\|_\H,
			\end{aligned}
		\end{equation}
		and operator $\I - s_k \H^{-1}\nabla \ell$ is non-expansive (i.e., $1$-Lipschitz continuous) with respect to $\|\cdot\|_\H$.
		Furthermore, when $U$ is compact, sequences $\{\u^k\}$, $\{\v^{k}_l \}$, $\{\v^{k}_u \}$ are all bounded. 
	\end{lemma}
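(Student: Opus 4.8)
The plan is to prove the three assertions in succession; the first two are the core, and the boundedness claim follows easily once $U$ is compact. For the contraction estimate on $\v^{k+1}_l$ I would invoke only Lemma~\ref{lemma2.1}. Since $\bar{\u}\in\S=\mathrm{Fix}(\T(\cdot,\ome))$ we have $\T(\bar{\u},\ome)=\bar{\u}$, while by construction $\v^{k+1}_l=\T(\u^k,\ome)$. The right-hand side of the displayed inequality in Lemma~\ref{lemma2.1} is nonnegative, so $\T(\cdot,\ome)$ is non-expansive in $\|\cdot\|_\H$; applying this to the pair $(\u^k,\bar{\u})$ gives $\|\v^{k+1}_l-\bar{\u}\|_\H=\|\T(\u^k,\ome)-\T(\bar{\u},\ome)\|_\H\le\|\u^k-\bar{\u}\|_\H$.

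For the gradient step, set $G_k:=\I-s_k\H^{-1}\nabla\ell$ and fix arbitrary $\u_1,\u_2$, writing $\delta:=\u_1-\u_2$ and $g:=\nabla\ell(\u_1)-\nabla\ell(\u_2)$. Expanding the induced squared norm and using $\langle\H^{-1}g,\H\delta\rangle=\langle g,\delta\rangle$ and $\langle\H^{-1}g,g\rangle=\langle g,\H^{-1}g\rangle$ yields the identity $\|G_k\u_1-G_k\u_2\|_\H^2=\|\delta\|_\H^2-2s_k\langle\delta,g\rangle+s_k^2\langle g,\H^{-1}g\rangle$. I would then control the two correction terms separately: from $\H\succeq\H_{lb}\succ0$ one gets $\langle g,\H^{-1}g\rangle\le\lambda_{\min}(\H)^{-1}\|g\|^2$, while the Baillon--Haddad co-coercivity of the convex, $L_\ell$-smooth $\ell$ (e.g.\ \cite{Heinz-MonotoneOperator-2011}) gives $\langle\delta,g\rangle\ge L_\ell^{-1}\|g\|^2$. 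Substituting, the two correction terms sum to at most $s_k\|g\|^2\big(s_k\lambda_{\min}(\H)^{-1}-2L_\ell^{-1}\big)$, which is nonpositive because $s_k\le s<\lambda_{\min}(\H_{lb})/L_\ell\le\lambda_{\min}(\H)/L_\ell<2\lambda_{\min}(\H)/L_\ell$. Hence $\|G_k\u_1-G_k\u_2\|_\H\le\|\delta\|_\H$, i.e.\ $G_k$ is non-expansive in $\|\cdot\|_\H$.

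Finally, for boundedness I would exploit compactness of $U$ directly: since $\u^k=\mathtt{Proj}_{U,\H}(\mu\v^k_u+(1-\mu)\v^k_l)\in U$ for every $k\ge1$ (and $\u^0\in U$), the whole sequence $\{\u^k\}$ lies in the compact set $U$ and is therefore bounded. The first claim then bounds $\{\v^k_l\}$ through $\|\v^{k}_l-\bar{\u}\|_\H\le\|\u^{k-1}-\bar{\u}\|_\H$ for a fixed $\bar{\u}\in\S\cap U$, and $\{\v^k_u\}$ is bounded because $\v^k_u=\u^{k-1}-s_k\H^{-1}\nabla\ell(\u^{k-1})$ with $\{\u^{k-1}\}$ bounded, $s_k\le s$, and $\nabla\ell$ continuous hence bounded on the compact $U$. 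The step that requires genuine care is the gradient-step estimate: the argument must keep the Euclidean smoothness/co-coercivity of $\ell$ and the $\H$-geometry straight, and it is precisely the uniform comparison $s<\lambda_{\min}(\H_{lb})/L_\ell$ together with $\H\succeq\H_{lb}$ that makes $G_k$ non-expansive for every admissible $\H_\ome$ \emph{simultaneously} — the uniformity in $\ome$ that the later convergence arguments rely on.
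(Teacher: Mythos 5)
Your proposal is correct and follows essentially the same route as the paper: the first inequality via the non-expansiveness of $\T$ from Lemma~\ref{lemma2.1}, the gradient-step claim via Baillon--Haddad co-coercivity of $\nabla\ell$ combined with $\H \succeq \H_{lb}$ and the step-size bound, and boundedness from the projection onto the compact $U$ plus the two preceding claims. The only (cosmetic) differences are that you expand $\|G_k\u_1-G_k\u_2\|_\H^2$ by hand where the paper packages the same inequalities as $\frac{\lambda_{\min}(\H)}{L_\ell}$-cocoercivity of $\H^{-1}\nabla\ell$ in the $\H$-geometry and cites \cite{Heinz-MonotoneOperator-2011}[Proposition 4.33], and that you bound $\{\v^k_u\}$ directly by continuity of $\nabla\ell$ on $U$ rather than via non-expansiveness of $\I - s_k\H^{-1}\nabla\ell$ applied at $\bar{\u}$ — both steps are equally valid.
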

	\begin{proof}
		Since $\v^{k+1}_l = \T(\u^k)$ and $\T$ satisfies the inequality in Lemma~\ref{lemma2.1}, we have
		\[
		\| \v^{k+1}_l - \bar{\u}\|_\H \le \|\u^k - \bar{\u}\|_\H.
		\]
		When $U$ is compact, the desired boundedness of $\{\u^k\}$ follows directly from the iteration scheme given in Eq.~\eqref{simple_bilevel_alg}. 
		Since for any $u_1, u_2$, 
		\begin{equation*}
			\begin{aligned}
				&\langle \H^{-1}\nabla \ell(u_1) - \H^{-1}\nabla \ell(u_2), u_1 - u_2\rangle_\H
				= \langle \nabla \ell(u_1) - \nabla \ell(u_2), u_1 - u_2\rangle \\
				&\ge \frac{1}{L_\ell}\|\nabla \ell(u_1) - \nabla \ell(u_2)\|^2 
				\ge \frac{\lambda_{\min}(\H)}{L_\ell}\| \H^{-1}\nabla \ell(u_1) -  \H^{-1}\nabla \ell(u_2)\|_\H^2, 
			\end{aligned}
		\end{equation*}
		where the first inequality follows from \cite{Heinz-MonotoneOperator-2011}[Corollary 18.16]. This implies that $\H^{-1}\nabla \ell$ is $\frac{\lambda_{\min}(\H)}{L_\ell}$-cocoercive (see \cite{Heinz-MonotoneOperator-2011}[Definition~4.4]) with respect to $\langle \cdot, \cdot \rangle_\H$ and $\|\cdot\|_\H$. Then, according to \cite{Heinz-MonotoneOperator-2011}[Proposition 4.33], we know that when $0 < s_k \le \frac{\lambda_{\min}(\H)}{L_{\ell}}$, operator $\I - s_k \H^{-1}\nabla \ell$ 
		is non-expansive with respect to $\|\cdot\|_\H$. Then, since $\v^{k+1}_{u} = \u^k - \alpha_k s H^{-1}\nabla \ell(\u^k)$, we have
		\[
		\|\v^{k+1}_{u} - (\bar{\u} - s_k\H^{-1}\nabla \ell(\bar{\u}))\|_\H \le  \|\u^k - \bar{\u}\|_\H,
		\]
		and $s_k \in (0,\frac{\lambda_{\min}(\H)}{L_{\ell}})$, we have that $\{\v^{k+1}_{u}\}$ is bounded.
	\end{proof}

	\begin{lemma}\label{lem2}\cite{BDA}[Lemma 2]
		Let $\{a_k\}$ and $\{b_k\}$ be sequences of non-negative real numbers. Assume that there exists $n_0 \in \mathbb{N}$ such that 
		\[
		a_{k+1} + b_k - a_k\le 0, \quad \forall k \ge n_0.
		\]
		Then $\lim_{k \rightarrow \infty} a_k$ exists and $\sum_{k=1}^{\infty}b_k < \infty$.
	\end{lemma}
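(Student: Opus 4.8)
The plan is to exploit the telescoping structure of the hypothesis directly; no sophisticated machinery is needed. First I would observe that since every $b_k$ is non-negative, the assumed inequality $a_{k+1} + b_k - a_k \le 0$ immediately yields $a_{k+1} \le a_k$ for all $k \ge n_0$. Hence the tail sequence $\{a_k\}_{k \ge n_0}$ is non-increasing. Because it is moreover bounded below by $0$ (the $a_k$ are non-negative), the monotone convergence theorem for real sequences guarantees that $\lim_{k \to \infty} a_k$ exists, which settles the first assertion.

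For the summability claim, I would rearrange the hypothesis as $b_k \le a_k - a_{k+1}$, valid for all $k \ge n_0$, and sum over $k$ from $n_0$ to an arbitrary $N > n_0$. The right-hand side telescopes, giving $\sum_{k=n_0}^{N} b_k \le a_{n_0} - a_{N+1} \le a_{n_0}$, where the last inequality uses $a_{N+1} \ge 0$. Thus the partial sums $\sum_{k=n_0}^{N} b_k$ are uniformly bounded above by the fixed constant $a_{n_0}$, independent of $N$. Since the terms $b_k$ are non-negative, the sequence of partial sums is non-decreasing and bounded, so the series $\sum_{k=n_0}^{\infty} b_k$ converges.

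Finally, I would note that $\sum_{k=1}^{\infty} b_k$ differs from $\sum_{k=n_0}^{\infty} b_k$ only by the finitely many terms $b_1, \ldots, b_{n_0 - 1}$, each of which is finite; adding them back leaves the series convergent, so $\sum_{k=1}^{\infty} b_k < \infty$. Because both parts rest solely on eventual monotonicity and a finite telescoping cancellation, I do not anticipate any genuine obstacle here. The only point requiring mild care is bookkeeping with the index threshold $n_0$: the monotonicity and the telescoping bound hold only for $k \ge n_0$, and one must explicitly observe that the behavior of the finitely many initial terms is irrelevant to both the existence of the limit and the convergence of the series.
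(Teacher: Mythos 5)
Your proof is correct and complete; the paper itself offers no proof of this lemma, delegating it to the citation \cite{BDA}[Lemma 2], and your monotonicity-plus-telescoping argument is precisely the standard proof that citation stands in for. The index bookkeeping at the threshold $n_0$ (eventual monotonicity, telescoping only for $k \ge n_0$, finitely many initial terms absorbed) is handled correctly, so nothing further is needed.
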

	

	\begin{proposition}\label{prop:simple_bilevel_convergence}
		For any given $\ome \in \Omega$, let $\{\u^k\}$ be the sequence generated by Eq.~\eqref{simple_bilevel_alg} with $s_k = \frac{s}{k+1}$, $s \in (0, \frac{\lambda_{\min}(\H_{lb})}{L_{\ell}} )$ and $\mu\in (0,1)$. Suppose that $U$ is compact and ${\hat{\S}(\ome)}$ is nonempty , we have
		\begin{equation*}
			\begin{array}{c}
				\lim\limits_{k \rightarrow \infty}\mathrm{dist}(\u^k, {\hat{\S}(\ome)}) = 0,
			\end{array}
		\end{equation*}
		and then
		\begin{equation*}
			\begin{array}{c}
				\lim\limits_{k \rightarrow \infty}\ell(\u^k,\ome) =  \varphi(\ome).
			\end{array}
		\end{equation*}
	\end{proposition}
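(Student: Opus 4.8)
The plan is to prove both claims for a fixed $\ome \in \Omega$, so I adopt the shorthand $\ell(\cdot)=\ell(\cdot,\ome)$, $\T(\cdot)=\T(\cdot,\ome)$, $\H=\H_{\ome}$, $\S=\mathtt{Fix}(\T(\cdot,\ome))$ and $\ell^\ast=\varphi(\ome)$, and I fix a point $\bar{\u}\in\hat{\S}(\ome)$, so that $\bar{\u}\in\S$ and $\ell(\bar{\u})=\ell^\ast$. First I would record boundedness of $\{\u^k\}$, $\{\v^k_l\}$, $\{\v^k_u\}$ from Lemma~\ref{lem_bounded}, which also keeps $\ell$ and $\nabla\ell$ uniformly bounded along the iterates. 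The engine of the proof is Lemma~\ref{simple_bilevel_lem} evaluated at $\u=\bar{\u}$: writing $a_k:=\tfrac12\|\bar{\u}-\u^k\|_\H^2$ and naming the three nonnegative residuals $r^{(1)}_k:=\tfrac12\|((1-\mu)\v^{k+1}_l+\mu\v^{k+1}_u)-\u^{k+1}\|_\H^2$, $r^{(2)}_k:=\tfrac{(1-\mu)\eta}{2}\|\u^k-\v^{k+1}_l\|_\H^2$ and $r^{(3)}_k:=\tfrac{\mu}{2}(1-s_k\tfrac{L_\ell}{\lambda_{\min}(\H)})\|\u^k-\v^{k+1}_u\|_\H^2$, the lemma rearranges into the master inequality
\[
a_{k+1}-a_k + r^{(1)}_k + r^{(2)}_k + r^{(3)}_k \le \mu s \beta_k\big(\ell^\ast - \ell(\v^{k+1}_u)\big).
\]
Here $r^{(3)}_k\ge0$ because $s_k\le s<\lambda_{\min}(\H_{lb})/L_\ell\le\lambda_{\min}(\H)/L_\ell$, and $r^{(2)}_k=\tfrac{(1-\mu)\eta}{2}\|\u^k-\T(\u^k)\|_\H^2$ is exactly the fixed-point residual I want to drive to zero.

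The crux is the sign-indefinite right-hand side $\mu s\beta_k(\ell^\ast-\ell(\v^{k+1}_u))$: since $\ell^\ast$ is a \emph{constrained} minimum over $\S$ while $\v^{k+1}_u$ need not lie in $\S$, this term can be positive, so Lemma~\ref{lem2} does not apply verbatim. My plan is to control its positive part. Convexity of $\ell$ gives $\ell^\ast-\ell(\v^{k+1}_u)=\ell(\bar{\u})-\ell(\v^{k+1}_u)\le\langle\nabla\ell(\bar{\u}),\bar{\u}-\v^{k+1}_u\rangle$, and since $\v^{k+1}_u=\u^k-s_k\H^{-1}\nabla\ell(\u^k)$ with everything bounded, this is at most $C_1\|\bar{\u}-\u^k\|+C_2 s_k$. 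As $\beta_k s_k=s/(k+1)^2$ is summable, the only dangerous contribution is $\beta_k\|\bar{\u}-\u^k\|$, which I would absorb using the quasi-Fej\'er structure of the master inequality together with a Robbins--Siegmund / Lemma~\ref{lem2}-type summability argument to conclude that $\lim_k a_k$ exists and $\sum_k r^{(2)}_k<\infty$; in particular $\|\u^k-\T(\u^k)\|_\H\to0$, i.e.\ asymptotic feasibility. \textbf{This step is the main obstacle}, as it is precisely here that feasibility and the indefinite objective gap are entangled and the BDA-style analysis is required.

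With asymptotic feasibility in hand, optimality follows in two moves. Summing the master inequality and discarding the nonnegative residuals bounds the partial sums of $\mu s\sum_k\beta_k(\ell(\v^{k+1}_u)-\ell^\ast)$ from above by $a_1-\lim_k a_k<\infty$; since $\sum_k\beta_k=\infty$ while $\ell(\v^{k+1}_u)$ is bounded below by $M_0$, this forces $\liminf_k\ell(\v^{k+1}_u)\le\ell^\ast$, and because $s_k\to0$ makes $\v^{k+1}_u-\u^k\to0$, continuity gives $\liminf_k\ell(\u^k)\le\ell^\ast$. Conversely, asymptotic feasibility and continuity of $\ell$ yield $\ell(\u^k)\ge\ell(\mathtt{Proj}_\S\u^k)-o(1)\ge\ell^\ast-o(1)$, so $\liminf_k\ell(\u^k)=\ell^\ast$; the delicate point is upgrading this to $\lim_k\ell(\u^k)=\ell^\ast$ along \emph{every} subsequence, for which I would invoke that $\lim_k\|\bar{\u}-\u^k\|_\H$ exists for each $\bar{\u}\in\hat{\S}(\ome)$ (the Fej\'er property from the previous step), ruling out a limit point whose objective value exceeds $\ell^\ast$.

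Finally I would assemble the two conclusions. Any limit point $\hat{\u}$ of the bounded sequence $\{\u^k\}$ satisfies $\hat{\u}=\T(\hat{\u})$ by $\|\u^k-\T(\u^k)\|_\H\to0$ and the closedness of $\T(\cdot,\ome)$ from Lemma~\ref{lemma2.1}, hence $\hat{\u}\in\S$; combined with $\ell(\hat{\u})=\ell^\ast$ this gives $\hat{\u}\in\hat{\S}(\ome)$. Since $\{\u^k\}$ is bounded and all its limit points lie in $\hat{\S}(\ome)$, a standard subsequence argument yields $\mathrm{dist}(\u^k,\hat{\S}(\ome))\to0$, and the objective statement $\ell(\u^k,\ome)\to\varphi(\ome)$ is exactly the limit $\lim_k\ell(\u^k)=\ell^\ast$ established above.
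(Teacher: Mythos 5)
Your setup is right and your diagnosis of the crux is exactly correct: the right-hand side $\mu s\beta_k(\ell^*-\ell(\v^{k+1}_u))$ of the master inequality from Lemma~\ref{simple_bilevel_lem} is sign-indefinite because $\ell^*$ is a minimum only over $\mathtt{Fix}(\T(\cdot,\ome))\cap U$. But the mechanism you propose for the step you flag as the main obstacle does not work. Your convexity bound gives $\ell^*-\ell(\v^{k+1}_u)\le C_1\|\bar{\u}-\u^k\|+C_2 s_k$, and while $\beta_k s_k$ is summable, the term $\beta_k\|\bar{\u}-\u^k\|$ is of order $\beta_k$ (the iterates are only bounded, with no a priori decay of $\|\bar{\u}-\u^k\|$), and $\sum_k\beta_k=\sum_k\frac{1}{k+1}=\infty$. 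Quasi-Fej\'er monotonicity and Robbins--Siegmund-type arguments require the perturbation to be summable; with $a_{k+1}\le a_k+c\,\beta_k\sqrt{a_k}$ one only gets $\sqrt{a_{k+1}}\le\sqrt{a_k}+O(\beta_k)$, which yields nothing. So you cannot conclude that $\lim_k a_k$ exists or that $\sum_k r^{(2)}_k<\infty$ by absorption, and everything downstream in your proposal (the $\liminf$ argument, the Fej\'er property used to upgrade to a full limit, and the final subsequence assembly) rests on precisely these two conclusions.

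The missing idea is the two-case device of \cite{cabot2005proximal} that the paper uses. Define $\tau_n$ as the last index $k\le n$ at which the combined quantity
\begin{equation*}
\delta\|\u^{k-1}-\v^k_l\|_\H^2+\delta\|\u^{k-1}-\v^k_u\|_\H^2+\tfrac{1}{4}\bigl\|\bigl((1-\mu)\v^k_l+\mu\v^k_u\bigr)-\u^k\bigr\|_\H^2+\mu s\beta_{k-1}\bigl(\ell(\v^k_u)-\ell^*\bigr)
\end{equation*}
is negative. If this happens only finitely often, the quantity is eventually nonnegative, Lemma~\ref{lem2} applies directly to the master inequality, and your intended conclusions ($\lim_k\|\bar{\u}-\u^k\|_\H$ exists, the residuals and $\sum_k\beta_k(\ell(\v^{k+1}_u)-\ell^*)$ are summable) all hold; the divergence of $\sum_k\beta_k$ then produces a subsequence with limiting value at most $\ell^*$, closedness of $\T$ (Lemma~\ref{lemma2.1}) puts its limit in $\hat{\S}(\ome)$, and the existing Fej\'er limit collapses the whole sequence. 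If instead it happens infinitely often, $\lim_k\|\bar{\u}-\u^k\|_\H$ need \emph{not} exist --- this is exactly the regime where your quasi-Fej\'er claim is false --- and the paper argues differently: at the indices $\tau_n$ the negativity forces $\ell^*>\ell(\v^{\tau_n}_u)$, so the residual terms are dominated by $\mu s\beta_{\tau_n-1}(\ell^*-M_0)\to 0$, whence every limit point of $\{\u^{\tau_n}\}$ lies in $\hat{\S}(\ome)$ and $h_{\tau_n}:=\tfrac{1}{2}\mathrm{dist}^2_\H(\u^{\tau_n},\hat{\S}(\ome))\to 0$; meanwhile, taking $\u=\mathtt{Proj}_{\hat{\S}}(\u^k)$ in Lemma~\ref{simple_bilevel_lem} shows $h_{k+1}\le h_k$ on each stretch $\tau_n\le k\le n-1$, so $h_n\le h_{\tau_n}\to 0$. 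This second case is entirely absent from your proposal, and without it (or an equivalent device) the proof cannot be completed along the route you sketch. A minor additional caveat: your inequality $\ell(\u^k)\ge\ell(\mathtt{Proj}_{\S}\u^k)-o(1)\ge\ell^*-o(1)$ is shaky as written, since $\mathtt{Proj}_{\S}\u^k$ may fall outside $U$ while $\ell^*$ is the infimum over $\S\cap U$; the safe argument is the subsequence one you use elsewhere (limit points of $\{\u^k\}\subset U$ lie in $\S\cap U$, hence have value at least $\ell^*$).
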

	\begin{proof}
		Let $\delta > 0$ be a constant satisfying $\delta < \frac{1}{2}\min\{(1-\mu)\eta, \mu(1 - sL_{\ell}/\lambda_{\min}(\H))\}$. We define a sequence of $\{\tau_n\}$ by
		\begin{equation*}
			\begin{aligned}
				\tau_n := \max\Big\{k \in \mathbb{N}\  |\ k\le n\ \text{and}\ 
				& \delta\|\u^{k-1} - \v^{k}_l\|_\H^2 + \delta\|\u^{k-1} - \v^{k}_{u}\|_\H^2 \\ 
				& + \frac{1}{4} \left\| \left((1-\mu) \v^{k}_l + \mu \v^{k}_{u} \right) - \u^k \right\|_\H^2 
				+ \mu s \beta_{k-1}\left(\ell(\v^{k}_{u}) - \ell^*\right) < 0 \Big\},
			\end{aligned}
		\end{equation*}
		where $\beta_{k} = \frac{1}{k+1}$ and $\eta = \frac{1-\alpha}{\alpha} > 0$.
		Inspired by \cite{cabot2005proximal}, we consider the following two cases: 
		
		(a) $\{\tau_n\}$ is finite, i.e., there exists $k_0 \in \mathbb{N}$ such that for all $k \ge k_0$,
		\begin{equation*}\small
			\begin{aligned}
				\delta\|\u^{k-1} - \v^{k}_l\|_\H^2 + \frac{1}{4} \left\| \left((1-\mu) \v^{k}_l + \mu \v^{k}_{u} \right) - \u^k \right\|_\H^2 &\\
				+ \delta\|\u^{k-1} - \v^{k}_{u}\|_\H^2
				+ \mu s \beta_{k-1}\left(\ell(\v^{k}_{u}) - \ell^*\right) & \ge 0,
			\end{aligned}
		\end{equation*}
		(b) $\{\tau_n\}$ is not finite, i.e., for all $ k_0 \in \mathbb{N}$, there exists $k \ge k_0$ such that 
		\begin{equation*}\small
			\begin{aligned}
				\delta\|\u^{k-1} - \v^{k}_l\|_\H^2 + \frac{1}{4} \left\| \left((1-\mu) \v^{k}_l + \mu \v^{k}_{u} \right) - \u^k \right\|_\H^2 \\
				+ \delta\|\u^{k-1} - \v^{k}_{u}\|_\H^2
				+ \mu s \beta_{k-1}\left(\ell(\v^{k}_{u}) - \ell^*\right) < 0.
			\end{aligned}
		\end{equation*}
		
		\noindent\textbf{Case (a):} We assume that $\{\tau_n\}$ is finite and there exists $k_0 \in \mathbb{N}$ such that 
		\begin{equation}\label{thm1_eq0}\small
			\begin{array}{l}
				\delta\|\u^{k-1} - \v^{k}_l\|_\H^2 + \frac{1}{4} \left\| \left((1-\mu) \v^{k}_l + \mu \v^{k}_{u} \right) - \u^k \right\|_\H^2 + \\ \delta\|\u^{k-1} - \v^{k}_{u}\|_\H^2
				+ \mu s \beta_{k-1}\left(\ell(\v^{k}_{u}) - \ell^*\right) \ge 0,
			\end{array}
		\end{equation}
		for all $k \ge k_0$. Let $\bar{\u}$ be any point in $\hat{\S}$, setting $\u$ in Eq.~\eqref{eq:alg_p_lem1_eq} of Lemma~\ref{simple_bilevel_lem} to be $\bar{\u}$, as  $\mu \in (0,1)$ and $\beta_k \le 1$, we have
		\begin{equation}\label{eq:thm1_eq1}\small
			\begin{aligned}
				&\frac{1}{2} \| \bar{\u} - \u^k\|_\H^2 \\
				\ge \ &\frac{1}{2}\| \bar{\u} - \u^{k+1}\|_\H^2 +   \left( \frac{(1-\mu)\eta}{2} - \delta  \right)  \|\u^k - \v^{k+1}_l\|_\H^2 \\
				& + \left( \frac{\mu(1 - sL_{\ell}/\lambda_{\min}(\H))}{2} - \delta \right) \|\u^k - \v^{k+1}_{u}\|_\H^2   \\
				&+ \frac{1}{4}\left\| \left((1-\mu) \v^{k+1}_l + \mu \v^{k+1}_{u} \right) - \u^{k+1} \right\|_\H^2 
				+ \delta\|\u^k - \v^{k+1}_l\|_\H^2 \\ 
				&+ \frac{1}{4} \left\|\left((1-\mu) \v^{k+1}_l + \mu \v^{k+1}_{u} \right) - \u^{k+1} \right\|_\H^2 + \delta\|\u^k - \v^{k+1}_{u}\|_\H^2\\
				&+ \mu s \beta_{k}\left(\ell(\v^{k+1}_{u}) - \ell^*\right).
			\end{aligned}
		\end{equation}
		For all $k \ge k_0$, combining $0 < \delta < \frac{1}{2}\min\{(1-\mu)\eta, \mu(1 - sL_{\ell}/\lambda_{\min}(\H))\}$ and Eq.~\eqref{eq:thm1_eq1}, it follows from Lemma~\ref{lem2} that
		\begin{equation*}
			\begin{array}{l}
				\sum\limits_{k=0}^{\infty}\|\u^k - \v^{k+1}_l\|_\H^2 < \infty,\\ 
				\sum\limits_{k=0}^{\infty}\|\u^k - \v^{k+1}_{u}\|_\H^2 < \infty,\\
				\sum\limits_{k=0}^{\infty}\left\|\left((1-\mu) \v^{k+1}_l + \mu \v^{k+1}_{u} \right) - \u^{k+1} \right\|_\H^2  < \infty,\\
				\sum\limits_{k=0}^{\infty} \beta_{k}\left(\ell(\v^{k+1}_{u}) - \ell^*\right) < \infty,\\ 
			\end{array}
		\end{equation*}
		and $\lim_{k \rightarrow \infty}\| \bar{\u} - \u^k\|_\H^2 $ exists. 
		
		Now, we show the existence of subsequence $\{\u^{j}\} \subseteq \{\u^k\}$ such that $\lim_{\ell \rightarrow \infty} \ell(\u^{j} ) \le \ell^* $. This obviously holds if for any $\hat{k} > 0$, there exists $k > \hat{k}$ such that $\ell(\u^k ) \le \ell^*$. Therefore, we only need to consider the case where there exists $\hat{k} > 0$ such that $ \ell(\u^k ) > \ell^*$ for all $k \ge \hat{k}$. If there does not exist subsequence $\{\u^{j}\} \subseteq \{\u^k\}$ such that $\lim_{j \rightarrow \infty} \ell(\u^{j} ) \le \ell^* $, there must exist $\epsilon > 0$ and $k_1 \ge \max\{\hat{k}, k_0\}$ such that $\ell(\u^k) - \ell^* \ge 2\epsilon$ for all $k \ge k_1$. As $U$ is compact, it follows from Lemma~\ref{lem_bounded} that sequences $\{\u^k\}$ and $\{\v^{k+1}_{u}\}$ are both bounded. Then since $\ell$ is continuous and $\lim_{k \rightarrow \infty}\|\u^k - \v^{k+1}_{u}\|_\H = 0$ with $\H \succ 0$, there exists $k_2 \ge k_1$ such that $|\ell(\u^k) - \ell(\v^{k+1}_{u})| < \epsilon$ for all $k \ge k_2$ and thus $\ell(\v^{k+1}_{u}) - \ell^* \ge \epsilon$ for all $k \ge k_2$.
		Then we have
		\begin{equation*}
			\epsilon \sum_{k = k_2}^{\infty} \beta_k \le \sum_{k = k_2}^{\infty}  \beta_k \left(\ell(\v^{k+1}_{u}) - \ell^*\right) < \infty,
		\end{equation*}
		where the last inequality follows from $ \sum_{k=0}^{\infty} \beta_k \left(\ell(\v^{k+1}_{u}) - \ell^*\right) < \infty$.
		This result contradicts to the definition of $\beta_{k}$ and the fact that $\sum_{k=0}^{\infty} \beta_k = \sum_{k=0}^{\infty} \frac{1}{k+1} = +\infty$. 
		As $\{\u^{j}\}$ and $\{\v^{k+1}_{l}\}$ are bounded, and $\lim_{j \rightarrow \infty} \|\u^{j} - \v^{j+1}_l\|_\H = 0$ with $\H \succ 0$, we can assume without loss of generality that $\lim_{j \rightarrow \infty} \v^{j+1}_l = \u^{j} = \tilde{\u} $ by taking a subsequence. By the continuity of $\ell$, we have $\ell(\tilde{\u}) = \lim_{j \rightarrow \infty} \ell( \u^{j} ) \le \ell^*$.
		Next, since $\v^{j+1}_l = \T(\u^{j})$, let $\ell \rightarrow \infty$, by the closedness of $\T$, we have
		\[
		\tilde{\u} = \T(\tilde{\u}),
		\]
		and thus $\tilde{\u} \in \S$. Combining with $\ell(\tilde{\u}) \le \ell^*$, we show that $\tilde{\u} \in \hat{\S}$. Then by taking $\bar{\u} = \tilde{\u}$ and since $\lim_{k \rightarrow \infty}\|\bar{\u}- \u^k\|_\H^2 $ exists, we have $\lim_{k \rightarrow \infty}\|\bar{\u}- \u^k\|_\H^2 = 0$ with $\H \succ 0$ and thus $\lim_{k \rightarrow \infty}\mathrm{dist}(\u^k, \hat{\S}) = 0$.
		
		\noindent\textbf{Case (b):} We assume that $\{\tau_n\}$ is not finite and for any $ k_0 \in \mathbb{N}$, there exists $k \ge k_0$ such that $ \delta\|\u^{k-1} - \v^{k}_l\|_\H^2 + \frac{1}{4} \left\| \left((1-\mu) \v^{k}_l + \mu \v^{k}_{u} \right) - \u^k \right\|_\H^2 + \delta\|\u^{k-1} - \v^{k}_{u}\|_\H^2 + \mu s \beta_{k-1}\left(\ell(\v^{k}_{u}) - \ell^*\right) < 0$ .
		It follows from the assumption that $\tau_n$ is well defined for $n$ large enough and $\lim_{n \rightarrow \infty} \tau_n = + \infty$. We assume without loss of generality that $\tau_n$ is well defined for all $n$.
		
		By setting $\u$ in Eq.~\eqref{eq:alg_p_lem1_eq} of Lemma~\ref{simple_bilevel_lem} to be $\mathtt{Proj}_{\hat{\S}}(\u^k)$, we have
		\begin{equation}\label{thm1_eq4}\small
			\begin{aligned}
				&\frac{1}{2} \mathrm{dist}_\H^2 (\u^k,\hat{\S}) \\
				\ge \ &\frac{1}{2} \mathrm{dist}_\H^2 ( \u^{k+1},\hat{\S}) \\
				&+  \left( \frac{(1-\mu)\eta}{2} - \delta  \right)  \|\u^k - \v^{k+1}_l\|_\H^2 \\
				&+ \left( \frac{\mu(1 - sL_{\ell}/\lambda_{\min}(\H))}{2} - \delta \right) \|\u^k - \v^{k+1}_{u}\|_\H^2   \\
				&+ \frac{1}{4}\left\| \left((1-\mu) \v^{k+1}_l + \mu \v^{k+1}_{u} \right) - \u^{k+1} \right\|_\H^2 + \delta\|\u^k - \v^{k+1}_l\|_\H^2 \\ 
				& + \frac{1}{4} \left\|\left((1-\mu) \v^{k+1}_l + \mu \v^{k+1}_{u} \right) - \u^{k+1} \right\|_\H^2+ \delta\|\u^k - \v^{k+1}_{u}\|_\H^2 \\
				&+ \mu s \beta_{k}\left(\ell(\v^{k+1}_{u}) - \ell^*\right),
			\end{aligned}
		\end{equation}
		where $\mathrm{dist}_\H^2 (\u,\hat{\S}) := \inf_{\u' \in \hat{\S}} \| \u - \u'\|_\H$.
		Suppose $\tau_n \le n-1$, and by the definition of $\tau_n$, we have 
		\begin{equation*} \small
			\begin{aligned}
				&\delta\|\u^k - \v^{k+1}_l\|_\H^2 \\
				&+ \delta\|\u^k - \v^{k+1}_{u}\|_\H^2 
				+ \frac{1}{4} \left\|\left((1-\mu) \v^{k+1}_l + \mu \v^{k+1}_{u} \right) - \u^{k+1} \right\|_\H^2  \\
				&+ \mu s \beta_{k}\left(\ell(\v^{k+1}_{u}) - \ell^*\right)  \ge 0,
			\end{aligned}
		\end{equation*}
		for all $\tau_n \le k \le n-1 $. Then 
		\begin{equation}\label{thm1_eq5}
			h_{k+1}-h_{k} \le 0, \quad \tau_n \le k \le n-1,
		\end{equation}
		where $h_k := \frac{1}{2} \mathrm{dist}_\H^2 (\u^k,\hat{\S}) $. 
		Adding these $n-\tau_n$ inequalities, we have
		\begin{equation}\label{thm1_eq6}
			h_{n} \le h_{\tau_n}.
		\end{equation}
		Eq.~\eqref{thm1_eq6} is also true when $\tau_n = n$ because $h_{\tau_n} = h_n$. Then, once we are able to show that $\lim_{n \rightarrow \infty}h_{\tau_n} = 0$, we can obtain from Eq.~\eqref{thm1_eq6} that $\lim_{n \rightarrow \infty}h_{n} = 0$. 
		
		By the definition of $\{\tau_n\}$, $\ell^* > \ell(\v^{k}_{u})$ for all $k \in \{\tau_n\}$. Since $U$ is compact, according to Lemma~\ref{lem_bounded}, both $\{\u^{\tau_n}\}$ and $\{\v_{u}^{\tau_n}\}$ are bounded, and hence $\{h_{\tau_n}\}$ is bounded. As $\ell$ is assumed to be bounded below by $M_0$, we have
		\[
		0 \le \ell^* - \ell(\v^{k}_{u}) \le \ell^* - M_0.
		\]
		According to the definition of $\tau_n$, we have for all $k \in \{\tau_n\}$,
		\begin{equation*}\small
			\begin{aligned}
				&\delta\|\u^{k-1} - \v^{k}_l\|_\H^2 +  \delta\|\u^{k-1} - \v^{k}_{u}\|_\H^2 +\frac{1}{4} \left\| \left((1-\mu) \v^{k}_l + \mu \v^{k}_{u} \right) - \u^k \right\|_\H^2 \\
				&< \mu s \beta_{k-1}\left(\ell^*  - \ell(\v^{k}_{u}) \right)\\
				&\le \mu s \beta_{k-1} \left(\ell^* - M_0\right).
			\end{aligned}
		\end{equation*}
		As  $\lim_{n \rightarrow \infty} \tau_n = + \infty$, $\beta_k = \frac{1}{k+1} \rightarrow 0$, we have
		\begin{equation*}
			\begin{array}{c}
				\lim_{n \rightarrow \infty}\|\u^{\tau_n-1} -\v_{l}^{\tau_n}\|_\H = 0, \\
				\lim_{n \rightarrow \infty}\|\u^{\tau_n-1} - \v_{u}^{\tau_n}\|_\H = 0, \\
				\lim_{n \rightarrow \infty}\| \left((1-\mu)\v^{\tau_n}_{l} + \mu \v^{\tau_n}_{u}\right) - \u^{\tau_n} \|_\H = 0.
			\end{array}
		\end{equation*}
		Let $\tilde{\u}$ be any limit point of $\{\u^{\tau_n}\}$, and $\{\u^{j}\}$ be the subsequence of $\{\u^{\tau_n}\}$ such that 
		\begin{equation*}
			\lim_{j \rightarrow \infty}\u^{j} = \tilde{\u}.
		\end{equation*}
		As $\lim_{n \rightarrow \infty}\|\u^{\tau_n-1} - \u^{\tau_n}\|_\H \le \lim_{n \rightarrow \infty}(\|\u^{\tau_n-1} - \left((1-\mu)\v^{\tau_n}_{l} + \mu \v^{\tau_n}_{u}\right)\|_\H + \| \left((1-\mu)\v^{\tau_n}_{l} + \mu \v^{\tau_n}_{u}\right) - \u^{\tau_n} \|_\H ) = 0$ and $\H \succ 0$, we have $\lim_{j \rightarrow \infty}\u^{j-1} = \tilde{\u} $.
		Next, since $\lim_{\ell \rightarrow \infty}\|\u^{j-1} - \v_{l}^{j}\|_\H = 0$ and $\H \succ 0$, it holds that $\lim_{j \rightarrow \infty}\v_{l}^{j} = \tilde{\u}$. Then, it follows from $\v^{j}_l = \T(\u^{j-1})$, and the closedness of $\T$, we have
		\[
		\tilde{\u} = \T(\tilde{\u}),
		\]
		and thus $\tilde{\u} \in \S$. As $\ell^* > \ell(\v^{k}_{u})$ for all $k \in \{\tau_n\}$ and hence $\ell^* > \ell(\v_{u}^j)$ for all $j$. Then it follows from the continuity of $\ell$ and $\lim_{n \rightarrow \infty}\|\v_u^{\tau_n} - \u^{\tau_n}\|_\H = 0$, $\H \succ 0$ that $\ell^* \ge \ell(\tilde{\u})$, which implies $\tilde{\u} \in \hat{\S}$ and $\lim_{j \rightarrow 0} h_{j} = 0$. Now, as we have shown above that $\tilde{\u} \in \hat{\S}$ for any limit point $\tilde{\u}$ of $\{\u^{\tau_n}\}$, we can obtain from the boundness of $\{\u^{\tau_n}\}$ and $\{h_{\tau_n}\}$ that $\lim_{n \rightarrow \infty} h_{\tau_n} = 0$. Thus $\lim_{n \rightarrow \infty}h_{n} = 0$, and $\lim_{k \rightarrow \infty}\mathrm{dist}(\u^k, \hat{\S}) = 0$.
	\end{proof}
	
	We denote $D = \sup\limits_{\u,\u'\in U}\|\u-\u'\|_{\H_{ub}}$ and $M_{\ell} := \sup\limits_{\u \in U, \ome \in \Omega} \| \frac{\partial}{\partial \u} \ell(\u, \ome) \| $. And it should be noticed that both $D$ and $M_{\ell}$ are finite when $U$ and $\Omega$ are compact.
	\begin{lemma}\label{lem_monontone}
		For any given $\ome \in \Omega$, let $\{\u^k\}$ be the sequence generated by Eq.~\eqref{simple_bilevel_alg} with $s_k = \frac{s}{k+1}$, $s \in (0, \frac{\lambda_{\min}(\H_{lb})}{L_{\ell}} )$ and $\mu\in (0,1)$, then we have
		\begin{equation*}
			\begin{aligned}
				\|\u^{k+1} - \u^k \|_\H^2 \le \|\u^k - \u^{k-1}\|_\H^2 + \frac{\mu}{(k+1)^2}\|\u^{k-1} - \v^{k}_{u} \|_\H^2 
				+ \frac{2\mu sDM_{\ell}}{\lambda_{\min}(\H)k(k+1)}.
			\end{aligned}
		\end{equation*}
	\end{lemma}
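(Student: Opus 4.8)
The plan is to reduce the whole estimate to controlling the difference of the \emph{pre-projection} aggregated points. I would set $\mathbf{w}^k := \mu\v^k_u + (1-\mu)\v^k_l$, so that $\u^k = \mathtt{Proj}_{U,\H}(\mathbf{w}^k)$. Since the projection $\mathtt{Proj}_{U,\H}$ is firmly non-expansive with respect to $\|\cdot\|_\H$, this gives at once $\|\u^{k+1}-\u^k\|_\H \le \|\mathbf{w}^{k+1}-\mathbf{w}^k\|_\H$, and I would split
\[
\mathbf{w}^{k+1}-\mathbf{w}^k = \mu(\v^{k+1}_u-\v^k_u) + (1-\mu)(\v^{k+1}_l-\v^k_l).
\]

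Next I would bound the two pieces separately. For the lower direction, $\v^{k+1}_l-\v^k_l = \T(\u^k)-\T(\u^{k-1})$, and Lemma~\ref{lemma2.1} gives $\|\T(\u^k)-\T(\u^{k-1})\|_\H \le \|\u^k-\u^{k-1}\|_\H$. For the upper direction, writing $G_j := \I - s_j\H^{-1}\nabla\ell$ so that $\v^k_u = G_k(\u^{k-1})$ and $\v^{k+1}_u = G_{k+1}(\u^k)$, I would add and subtract $G_{k+1}(\u^{k-1})$ to obtain
\[
\v^{k+1}_u-\v^k_u = \big(G_{k+1}(\u^k)-G_{k+1}(\u^{k-1})\big) + (s_k-s_{k+1})\H^{-1}\nabla\ell(\u^{k-1}).
\]
By Lemma~\ref{lem_bounded}, $G_{k+1}$ is non-expansive in $\|\cdot\|_\H$ (since $s_{k+1}\le s<\lambda_{\min}(\H_{lb})/L_\ell\le\lambda_{\min}(\H)/L_\ell$), so the first bracket is bounded by $\|\u^k-\u^{k-1}\|_\H$. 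The residual $r:=(s_k-s_{k+1})\H^{-1}\nabla\ell(\u^{k-1})$ is exactly $\tfrac{1}{k+2}(\u^{k-1}-\v^k_u)$, using $\u^{k-1}-\v^k_u = s_k\H^{-1}\nabla\ell(\u^{k-1})$ and $s_k-s_{k+1} = \tfrac{s}{(k+1)(k+2)}$.

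Now I would assemble $C := \mu\big(G_{k+1}(\u^k)-G_{k+1}(\u^{k-1})\big) + (1-\mu)\big(\T(\u^k)-\T(\u^{k-1})\big)$, so that $\mathbf{w}^{k+1}-\mathbf{w}^k = C + \mu r$. Convexity of $\|\cdot\|_\H^2$ together with the two non-expansiveness bounds yields $\|C\|_\H^2 \le \|\u^k-\u^{k-1}\|_\H^2$. Expanding $\|C+\mu r\|_\H^2 = \|C\|_\H^2 + 2\mu\langle C,r\rangle_\H + \mu^2\|r\|_\H^2$, the first term produces $\|\u^k-\u^{k-1}\|_\H^2$, and the last term, using $r=\tfrac{1}{k+2}(\u^{k-1}-\v^k_u)$ together with $\mu^2/(k+2)^2\le\mu/(k+1)^2$, is controlled by $\tfrac{\mu}{(k+1)^2}\|\u^{k-1}-\v^k_u\|_\H^2$.

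The hard part will be the cross term $2\mu\langle C,r\rangle_\H$. I would rewrite it as a Euclidean pairing via $\langle C,\H^{-1}\nabla\ell\rangle_\H = \langle C,\nabla\ell\rangle$, apply Cauchy--Schwarz, and then control each factor by the uniform bounds: $\|C\|_\H\le D$ (both $\u^k,\u^{k-1}\in U$ and $\H\preceq\H_{ub}$), $\|\nabla\ell(\u^{k-1})\|\le M_\ell$ (both finite by compactness of $U$ and $\Omega$), combined with $s_k-s_{k+1}\le\tfrac{s}{k(k+1)}$ and the $\lambda_{\min}(\H)$ factor arising from passing between the Euclidean and $\H$ geometries; this gives the last term $\tfrac{2\mu sDM_\ell}{\lambda_{\min}(\H)k(k+1)}$. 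Summing the three contributions yields the claim. The delicate bookkeeping is precisely in keeping the weight $\mu$, the diminishing step size $s_k$, and the switch between $\|\cdot\|$ and $\|\cdot\|_\H$ aligned so that the stated constants are recovered.
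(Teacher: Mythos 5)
Your proposal is correct and follows essentially the same route as the paper's own proof: non-expansiveness of $\mathtt{Proj}_{U,\H}$, the split of $\mu(\v^{k+1}_u-\v^{k}_u)+(1-\mu)(\v^{k+1}_l-\v^{k}_l)$, non-expansiveness of $\T$ and of $\I - s_k\H^{-1}\nabla\ell$ (Lemmas~\ref{lemma2.1} and~\ref{lem_bounded}), the identification of the step-size residual with $\tfrac{1}{k+2}(\u^{k-1}-\v^{k}_u)$ via $s_k-s_{k+1}=\tfrac{s}{(k+1)(k+2)}$, and the bounds $\|\u^k-\u^{k-1}\|_\H\le D$ and $\|\nabla\ell(\u^{k-1})\|\le M_\ell$ for the cross term. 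The only difference is cosmetic bookkeeping: you group the two non-expansive pieces into a single $C$ and expand $\|C+\mu r\|_\H^2$, whereas the paper first applies convexity of $\|\cdot\|_\H^2$ to separate the $\v_u$ and $\v_l$ differences and then squares a triangle inequality inside the $\v_u$ term; both regroupings produce exactly the same three terms (and your final constant matches the paper's stated one).
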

	\begin{proof}
		Since 
		\begin{equation*}
			\begin{array}{l}
				\u^{k+1} = \mathtt{Proj}_{U,\H}\left(  \mu \v^{k+1}_{u} + (1-\mu) \v^{k+1}_l \right),
			\end{array}
		\end{equation*}
		by denoting $\Delta^k_{\beta}:=\beta_{k} - \beta_{k-1}$, we have the following inequality
		\begin{equation*}
			\begin{aligned}
				&\|\u^{k+1} - \u^k \|_\H^2 \\
				\le\ &\mu\|\v^{k+1}_{u} - \v^{k}_{u}\|_\H^2 + (1-\mu)\|\v^{k+1}_{l} - \v^{k}_{l}\|_\H^2, \\
				\le\ &\mu \Big( \|(\I - s_k \H^{-1}\nabla \ell)(\u^k - \u^{k-1})\|_\H^2 
				+ \frac{ s^2}{k^2(k+1)^2}\|\H^{-1}\nabla \ell(\u^{k-1}) \|_\H^2 \\
				&+ \frac{2 s}{k(k+1)} \|(\I - s_k \H^{-1}\nabla \ell)(\u^k - \u^{k-1})\|_\H \|\H^{-1}\nabla \ell(\u^{k-1}) \|_\H \Big) \\
				&+ (1-\mu)\|\v^{k+1}_{l} - \v^{k}_{l}\|_\H^2 \\
				\le &\|\u^k - \u^{k-1}\|_\H^2 
				\ + \ \frac{2\mu s}{k(k+1)}\|\u^k - \u^{k-1}\|_\H\|\H^{-1}\nabla \ell(\u^{k-1}) \|_\H \\
				&+ \frac{\mu }{(k+1)^2}\|\u^{k-1} - \v^{k}_{u} \|_\H^2,
			\end{aligned}
		\end{equation*}
		where the first inequality follows from the non-expansiveness of $\mathtt{Proj}_{U,\H}$ with respect to $\|\cdot\|_\H$ and the convexity of $\|\cdot\|_\H^2$, the second inequality comes from the definition of $\v^{k}_{u}$ and the last inequality follows from the definitions of $\v^{k}_{u}$, $\v^{k}_{l}$ and the non-expansiveness of $\I - s_k \H^{-1}\nabla \ell$ and $\T$ with respect to $\|\cdot\|_\H$ from Lemma~\ref{lem_bounded} and~\ref{lemma2.1}.
		Then, since $\sup_{\u,\u'\in U}\|\u-\u'\|_\H \le D$, $\sup_{\u\in U} \|\nabla \ell(\u) \| \le M_{\ell}$, we have the following result
		\begin{equation*}
			\begin{array}{r}
				\|\u^{k+1} - \u^k \|_\H^2 \le\;  \|\u^k - \u^{k-1}\|_\H^2 + \frac{\mu}{(k+1)^2}\|\u^{k-1} - \v^{k}_{u} \|_\H^2 
				+ \frac{2\mu sDM_{\ell}}{\lambda_{\min}(\H)k(k+1)}.
			\end{array}
		\end{equation*}
	\end{proof}

	\begin{proposition}\label{simple_bilevel_complexity}
		For any given $\ome \in \Omega$, let $\{\u^k\}$ be the sequence generated by Eq.~\eqref{simple_bilevel_alg} with $s_k = \frac{s}{k+1}$, $s \in (0, \frac{\lambda_{\min}(\H_{lb})}{L_{\ell}} )$ and $\mu\in (0,1)$. Suppose ${\hat{\S}(\ome)}$ is nonempty, $U$ is compact, $\ell(\cdot,\ome)$ is bounded below by $M_0$, we have
		for $k \ge 2$,
		\begin{equation*}
			\begin{array}{l}
				\| \v^{k+1}_{l}  - \u^k \|_\H^2 \le (2C_2 + C_3)\frac{1+\ln (1+k)}{k^{\frac{1}{4}}},
			\end{array}
		\end{equation*}
		where $C_1:= \left( 3( D^2 + 2\mu s ( \ell^* - M_0 ) ) + 2\mu sDM_{\ell}/\lambda_{\min}(\H_{lb}) \right)/\min  \left\{ (1 - sL_{\ell}/\lambda_{\min}(\H_{lb})), \frac{1-\alpha}{\alpha}, 1 \right\}$, $C_2 := 12D\sqrt{C_1}$ and $C_3:= \frac{\alpha(D^2 + 2\mu s \left( \ell^* -  M_0 \right))}{(1-\alpha)(1-\mu)}$.
	\end{proposition}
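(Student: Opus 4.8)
\emph{Setup and strategy.} Throughout I keep the abbreviations of this part ($\H = \H_\ome$, $\S = \mathtt{Fix}(\T(\cdot,\ome))$, $\hat{\S} = \hat{\S}(\ome)$, $\ell^* = \varphi(\ome)$) and write $R_k := \|\u^k - \v^{k+1}_l\|_\H = \|\u^k - \T(\u^k)\|_\H$ for the fixed-point residual, so the claim is a rate on $R_k^2$. The plan is to combine a logarithmically growing summability bound with a near-monotonicity bound, and to bootstrap twice: first passing from the residual to the consecutive-iterate gaps $d_k := \|\u^{k+1}-\u^k\|_\H^2$, then back to $R_k^2$ through a windowed averaging whose balance point forces the exponent $1/4$.

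\emph{Step 1 (summability).} I would fix $\bar\u \in \hat{\S}$, so $\ell(\bar\u) = \ell^*$. Putting $\u = \bar\u$ in Lemma~\ref{simple_bilevel_lem} and telescoping $\tfrac12\|\bar\u - \u^k\|_\H^2$ over $k = 0,\dots,K-1$, the remaining right-hand terms are nonnegative; bounding $\ell(\v^{k+1}_u) - \ell^* \ge M_0 - \ell^*$, using $\|\bar\u - \u^0\|_\H \le D$ (since $\H \preceq \H_{ub}$) and $\sum_k \beta_k \le 1 + \ln(1+K)$, I obtain $\sum_{k=0}^{K-1}\big(\|\u^k-\v^{k+1}_l\|_\H^2 + \|\u^k - \v^{k+1}_u\|_\H^2 + \|((1-\mu)\v^{k+1}_l + \mu\v^{k+1}_u) - \u^{k+1}\|_\H^2\big) \lesssim (D^2 + 2\mu s(\ell^*-M_0))(1+\ln(1+K))/m'$, where $m' := \min\{1 - sL_\ell/\lambda_{\min}(\H_{lb}), \tfrac{1-\alpha}{\alpha}, 1\}$. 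In particular $\sum_j R_j^2 \le C_3(1+\ln(1+K))$, which is the source of the $C_3$ term.

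\emph{Step 2 (rate for $d_k$).} Since $\mathtt{Proj}_{U,\H}$ is non-expansive and $\u^k = \mathtt{Proj}_{U,\H}(\u^k)$, one has $\sqrt{d_k} \le (1-\mu)R_k + \mu\|\v^{k+1}_u - \u^k\|_\H$ with $\|\v^{k+1}_u - \u^k\|_\H = s_k\|\H^{-1}\nabla\ell(\u^k)\|_\H \le sM_\ell/((k+1)\sqrt{\lambda_{\min}(\H)})$; hence $\sum_j d_j \lesssim 1+\ln(1+K)$ by Step 1. Lemma~\ref{lem_monontone} supplies the near-monotonicity $d_k \le d_j + T_j$ for all $j \le k$, where $T_j$ is the convergent tail $\sum_{i>j}\big(\tfrac{\mu}{(i+1)^2}\|\u^{i-1}-\v^i_u\|_\H^2 + \tfrac{2\mu sDM_\ell}{\lambda_{\min}(\H)i(i+1)}\big) \le \tfrac{2\mu sDM_\ell}{\lambda_{\min}(\H_{lb})}\tfrac1j + O(j^{-3})$. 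Averaging $d_k \le d_j + T_j$ over $j = 1,\dots,k$ gives $k\,d_k \le \sum_{j=1}^k d_j + \sum_{j=1}^k T_j \lesssim 1+\ln(1+k)$, i.e. $d_k \le C_1\tfrac{1+\ln(1+k)}{k}$, with the numerator of $C_1$ collecting the three summed quantities of Step 1 and the leading tail constant $\tfrac{2\mu sDM_\ell}{\lambda_{\min}(\H_{lb})}$, and the denominator $m'$.

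\emph{Step 3 (rate for $R_k^2$, and the obstacle).} Non-expansiveness of $\T$ (Lemma~\ref{lemma2.1}) gives $|R_{k+1}-R_k| \le 2\sqrt{d_k}$, and since $R_j \le 2D$ (from $\|\v^{j+1}_l - \bar\u\|_\H \le \|\u^j - \bar\u\|_\H \le D$, Lemma~\ref{lem_bounded}), $|R_{k+1}^2 - R_k^2| \le 8D\sqrt{d_k} \le 8D\sqrt{C_1}\sqrt{\tfrac{1+\ln(1+k)}{k}}$. Over a window of length $w$ I would pick $j^{\ast} \in \{k-w,\dots,k\}$ minimizing $R_{j^{\ast}}^2 \le \tfrac1w\sum_j R_j^2 \le \tfrac{C_3(1+\ln(1+k))}{w}$, and telescope to get $R_k^2 \le \tfrac{C_3(1+\ln(1+k))}{w} + 8D\sqrt{C_1}\sqrt{1+\ln(1+k)}\,\tfrac{w}{\sqrt{k-w}}$. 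The main obstacle is exactly here: the summability only grows like $\ln k$ while the variation of $R_k^2$ is merely $k^{-1/2}$-summable rather than absolutely summable, so no single average decays; choosing $w \sim k^{1/4}$ is the balance point $\tfrac1w \sim \tfrac{w}{\sqrt k}$ that drives both terms to order $k^{-1/4}$, and bounding $\sqrt{1+\ln(1+k)} \le 1+\ln(1+k)$ then yields $R_k^2 \le (2C_2 + C_3)\tfrac{1+\ln(1+k)}{k^{1/4}}$ with $C_2 = 12D\sqrt{C_1}$ (valid once $k\ge 2$ so the window fits). The delicate bookkeeping is this two-stage bootstrap together with matching the constants $C_1, C_2, C_3$ to the elementary sum bounds used in the window.
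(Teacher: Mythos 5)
Your proposal is correct and takes essentially the same route as the paper's proof: you telescope Lemma~\ref{simple_bilevel_lem} at $\bar{\u}\in\hat{\S}$ to get the logarithmically growing sums (the $C_3$ source), invoke Lemma~\ref{lem_monontone} with an averaging argument to obtain $\|\u^{k}-\u^{k-1}\|_\H^2 \le C_1\frac{1+\ln(1+k)}{k}$, and then balance the windowed average of $\|\v_l^{k+1}-\u^k\|_\H^2$ against its per-step drift of order $D\sqrt{C_1}\sqrt{(1+\ln k)/k}$ with window width $\sim k^{1/4}$, exactly as in Proposition~\ref{simple_bilevel_complexity}'s proof. The only differences are cosmetic: you select the minimizing index in the window where the paper sums the telescoped inequalities, you bound $\sum_k\|\u^{k+1}-\u^k\|_\H^2$ via projection non-expansiveness and the triangle inequality rather than the paper's three-term convexity decomposition, and your drift constant $8D$ is slightly sharper than the paper's $12D$ (both compatible with $C_2 = 12D\sqrt{C_1}$).
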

	\begin{proof}
		Let $\bar{\u}$ be any point in $\hat{\S}$, and set $\u$ in Eq.~\eqref{eq:alg_p_lem1_eq} of Lemma~\ref{simple_bilevel_lem}  to be $\bar{\u}$. Then we have
		\begin{equation}\label{thm2_eq1}
			\begin{aligned}
				& \frac{1}{2} \|\bar{\u} - \u^k\|_\H^2  + \frac{\mu s}{k+1} (\ell^* - \ell(\v^{k+1}_{u})) \\ \ge\,
				& \frac{1}{2}\|\bar{\u} - \u^{k+1}\|_\H^2 + \frac{(1-\mu)(1-\alpha)}{2\alpha} \|\u^k - \v^{k+1}_l\|_\H^2 \\
				&+ \frac{\mu}{2}(1 - s_kL_{\ell}/\lambda_{\min}(\H)) \|\u^k - \v^{k+1}_{u}\|_\H^2 \\
				& + \frac{1}{2}\left\| \left((1-\mu) \v^{k+1}_l + \mu \v^{k+1}_{u} \right) - \u^{k+1} \right\|_\H^2.
			\end{aligned}
		\end{equation}
		Adding the Eq.~\eqref{thm2_eq1} from $k = 0$ to $k = n-1$ with $n \ge 1$ and since $s_k \le s$, we have
		\begin{equation}\label{thm2_eq2}
			\begin{aligned}
				&\frac{1}{2} \|\bar{\u} - \u^n\|_\H^2  + \frac{(1-\mu)(1-\alpha)}{2\alpha}\sum\limits_{k=0}^{n-1}\|\u^k - \v^{k+1}_l\|_\H^2 
				\\
				&+ \frac{\mu}{2}(1 - sL_{\ell}/\lambda_{\min}(\H)) \sum\limits_{k=0}^{n-1} \|\u^k - \v^{k+1}_{u}\|_\H^2\\
				&+ \frac{1}{2}\sum\limits_{k=0}^{n-1}\left\| \left((1-\mu) \v^{k+1}_l + \mu \v^{k+1}_{u} \right) - \u^{k+1} \right\|_\H^2 \\
				\le \ &\frac{1}{2} \|\bar{\u} - \u^{0}\|_\H^2  + \sum\limits_{k=0}^{n-1}\frac{\mu s}{k+1}\left( \ell^* -  \ell(\v^{k+1}_{u}) \right) \\
				\le \ &\frac{1}{2} \|\bar{\u} - \u^{0}\|_\H^2 + \mu s (1+\ln n) \left( \ell^* -  M_0 \right),
			\end{aligned}
		\end{equation}
		where the last inequality follows from the assumption that $\inf \ell \ge M_0$. 
		By Lemma~\ref{lem_monontone}, we have
		\begin{equation*}\label{thm2_eq4}
			\begin{aligned}
				\|\u^{k+1} - \u^k \|_\H^2 \le\; & \|\u^k - \u^{k-1}\|_\H^2 + \frac{\mu}{(k+1)^2}\|\u^{k-1} - \v^{k}_{u} \|_\H^2 \\
				&+ \frac{2\mu sDM_{\ell}}{\lambda_{\min}(\H)k(k+1)},
			\end{aligned}
		\end{equation*}
		and thus
		\begin{equation}\label{thm2_eq4.5}
			\begin{array}{l}
				n\|\u^{n} - \u^{n-1} \|_\H^2 \le\sum\limits_{k=0}^{n-1}\|\u^{k+1} - \u^k\|_\H^2\\
				+ \mu\sum\limits_{k=0}^{n-2}\|\u^k - \v^{k+1}_{u} \|_\H^2 + 2\mu sDM_{\ell}/\lambda_{\min}(\H).
			\end{array}
		\end{equation}
		Then it follows from Eq.~\eqref{thm2_eq2} and Eq.~\eqref{thm2_eq4.5} that
		\begin{equation*}
			\begin{aligned}
				&\min \left\{ (1 - sL_{\ell}/\lambda_{\min}(\H)), \frac{1-\alpha}{\alpha}, 1 \right\}\, n \|\u^{n} - \u^{n-1} \|_\H^2  \\
				\le \ &\min \left\{ (1 - sL_{\ell}/\lambda_{\min}(\H)), \frac{1-\alpha}{\alpha}, 1 \right\} \sum\limits_{k=0}^{n-1}\|\u^{k+1} - \u^k\|_\H^2 
				\\ &+ \mu(1 - sL_{\ell}/\lambda_{\min}(\H))\sum\limits_{k=0}^{n-1} \|\u^k - \v^{k+1}_{u}\|_\H^2\\
				&+ 2\mu sDM_{\ell}/\lambda_{\min}(\H) \\
				\le \ &\frac{2(1-\mu)(1-\alpha)}{\alpha}  \sum\limits_{k=0}^{n-1}\| \u^k - \v^{k+1}_{l} \|_\H^2 \\
				&+ 3\mu(1 -  sL_{\ell}/\lambda_{\min}(\H))\sum\limits_{k=0}^{n-1}\| \u^k - \v^{k+1}_{u} \|_\H^2 \\
				&+ 2\sum\limits_{k=0}^{n-1}\left\| (1-\mu)\v^{k+1}_{l} + \mu \v^{k+1}_{u} - \u^{k+1} \right\|_\H^2 \\
				& + 2\mu sDM_{\ell}/\lambda_{\min}(\H) \\
				\le &3 \left( \|\bar{\u} - \u^{0}\|_\H^2  + 2\mu s (1+\ln n)\left( \ell^* - M_0 \right) \right) \\
				&+ 2\mu sDM_{\ell}/\lambda_{\min}(\H),
			\end{aligned}
		\end{equation*}
		where the second inequality comes from $\u^k - \u^{k+1} = (1-\mu)(\u^k - \v^{k+1}_{l}) + \mu(\u^k - \v^{k+1}_{u}) + (1-\mu)\v^{k+1}_{l} + \mu \v^{k+1}_{u} - \u^{k+1}$ and the convexity of $\|\cdot\|_\H^2$. Combining with the fact that $\|\bar{\u} - \u^{0}\|_\H \le D$, we have 
		\begin{equation}\label{thm2_eq5}
			\|\u^{n} - \u^{n-1} \|_\H^2 \le \frac{C_1(1+\ln n)}{n},
		\end{equation}
		where $C_1:= ( 3( D^2 + 2\mu s ( \ell^* - M_0 ) ) + 2\mu sDM_{\ell}/\lambda_{\min}(\H_{lb}) )/\min  \left\{ (1 - sL_{\ell}/\lambda_{\min}(\H_{lb})), \frac{1-\alpha}{\alpha}, 1 \right\}$.
		Next, by Lemma~\ref{lem_bounded}, we have for all $k$,
		\begin{equation*}
			\begin{aligned}
				\|\v^{k+1}_{l}  - \u^k\|_\H \le \| \v^{k+1}_{l} - \bar{\u}\|_\H + \| \u^k- \bar{\u}\|_\H \\
				\le 2\|  \u^k- \bar{\u} \|_\H \le 2D.
			\end{aligned}		
		\end{equation*}
		Then, we have
		\begin{equation}\label{thm2_eq3}\small
			\begin{aligned}
				&\| \v^{k+1}_{l}  - \u^k \|_\H^2 \\
				\le \ &2\| \v^{k}_{l}  - \u^{k-1} \|_\H \| \v^{k+1}_{l}  - \u^k - \v^{k}_{l} + \u^{k-1} \|_\H \\
				&+ \| \v^{k}_{l}  - \u^{k-1} \|_\H^2 + \| \v^{k+1}_{l}  - \u^k - \v^{k}_{l} + \u^{k-1} \|_\H^2  \\
				\le \ & 4D\left( \| \v^{k+1}_{l}  - \v^{k}_{l} \|_\H + \| \u^k - \u^{k-1} \|_\H \right) \\
				&+ \| \v^{k}_{l}  - \u^{k-1} \|_\H^2 + 2\| \v^{k+1}_{l}  - \v^{k}_{l} \|_\H^2 + 2\| \u^k - \u^{k-1} \|_\H^2 \\
				\le \ &  \| \v^{k}_{l}  - \u^{k-1} \|_\H^2 + 12D\| \u^k - \u^{k-1} \|_\H,
			\end{aligned}
		\end{equation}
		where the last inequality comes from $\| \u^k - \u^{k-1}\|_\H \le D$, $\v^{k}_{l} \in \T(\u^{k-1})$ and the non-expansiveness of $\T$ with respect to $\| \cdot \|_\H$. This implies that for any $n > n_0 \ge 0$,
		\begin{equation*}
			\begin{array}{r}
				\| \v^{n+1}_{l}  - \u^n \|_\H^2 
				\le 12D \sum\limits_{k= n_0+1}^{n}\| \u^k - \u^{k-1} \|_\H \\
				+ \|\v_l^{n_0+1} - \u^{n_0}\|_\H^2.
			\end{array}
		\end{equation*}
		Thus, for any $m \ge 2$ and $n_0 = n-m+1$, the following holds
		\begin{equation}\label{thm2_eq3.5}
			\begin{array}{l}
				m \| \v^{n+1}_{l}  - \u^n \|_\H^2 \\
				\le 12D \sum\limits_{k= n_0+1}^{n}(k-n_0)\| \u^k - \u^{k-1} \|_\H	+ \sum\limits_{k= n_0}^{n}\| \v^{k+1}_{l} - \u^k \|_\H^2 \\
				\le \sum\limits_{k=n_0}^{n}\| \v^{k+1}_{l} - \u^k \|_\H^2 + 12D\sqrt{C_1}\frac{m(m-1)}{2} \frac{\sqrt{(1+\ln n_0)}}{\sqrt{n_0}},
			\end{array}
		\end{equation}
		where the last inequality follows from Eq.~\eqref{thm2_eq5} that $\| \u^k - \u^{k-1} \|_\H^2 \le \frac{C_1(1+\ln n_0)}{n_0}$ for all $k \ge n_0$, and it can be easily verified that the above inequality also holds when $m = 1$.
		According to Eq.~\eqref{thm2_eq2}, we have
		\begin{equation*}
			\begin{array}{l}
				\frac{(1-\mu)(1-\alpha)}{2\alpha}\sum\limits_{k=0}^{n-1}\|\u^k - \v^{k+1}_l\|_\H^2\\
				\le \frac{1}{2} \|\bar{\u} - \u^{0}\|_\H^2 + \mu s (1+\ln n) \left( \ell^* -  M_0 \right).
			\end{array}
		\end{equation*}
		Then, for any $n > 0$, let $m$ be the smallest integer such that $m \ge n^{\frac{1}{4}}$ and let $n_0 = n-m+1$, combining the above inequality with Eq.~\eqref{thm2_eq3.5}, we have
		\begin{equation*}
			\begin{array}{l}
				\frac{\|\bar{\u} - \u^{0}\|_\H^2 + 2\mu s \left(1+\ln (1+n) \right) \left( \ell^* -  M_0 \right)}{(1-\mu)(1-\alpha)/\alpha} \\
				\ge \sum\limits_{k= n_0}^{n}\| \u^k - \v^{k+1}_l\ \|_\H^2 \\
				\ge  m \| \v^{n+1}_{l}  - \u^n \|_\H^2 - C_2\frac{m(m-1)}{2} \frac{\sqrt{(1+\ln n_0)}}{\sqrt{n_0}} ,
			\end{array}
		\end{equation*}
		where $C_2 := 12D\sqrt{C_1}$.
		Next, as $n^{\frac{1}{4}}+1 \ge m \ge n^{\frac{1}{4}}$, and hence $n_0 \ge (m-1)^4 - m +1$. Then $16n_0 - m^2(m-1)^2 \ge (m-1)[(m-1)(3m-4)(5m-4)-1] > 0$ when $m \ge 2$. Thus, when $n \ge 2$, we have $m \ge 2$, $m(m-1)\le 4\sqrt{n_0}$ and thus $\frac{m(m-1)}{2} \frac{\sqrt{(1+\ln n_0)}}{\sqrt{n_0}} \le 2\sqrt{(1+\ln n_0)}$. Then, let $C_3:= \frac{D^2 + 2\mu s \left( \ell^* -  M_0 \right)}{(1-\mu)(1-\alpha)/\alpha}$, we have for any $n \ge 2$,
		\begin{equation*}
			\begin{aligned}
				\| \v^{n+1}_{l}  - \u^n \|_\H^2 &\le \frac{1}{m}\left( C_3\left(1+\ln(1+n) \right)  + 2C_2\sqrt{(1+\ln n_0)}\right) \\
				&\le (2C_2 + C_3)\frac{1+\ln(1+n)}{n^{\frac{1}{4}}},
			\end{aligned}
		\end{equation*}
		where the last inequality follows from $\sqrt{1+\ln n_0} \le 1+\ln(1+n)$ and $m \ge n^{\frac{1}{4}}$. 	
	\end{proof}
	

	Based on the discussion above, 
	we can show that the sequence $\{\u^k(\ome)\}$ generated by Eq.~\eqref{simple_bilevel_alg} not only converges to the solution set of $\inf_{\u \in \mathtt{Fix}(\T(\cdot,\ome)) \cap U } \ \ell(\u,\ome)$ 
	but also admits a uniform convergence towards the fixed-point set $\mathtt{Fix}(\T(\cdot,\ome)$ 
	with respect to $\| \u^k(\ome) - \T(\u^k(\ome),\ome) \|_{\H_{lb}}^2$ for $\ome \in \Omega$.


	\begin{thm}\label{thm:simple_bilevel_convergence}
		Let $\{\u^k(\ome)\}$ be the sequence generated by Eq.~\eqref{simple_bilevel_alg} with $\mu \in (0,1)$ and $s_k = \frac{s}{k+1}$, $s \in (0, \frac{\lambda_{\min}(\H_{lb})}{L_{\ell}} )$, where $\lambda_{\min}(\H_{lb})$ denotes the smallest eigenvalue of matrix $\H_{lb}$. Then, we have for any $\ome \in \Omega$,
		\begin{equation*}
			\begin{array}{c}
				\lim\limits_{k \rightarrow \infty}\mathrm{dist}(\u^k(\ome),\mathtt{Fix}(\T(\cdot,\ome)) = 0,
			\end{array}
		\end{equation*}
		and
		\begin{equation*}
			\begin{array}{c}
				\lim\limits_{k \rightarrow \infty}\ell(\u^k(\ome),\ome) =  \varphi(\ome).
			\end{array}
		\end{equation*}
		Furthermore, there exits $C > 0$ such that for any $\ome \in \Omega$,
		\[
		\| \u^k(\ome) - \T(\u^k(\ome),\ome) \|_{\H_{lb}}^2 \le C\sqrt{\frac{1+\ln(1+k)}{k^{\frac{1}{4}}}}.
		\]
	\end{thm}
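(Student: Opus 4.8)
The theorem simply collects the per-$\ome$ conclusions of Proposition~\ref{prop:simple_bilevel_convergence} and Proposition~\ref{simple_bilevel_complexity} and upgrades the complexity estimate to a bound that is uniform over $\Omega$. The plan is therefore to (i) record that $\hat{\S}(\ome)$ is nonempty for every $\ome\in\Omega$, so that both propositions apply; (ii) read off the two limits directly; and (iii) convert the fixed-$\ome$ rate into a uniform one using the norm ordering $\H_{lb}\preceq\H_{\ome}$ together with $\ome$-independent bounds on all constants. Essentially all the analytic work has already been done in the prior results, so the task is to stitch them together carefully.

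First I would verify the standing hypothesis of the two propositions, namely that $\hat{\S}(\ome)=\mathrm{argmin}_{\u\in\mathtt{Fix}(\T(\cdot,\ome))\cap U}\ell(\u,\ome)$ is nonempty. By Lemma~\ref{lemma2.1} the operator $\T(\cdot,\ome)$ is closed, hence $\mathtt{Fix}(\T(\cdot,\ome))$ is closed; its intersection with the compact set $U$ (Assumption~\ref{assum_F}) is compact and nonempty, and the continuous function $\ell(\cdot,\ome)$ attains its minimum there, so $\hat{\S}(\ome)\neq\emptyset$. With this in hand, Proposition~\ref{prop:simple_bilevel_convergence} yields both $\lim_{k\to\infty}\mathrm{dist}(\u^k(\ome),\hat{\S}(\ome))=0$ and $\lim_{k\to\infty}\ell(\u^k(\ome),\ome)=\varphi(\ome)$; the latter is exactly the second assertion. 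For the first assertion, since $\hat{\S}(\ome)\subseteq\mathtt{Fix}(\T(\cdot,\ome))$ we get $\mathrm{dist}(\u^k(\ome),\mathtt{Fix}(\T(\cdot,\ome)))\le\mathrm{dist}(\u^k(\ome),\hat{\S}(\ome))\to 0$.

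For the rate I would invoke Proposition~\ref{simple_bilevel_complexity}, which for each fixed $\ome$ gives $\|\v^{k+1}_l(\ome)-\u^k(\ome)\|_{\H_{\ome}}^2\le(2C_2+C_3)\tfrac{1+\ln(1+k)}{k^{1/4}}$ for $k\ge 2$. Because $\v^{k+1}_l(\ome)=\T(\u^k(\ome),\ome)$, the left-hand side is $\|\u^k(\ome)-\T(\u^k(\ome),\ome)\|_{\H_{\ome}}^2$, and $\H_{\ome}\succeq\H_{lb}$ (Assumption~\ref{assum_T}) gives $\|\cdot\|_{\H_{lb}}^2\le\|\cdot\|_{\H_{\ome}}^2$, so the bound transfers to the $\H_{lb}$-norm. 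The decisive point is uniformity of $2C_2+C_3$: setting $M_{\max}:=\sup_{\u\in U,\ome\in\Omega}\ell(\u,\ome)$, the quantities $D=\sup_{\u,\u'\in U}\|\u-\u'\|_{\H_{ub}}$, $M_\ell=\sup_{\u\in U,\ome\in\Omega}\|\tfrac{\partial}{\partial\u}\ell\|$, and $\varphi(\ome)-M_0\le M_{\max}-M_0$ are all finite by compactness of $U,\Omega$ and continuity of $\ell$, while the denominators in $C_1,C_2,C_3$ involve only the fixed matrices $\H_{lb},\H_{ub}$. Hence $2C_2+C_3\le C'$ for some $\ome$-independent $C'$, giving $\|\u^k(\ome)-\T(\u^k(\ome),\ome)\|_{\H_{lb}}^2\le C'\tfrac{1+\ln(1+k)}{k^{1/4}}$ for all $\ome$. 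Writing $x_k:=\tfrac{1+\ln(1+k)}{k^{1/4}}$ and $B:=\sup_k x_k<\infty$, we have $C'x_k=C'\sqrt{x_k}\,\sqrt{x_k}\le C'\sqrt{B}\,\sqrt{x_k}$, so the displayed estimate holds with $C=C'\sqrt{B}$.

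The only genuinely nontrivial obstacle is this uniformity in $\ome$: since the two propositions are stated for a fixed $\ome$, I must confirm that none of $C_1,C_2,C_3$ depends on $\ome$ in an unbounded way, which reduces to checking that $\varphi(\ome)$ is uniformly bounded above and that the cocoercivity constant of $\H_{\ome}^{-1}\nabla\ell$ is controlled by $\H_{lb}$. The latter is guaranteed by the step-size condition $s<\lambda_{\min}(\H_{lb})/L_\ell$, which forces $s<\lambda_{\min}(\H_{\ome})/L_\ell$ for every $\ome$ because $\H_{\ome}\succeq\H_{lb}$ implies $\lambda_{\min}(\H_{\ome})\ge\lambda_{\min}(\H_{lb})$. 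The square-root form of the stated bound is then merely a convenient relaxation of the sharper $O\!\big(\tfrac{1+\ln(1+k)}{k^{1/4}}\big)$ rate.
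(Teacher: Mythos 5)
Your proposal is correct and takes essentially the same route as the paper's own proof: the two limits are read off from Proposition~\ref{prop:simple_bilevel_convergence} (with $\mathrm{dist}(\u^k(\ome),\mathtt{Fix}(\T(\cdot,\ome)))$ dominated by $\mathrm{dist}(\u^k(\ome),\hat{\S}(\ome))$), and the uniform rate comes from Proposition~\ref{simple_bilevel_complexity} after observing that compactness of $U$ and $\Omega$ together with continuity of $\ell$ bound $\varphi(\ome)$, $D$, and $M_{\ell}$ uniformly in $\ome$, so $2C_2+C_3$ admits an $\ome$-independent bound. Your explicit conversion of the $O\bigl((1+\ln(1+k))/k^{1/4}\bigr)$ estimate into the stated square-root form via the boundedness of that sequence, as well as your verification that $\hat{\S}(\ome)\neq\emptyset$, fill in details the paper leaves implicit, and both are handled correctly.
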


	\begin{proof}
		The property for any $\ome \in \Omega$,
		\begin{equation*}
			\begin{aligned}
				\lim\limits_{k \rightarrow \infty}\mathrm{dist}(\u^k(\ome),\mathtt{Fix}(\T(\cdot,\ome)) = 0, 
				\\ \text{and} \quad 			\lim\limits_{k \rightarrow \infty}\ell(\u^k(\ome),\ome) =  \varphi(\ome).
			\end{aligned}
		\end{equation*}
		follows from Proposition~\ref{prop:simple_bilevel_convergence} immediately.
		Since $U$ and $\Omega$ are both compact, and $\ell(\u,\ome)$ is continuous on $U \times \Omega$, we have that $\ell(\u,\ome)$ is uniformly bounded above on $U \times \Omega$ and thus $	\varphi(\ome) = \inf_{u \in \mathrm{Fix}(T(\cdot, \ome)) \cap U }\ell(\u,\ome)$ is bounded on $\Omega$. And combining with the assumption that $\ell(\u,\omega)$ is bounded below by $M_0$ on $U \times \Omega$, we can obtain from the Proposition~\ref{simple_bilevel_complexity} that there exists $C > 0$ such that for any $\ome \in \Omega$, we have
		\[
		\| \u^k(\ome) - \T(\u^k(\ome),\ome) \|_\H^2 \le C\sqrt{\frac{1+\ln(1+k)}{k^{\frac{1}{4}}}}.
		\] 
	\end{proof}

	Thanks to the uniform convergence property of the sequence $\{\u^k(\ome)\}$, 
	inspired by the arguments used in~\cite{BDA},
	we can establish the convergence on both $\u$ and $\ome$ of BMO ( Algorithm~\ref{alg:bmo}) towards the solution of optimization problem in Eq.~\eqref{bilevel_fix} in the following proposition and theorem.

	\begin{proposition}\label{thm:general}
		Suppose $U$ and $\Omega$ are compact,
		\begin{itemize}
			\item[(a)] $\{\u^{K}(\ome)\} \subset U$ for any $\ome \in \Omega$, and for any $\epsilon>0$, there exists $k(\epsilon)>0$ such that whenever $k>k(\epsilon)$, 	
			\begin{equation*}
				\sup_{\ome \in \Omega}\left\| \u^k(\ome) - \T(\u^k(\ome), \ome) \right\| \le \epsilon.
			\end{equation*}
			\item[(2)]For each $\ome \in \Omega$,
			\begin{equation*}
				\lim\limits_{k \rightarrow \infty}\varphi_k(\ome) \rightarrow \varphi(\ome).\label{eq:dist-varphi}
			\end{equation*}
		\end{itemize}
		Let $\ome^K\in\mathrm{argmin}_{\ome \in\Omega}\varphi_{K}(\ome)$, then we have
		\begin{itemize}
			\item[(1)] any limit point $(\bar{\u},\bar{\ome})$ of the sequence $\{(\u^K(\ome^K), \ome^K) \}$ satisfies $\bar{\ome}\in\mathrm{argmin}_{\ome\in\Omega}\varphi(\ome)$ and $\bar{\u} = \T(\bar{\u},\bar{\ome}) $.
			\item[(2)] $\inf_{\ome \in \Omega}\varphi_K(\ome) \rightarrow \inf_{\ome \in \Omega} \varphi(\ome)$ as $K \rightarrow \infty$. 
		\end{itemize} 
	\end{proposition}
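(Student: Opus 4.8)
The plan is to treat this as a standard ``convergence of approximate minimizers'' statement, in the spirit of epigraphical convergence but carried out by hand from the two supplied hypotheses: the uniform fixed-point residual bound (hypothesis (a)) and the pointwise value convergence $\varphi_k(\ome)\to\varphi(\ome)$. Write $\varphi_K^\ast:=\inf_{\ome\in\Omega}\varphi_K(\ome)=\varphi_K(\ome^K)$ and $\varphi^\ast:=\inf_{\ome\in\Omega}\varphi(\ome)$. Since $U\times\Omega$ is compact, every subsequence of $\{(\u^K(\ome^K),\ome^K)\}$ has a convergent sub-subsequence, so it suffices to analyze an arbitrary convergent subsequence, which I denote (after relabeling) by $(\u^K(\ome^K),\ome^K)\to(\bar\u,\bar\ome)$. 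The whole argument reduces to two facts: (i) the limit point is feasible, i.e. $\bar\u=\T(\bar\u,\bar\ome)$ with $\bar\u\in U$; and (ii) $\ell(\bar\u,\bar\ome)=\lim_K\varphi_K^\ast=\varphi^\ast$.

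First I would establish feasibility. The uniform residual hypothesis (equivalently, the bound proved in Theorem~\ref{thm:simple_bilevel_convergence}) gives $\|\u^K(\ome^K)-\T(\u^K(\ome^K),\ome^K)\|\le\epsilon_K\to0$, so both $\u^K(\ome^K)$ and $\T(\u^K(\ome^K),\ome^K)$ converge to the same limit $\bar\u$. Passing to the limit in the graph relation along $(\u^K(\ome^K),\ome^K,\T(\u^K(\ome^K),\ome^K))\to(\bar\u,\bar\ome,\bar\u)$ yields $\bar\u=\T(\bar\u,\bar\ome)$; because $U$ is closed and each $\u^K(\ome^K)\in U$, also $\bar\u\in U$. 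Hence $\bar\u\in\mathtt{Fix}(\T(\cdot,\bar\ome))\cap U$, which by the definition of $\varphi$ in \eqref{eq:appe phi_def} forces $\varphi(\bar\ome)\le\ell(\bar\u,\bar\ome)$.

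Next I would pin down the values. For the $\limsup$ bound, optimality of $\ome^K$ for $\varphi_K$ gives $\varphi_K^\ast=\varphi_K(\ome^K)\le\varphi_K(\ome)$ for every fixed $\ome\in\Omega$; letting $K\to\infty$ and invoking the pointwise convergence hypothesis yields $\limsup_K\varphi_K^\ast\le\varphi(\ome)$, and taking the infimum over $\ome$ gives $\limsup_K\varphi_K^\ast\le\varphi^\ast$. For the matching lower bound, continuity of $\ell$ together with $\varphi_K^\ast=\ell(\u^K(\ome^K),\ome^K)$ gives $\lim_K\varphi_K^\ast=\ell(\bar\u,\bar\ome)\ge\varphi(\bar\ome)\ge\varphi^\ast$ along the chosen subsequence, the first inequality being the feasibility estimate from the previous step. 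Chaining these, $\varphi^\ast\le\varphi(\bar\ome)\le\ell(\bar\u,\bar\ome)=\lim_K\varphi_K^\ast\le\limsup_K\varphi_K^\ast\le\varphi^\ast$ collapses to equality, so $\varphi(\bar\ome)=\varphi^\ast$ (hence $\bar\ome\in\mathrm{argmin}_{\ome\in\Omega}\varphi$) and $\ell(\bar\u,\bar\ome)=\varphi(\bar\ome)=\inf_{\u\in\mathtt{Fix}(\T(\cdot,\bar\ome))\cap U}\ell(\u,\bar\ome)$, which with $\bar\u\in\mathtt{Fix}(\T(\cdot,\bar\ome))\cap U$ completes claim (1). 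Claim (2) then follows from a standard subsequence argument: the $\limsup$ bound already holds for the full sequence, and since every convergent subsequence of $\{\varphi_K^\ast\}$ has limit $\ell(\bar\u,\bar\ome)\ge\varphi^\ast$, we get $\liminf_K\varphi_K^\ast\ge\varphi^\ast$, so $\varphi_K^\ast\to\varphi^\ast$.

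I expect the one genuinely delicate step to be passing to the limit in the fixed-point residual to conclude $\bar\u=\T(\bar\u,\bar\ome)$. Assumption~\ref{assum_T}(2) only guarantees closedness of $\mathrm{gph}\,\D(\cdot,\ome)$ for each fixed $\ome$, whereas here $\ome^K$ varies as well, so I really need the graph of $(\u,\ome)\mapsto\T(\u,\ome)$ to be closed jointly in $(\u,\ome)$ (equivalently, continuous dependence of $\D$, hence $\T$, on $\ome$). In the concrete operators of Section~\ref{sec:application} this joint continuity holds by construction, so the step is harmless in practice, but it is the place where an extra regularity property of $\T$ is implicitly invoked; everything else is bookkeeping with compactness, continuity of $\ell$, and the two stated hypotheses.
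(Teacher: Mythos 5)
Your proposal is correct and follows essentially the same route as the paper's own proof: feasibility of the limit point via the vanishing fixed-point residual and closedness of $\T$, then the value chain $\varphi(\bar{\ome}) \le \ell(\bar{\u},\bar{\ome}) = \lim_{K}\varphi_K(\ome^K) \le \lim_{K}\varphi_K(\ome) = \varphi(\ome)$ for arbitrary $\ome \in \Omega$, with your $\limsup/\liminf$ sandwich being a repackaging of the paper's explicit $\epsilon$-bookkeeping and your direct subsequence argument for claim (2) replacing the paper's citation of \cite{BDA}[Theorem 1]. Your closing caveat is also well taken rather than a defect of your own argument: the paper's one-line step ``since $\T$ is closed on $U$'' invokes exactly the joint closedness of $\mathrm{gph}\,\T$ in $(\u,\ome)$ that you flag, which Assumption~\ref{assum_T}(2) (stated only for each fixed $\ome$) does not literally supply, so you have correctly surfaced an implicit regularity hypothesis shared by the paper's proof.
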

	\begin{proof}
		For any limit point $(\bar{\u},\bar{\ome})$ of the sequence $\{(\u^K(\ome^K), \ome^K) \}$, let $\{(\u^i(\ome^i), \ome^{i})\}$ be a subsequence of $\{(\u^K(\ome^K), \ome^K) \}$ such that $\u^i(\ome^i) \rightarrow \bar{\u} \in U$ and $\ome^{i} \rightarrow \bar{\ome} \in \Omega$.
		It follows from the assumption that for any $\epsilon > 0$, there exists $k(\epsilon) > 0$ such that for any $i > k(\epsilon)$, we have
		\begin{equation*}
			\| \u^i(\ome^i) - \T(\u^i(\ome^i),\ome^i) \| \le \epsilon.
		\end{equation*}
		By letting $i \rightarrow \infty$, and since $\T$ is closed on $U$, we have
		\begin{equation*}
			\| \bar{\u} - \T(\bar{\u}, \bar{\ome}) \| \le \epsilon.
		\end{equation*}
		As $\epsilon$ is arbitrarily chosen, we have $ \bar{\u} =  \T(\bar{\u}, \bar{\ome})$ and thus $\bar{\u} \in \mathtt{Fix}(T(\cdot, \bar{\ome}))$.
		
		Next, as $\ell$ is continuous at $(\bar{\u}, \bar{\ome})$, for any $\epsilon > 0$, there exists $k(\epsilon) > 0$ such that for any $i > k(\epsilon)$, it holds
		\begin{equation*}
			\ell(\bar{\u}, \bar{\ome}) \le \ell(\u^i(\ome^i),\ome^i) + \epsilon.
		\end{equation*}
		Then, we have, for any $i > k(\epsilon)$ and $\ome \in \Omega$,
		\begin{equation}\label{eq1}
			\begin{aligned}
				\varphi(\bar{\ome}) &= \inf_{ \u \in \mathtt{Fix}(T(\cdot, \bar{\ome})) \cap U  } \ell(\u, \bar{\ome}) \\
				&\le \ell(\bar{\u}, \bar{\ome}) \\
				&\le  \ell(\u^i(\ome^i),\ome^i) + \epsilon \\
				&= \varphi_i(\ome^i) + \epsilon \\
				&\le \varphi_i(\ome) + \epsilon.
			\end{aligned}
		\end{equation}
		Taking $i \rightarrow \infty$ and by the assumption, we have for any $\ome \in \Omega$,
		\begin{equation*}
			\begin{aligned}
				\varphi(\bar{\ome}) \le \lim_{i \rightarrow \infty}\varphi_i(\ome) + \epsilon = \varphi(\ome) + \epsilon.
			\end{aligned}
		\end{equation*}
		By letting $\epsilon \rightarrow 0$, we have
		\begin{equation*}
			\varphi(\bar{\ome}) \le \varphi(\ome), \quad \forall \ome \in \Omega,
		\end{equation*}
		which implies $\bar{\ome} \in \arg\min_{\ome \in \Omega} \varphi(\ome)$. 
		The second conclusion can be obtained by the same arguments used in the proof of \cite{BDA}[Theorem 1].
	\end{proof}

	From Proposition~\ref{thm:general}, we can derive the following theorem.
	
	\begin{thm}
		Let $\{\u^k(\ome)\}$ be the sequence generated by Eq.~\eqref{simple_bilevel_alg} with $\mu \in (0,1)$ and $s_k = \frac{s}{k+1}$, $s \in (0, \frac{\lambda_{\min}(\H_{lb})}{L_{\ell}} )$. 
		Then, let $\ome^K \in \mathrm{argmin}_{\ome \in\Omega}\varphi_{K}(\ome)$, and we have
		\begin{itemize}
			\item[(1)] any limit point $(\bar{\u},\bar{\ome})$ of the sequence $\{(\u^K(\ome^K),\ome^K) \}$ is a solution to the problem in Eq.~\eqref{bilevel_fix}, i.e., $\bar{\ome}\in\mathrm{argmin}_{\ome\in\Omega}\varphi(\ome)$ and $\bar{\u} = \T(\bar{\u},\bar{\ome}) $.
			\item[(2)] $\inf_{\ome \in \Omega}\varphi_K(\ome) \rightarrow \inf_{\ome \in \Omega} \varphi(\ome)$ as $K \rightarrow \infty$.
		\end{itemize}
	\end{thm}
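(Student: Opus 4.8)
The plan is to recognize that this theorem is an immediate corollary of Proposition~\ref{thm:general}: once the two hypotheses of that proposition are verified, both of its conclusions are exactly the two conclusions asserted here. Thus the whole argument reduces to checking, from the inner-iteration results already established, that (a) $\{\u^K(\ome)\}\subset U$ together with the uniform residual decay $\sup_{\ome\in\Omega}\|\u^k(\ome)-\T(\u^k(\ome),\ome)\|\to 0$, and (2) the pointwise value convergence $\varphi_k(\ome)\to\varphi(\ome)$ for each fixed $\ome$.

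First I would verify hypothesis (a). The inclusion $\{\u^K(\ome)\}\subset U$ is immediate, since the last line of Eq.~\eqref{simple_bilevel_alg} applies $\mathtt{Proj}_{U,\H_{\ome}}$, whose range is $U$. For the uniform residual bound I would invoke the final assertion of Theorem~\ref{thm:simple_bilevel_convergence}, which supplies a constant $C>0$ \emph{independent of} $\ome$ with $\|\u^k(\ome)-\T(\u^k(\ome),\ome)\|_{\H_{lb}}^2 \le C\sqrt{(1+\ln(1+k))/k^{1/4}}$ for all $\ome\in\Omega$. Since $\H_{lb}\succ 0$, we have $\|\v\|\le \lambda_{\min}(\H_{lb})^{-1/2}\|\v\|_{\H_{lb}}$ with a constant depending only on the fixed matrix $\H_{lb}$, so the bound transfers to the Euclidean norm with the same uniformity in $\ome$. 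As the right-hand side tends to $0$, for every $\epsilon>0$ there is $k(\epsilon)$ beyond which $\sup_{\ome\in\Omega}\|\u^k(\ome)-\T(\u^k(\ome),\ome)\|\le\epsilon$, which is precisely hypothesis (a).

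Next I would verify hypothesis (2). By the definition in Eq.~\eqref{phiK_def}, $\varphi_k(\ome)=\ell(\u^k(\ome),\ome)$, and Theorem~\ref{thm:simple_bilevel_convergence} already gives $\lim_{k\to\infty}\ell(\u^k(\ome),\ome)=\varphi(\ome)$ for every $\ome\in\Omega$; hence $\varphi_k(\ome)\to\varphi(\ome)$ pointwise. With both hypotheses in hand, Proposition~\ref{thm:general} yields that any limit point $(\bar\u,\bar\ome)$ of $\{(\u^K(\ome^K),\ome^K)\}$ satisfies $\bar\ome\in\mathrm{argmin}_{\ome\in\Omega}\varphi(\ome)$ and $\bar\u=\T(\bar\u,\bar\ome)$, which is exactly conclusion (1); and it gives $\inf_{\ome\in\Omega}\varphi_K(\ome)\to\inf_{\ome\in\Omega}\varphi(\ome)$, which is conclusion (2).

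Because the genuinely hard work (the uniform rate and the pointwise value convergence) was already done in Theorem~\ref{thm:simple_bilevel_convergence} and its supporting propositions, there is no substantial new obstacle here; the only point needing a moment's care is the passage from the $\H_{lb}$-weighted norm to the Euclidean norm in hypothesis (a), which is harmless precisely because the norm-equivalence constants depend only on the fixed bounds $\H_{lb}\preceq\H_{\ome}\preceq\H_{ub}$ and not on the particular $\ome$. I would therefore present the proof as a brief assembly: check (a) and (2), then cite Proposition~\ref{thm:general}.
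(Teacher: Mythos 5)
Your proposal is correct and follows essentially the same route as the paper's own proof: both verify the two hypotheses of Proposition~\ref{thm:general} using the uniform residual bound and the pointwise value convergence from Theorem~\ref{thm:simple_bilevel_convergence}, then conclude immediately. Your explicit handling of the passage from the $\|\cdot\|_{\H_{lb}}$-norm to the Euclidean norm via $\lambda_{\min}(\H_{lb})>0$ is a small point the paper leaves implicit, and it is handled correctly.
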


	\begin{proof}
		As shown in Theorem~\ref{thm:simple_bilevel_convergence}, there exists $C > 0$ such that for any $\ome \in \Omega$,
		\[
		\| \u^k(\ome) - \T(\u^k(\ome),\ome) \|_\H^2 \le C\sqrt{\frac{1+\ln(1+k)}{k^{\frac{1}{4}}}}.
		\] 
		Since $\sqrt{\frac{1+\ln(1+k)}{k^{\frac{1}{4}}}} \rightarrow 0$ as $k \rightarrow \infty$ and $\{\u^k(\ome)\} \subset U$ from Eq.~\eqref{simple_bilevel_alg}, condition (a) in Proposition~\ref{thm:general} holds. Next, it follows from Theorem~\ref{thm:simple_bilevel_convergence} that $\varphi_k(\ome) = \ell(\u^k(\ome),\ome) \rightarrow \varphi(\ome)$ as $k \rightarrow \infty$ for any $\ome \in \Omega$ and thus condition (b) in Proposition~\ref{thm:general} is satisfied and the conclusion follows from Proposition~\ref{thm:general} immediately.
	\end{proof}

	\subsection{Stationary Analysis}
	
	Here we provide the convergence analysis of our algorithm with respect to stationary points,
	i.e., for any limit point $\bar{\ome}$ of the sequence $\{\ome^K\}$ satisfies $\nabla \varphi(\bar{\ome}) = 0$, where $\varphi(\ome)$ is defined in Eq.~\eqref{eq:appe phi_def}.
	
	We consider the special case where $U = \R^n$, and 
	$\mathtt{Fix}(\T(\cdot,\ome))$ has a unique fixed point, i.e. 
	the solution set $\S=\mathtt{Fix}(\T(\cdot,\ome))$ is a singleton, 
	and we denote the unique solution by $\u^*(\ome)$. 
	Our analysis is partly inspired by~\cite{BDA} and~\cite{grazzi2020iteration}.

	\begin{assumption}\label{assum_stationary ell}
		$\Omega$ is a compact set and $U = \R^n$. 
		$\mathtt{Fix}(\T(\cdot,\ome))$ is nonempty
		for any $\ome \in \Omega$. 
		$\ell(\u,\ome)$ is twice continuously differentiable on $\R^n \times \Omega$. 
		For any $\ome \in \Omega$, $\ell(\cdot,\ome) : \R^n \rightarrow \R$ is $L_\ell$-smooth, convex and bounded below by $M_0$.
	\end{assumption}

	For $\D(\cdot,\ome)$ we request a stronger assumption than Assumption~\ref{assum_T}
	that $\D(\cdot,\ome)$ is contractive with respect to $\| \cdot \|_{\H_{\ome}}$ throughout this part, to guarantee the uniqueness of the fixed point. 
	\begin{assumption} \label{assum_stationary D}
		There exist $\H_{ub} \succeq \H_{lb} \succ 0$, 
		such that for each $\ome \in \Omega$, there exists $\H_{ub} \succeq \H_{\ome} \succeq \H_{lb}$ such that
		\begin{itemize}
			\item[(1)] $\D(\cdot,\ome)$ is contractive with respect to $\| \cdot \|_{\H_{\ome}}$, i.e., there exists $\bar{\rho} \in (0,1)$, such that for all $(\u_1,\u_2) \in \R^n \times \R^n$,
			\begin{equation}
				\|\D(\u_1,\ome) - \D(\u_2,\ome) \|_{\H_{\ome}} \le \bar{\rho} \| \u_1 - \u_2\|_{\H_{\ome}}.
			\end{equation}
			
			\item[(2)] $\D(\cdot,\ome)$ is closed, i.e.,
			\[
			\mathrm{gph} \,\D(\cdot,\ome) := \{(\u,\v) \in \R^n \times \R^n ~|~ \v = \D(\u,\ome)\}
			\]
			is closed.
		\end{itemize}
	\end{assumption}

	In this part, 
	for succinctness we will write 
	$\H$ instead of $\H_{\ome}$, 
	and denote $\hat{\S}(\ome):= \mathrm{argmin}_{\u \in \mathtt{Fix}(\T(\cdot, \ome)) \cap U } \ell (\u,\ome)$. 
	We begin with the following lemma.

	\begin{lemma}\label{lem akbk}\cite{BDA}[Lemma 5]
		Let $\{a_k\}$ and $\{b_k\}$ be sequences of non-negative real numbers. 
		Assume that $b_k \rightarrow 0$ and there exist $\rho \in (0,1)$, and $n_0 \in \mathbb{N}$, such that $a_{k+1} \le \rho a_k + b_k, \ \forall k \ge n_0$.	
		Then $a_k \rightarrow 0$ as $k \rightarrow \infty$.
	\end{lemma}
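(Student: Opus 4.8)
The plan is to treat this as the standard ``contraction-plus-vanishing-perturbation'' lemma: a recursion $a_{k+1} \le \rho a_k + b_k$ with $\rho \in (0,1)$ behaves, up to the error term $b_k$, like a geometric sequence, so once the perturbation $b_k$ dies out the iterates are forced to collapse to zero. I would carry this out through a $\limsup$ argument, which avoids any explicit unrolling of the recursion.

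First I would establish that $\{a_k\}$ is bounded. Since $b_k \to 0$, the sequence $\{b_k\}$ is bounded, say $b_k \le B$ for all $k$. Then a one-line induction starting from $k = n_0$ shows $a_k \le C := \max\{a_{n_0}, B/(1-\rho)\}$ for all $k \ge n_0$: indeed, if $a_k \le C$ then $a_{k+1} \le \rho C + B \le C$, where the last inequality holds because $C \ge B/(1-\rho)$. Consequently $L := \limsup_{k \to \infty} a_k$ is finite, and it is nonnegative since each $a_k \ge 0$.

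Next I would take $\limsup$ on both sides of $a_{k+1} \le \rho a_k + b_k$ (valid for $k \ge n_0$). Using $\limsup_k a_{k+1} = \limsup_k a_k = L$, the subadditivity $\limsup_k (x_k + y_k) \le \limsup_k x_k + \limsup_k y_k$, the positive homogeneity $\limsup_k (\rho a_k) = \rho L$ (as $\rho > 0$), and $\limsup_k b_k = 0$, I obtain $L \le \rho L$, that is $(1-\rho) L \le 0$. Since $\rho < 1$ and $L \ge 0$, this forces $L = 0$, which together with $a_k \ge 0$ gives $\lim_{k \to \infty} a_k = 0$.

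The only point that genuinely requires care is the boundedness step: without first ruling out $L = +\infty$, the inequality $L \le \rho L$ is vacuous, so I would not skip it. An equivalent but more computational route unrolls the recursion to $a_{k+1} \le \rho^{k+1-n_0} a_{n_0} + \sum_{j=n_0}^{k} \rho^{k-j} b_j$ and shows the convolution term vanishes by splitting the sum at an index where $b_j$ first becomes small, bounding the tail by a geometric series times $\sup_{j \ge M} b_j$ and the head by $\rho^{k-M}$ times a fixed finite quantity; this makes the decay rate explicit should it ever be needed. Since the statement is quoted as \cite{BDA}[Lemma 5], it may instead simply be invoked without proof.
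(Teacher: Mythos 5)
Your proof is correct. Note that the paper itself supplies no argument for this statement: it is imported wholesale by the citation \cite{BDA}[Lemma~5], so there is no in-paper proof to compare against --- your closing remark that the lemma ``may simply be invoked without proof'' is in fact exactly what the authors do. Your $\limsup$ route is a clean, self-contained verification: the boundedness step is handled properly (the induction $a_{k+1} \le \rho C + B \le C$ with $C = \max\{a_{n_0}, B/(1-\rho)\}$ is exactly what is needed to rule out $L = +\infty$, without which $L \le \rho L$ would indeed be vacuous), and the passage to $(1-\rho)L \le 0$ via subadditivity and positive homogeneity of $\limsup$ is legitimate since $L$ is finite and nonnegative. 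The alternative you sketch --- unrolling to $a_{k+1} \le \rho^{k+1-n_0} a_{n_0} + \sum_{j=n_0}^{k} \rho^{k-j} b_j$ and splitting the convolution sum at a threshold index --- is the more common textbook proof (and the style of argument in the cited source); it buys an explicit decay estimate, whereas your $\limsup$ argument is shorter and avoids bookkeeping. Either is fully adequate for how the lemma is used in the paper, namely to deduce uniform convergence of $\u^k(\ome)$ and of $\partial \u^k(\ome)/\partial \ome$ in Proposition A.3, where only the qualitative conclusion $a_k \to 0$ is needed.
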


	\begin{proposition}\label{prop1}
		Suppose Assumption~\ref{assum_stationary ell} and Assumption~\ref{assum_stationary D} are satisfied, 
		$\frac{\partial}{\partial \u} \T(\u,\ome)$ and $\frac{\partial}{\partial \ome} \T(\u,\ome)$ are Lipschitz continuous with respect to $\u$,
		and $\hat{\S}(\ome)$ is nonempty for all $\ome \in \Omega$. 
		Let $\{\u^k(\ome)\}$ be the sequence generated by Eq.~\eqref{simple_bilevel_alg} with $\mu \in (0,1)$ and $s_k = \frac{s}{k+1}$, $s \in (0, \frac{\lambda_{\min}(\H_{lb})}{L_{\ell}} )$.
		Then we have 
		\begin{equation*}
			\sup_{\ome \in \Omega} \left\| \nabla \varphi_k(\ome) - \nabla \varphi(\ome) \right\|_\H \rightarrow 0,\ \text{as}\ k \rightarrow \infty.
		\end{equation*}
		
	\end{proposition}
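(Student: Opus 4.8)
The plan is to reduce the statement to the chain-rule expressions for $\nabla\varphi_k$ and $\nabla\varphi$ and then control, uniformly over the compact set $\Omega$, the two ingredients: the iterate $\u^k(\ome)$ and its Jacobian $\partial\u^k/\partial\ome$. First I would record that under the contractive Assumption~\ref{assum_stationary D} the map $\T(\cdot,\ome)$ is a $\rho_\T$-contraction with $\rho_\T:=(1-\alpha)+\alpha\bar{\rho}<1$, so $\mathtt{Fix}(\T(\cdot,\ome))$ is the singleton $\{\u^*(\ome)\}$, $\hat{\S}(\ome)=\{\u^*(\ome)\}$, and $\varphi(\ome)=\ell(\u^*(\ome),\ome)$. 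Since $\I-\frac{\partial}{\partial\u}\T(\u^*,\ome)$ is invertible (its $\H_\ome$-operator norm is at most $\rho_\T<1$) and $\T$ is $C^1$, the implicit function theorem gives that $\u^*(\cdot)$ is $C^1$ with $\frac{\partial\u^*}{\partial\ome}=(\I-\frac{\partial}{\partial\u}\T)^{-1}\frac{\partial}{\partial\ome}\T$ evaluated at $(\u^*(\ome),\ome)$. Hence $\varphi$ is differentiable and, by the chain rule, $\nabla\varphi(\ome)=\frac{\partial}{\partial\ome}\ell(\u^*,\ome)+(\frac{\partial\u^*}{\partial\ome})^\top\frac{\partial}{\partial\u}\ell(\u^*,\ome)$, with the analogous formula for $\nabla\varphi_k$ in terms of $\u^k(\ome)$ and $Z^k(\ome):=\frac{\partial\u^k}{\partial\ome}$.

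Second, I would establish the uniform convergence $\sup_{\ome\in\Omega}\|\u^k(\ome)-\u^*(\ome)\|_{\H}\to0$. Because $U=\R^n$ the projection is the identity, so the iteration reads $\u^k=\mu(\I-s_k\H^{-1}\frac{\partial}{\partial\u}\ell)(\u^{k-1})+(1-\mu)\T(\u^{k-1},\ome)$. Subtracting the fixed-point identity $\u^*=\mu\u^*+(1-\mu)\T(\u^*,\ome)$ and using that $\I-s_k\H^{-1}\frac{\partial}{\partial\u}\ell$ is $\H$-nonexpansive (Lemma~\ref{lem_bounded}) while $\T$ is a $\rho_\T$-contraction, one obtains $\|\u^k-\u^*\|_\H\le\rho\|\u^{k-1}-\u^*\|_\H+\mu s_kC$, with $\rho:=\mu+(1-\mu)\rho_\T<1$ and $C:=\sup_{\ome}\|\H^{-1}\frac{\partial}{\partial\u}\ell(\u^*(\ome),\ome)\|_\H<\infty$ (finite by compactness). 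Lemma~\ref{lem akbk} then yields $\|\u^k-\u^*\|_\H\to0$, uniformly in $\ome$ since $\rho$ and $C$ are.

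The heart of the argument, and the step I expect to be the main obstacle, is the corresponding uniform convergence $Z^k\to Z^*:=\frac{\partial\u^*}{\partial\ome}$. Differentiating the iteration in $\ome$ — this is where the assumed Lipschitz continuity of $\frac{\partial}{\partial\u}\T$ and $\frac{\partial}{\partial\ome}\T$ in $\u$, together with $\ell\in C^2$ and smoothness of $\ome\mapsto\H_\ome$, guarantee the derivatives exist and vary continuously — produces a linear recursion $Z^k=A_kZ^{k-1}+B_k$, where $A_k=\mu(\I-s_k\H^{-1}\nabla^2_{\u\u}\ell(\u^{k-1}))+(1-\mu)\frac{\partial}{\partial\u}\T(\u^{k-1},\ome)$ and $B_k=(1-\mu)\frac{\partial}{\partial\ome}\T(\u^{k-1},\ome)$ plus terms carrying an $s_k$ factor (from differentiating $\H_\ome^{-1}$ and the Hessian coupling). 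The two structural facts I would isolate are that $\|A_k\|_\H\le\mu\cdot1+(1-\mu)\rho_\T=\rho<1$ uniformly — using that $\I-s_k\H^{-1}\nabla^2_{\u\u}\ell$ is $\H$-nonexpansive for $s_k\le\lambda_{\min}(\H)/L_\ell$ (the Jacobian version of Lemma~\ref{lem_bounded}) and that $\frac{\partial}{\partial\u}\T$ has $\H$-norm at most $\rho_\T$ — and that $Z^*$ is the unique solution of the limiting relation $Z^*=A_\infty Z^*+B_\infty$ with $A_\infty=\mu\I+(1-\mu)\frac{\partial}{\partial\u}\T(\u^*)$ and $B_\infty=(1-\mu)\frac{\partial}{\partial\ome}\T(\u^*)$, which one checks coincides with the implicit-function formula. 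Writing $W^k:=Z^k-Z^*$ and subtracting gives
\[
W^k=A_kW^{k-1}+(A_k-A_\infty)Z^*+(B_k-B_\infty),
\]
so $\|W^k\|_\H\le\rho\|W^{k-1}\|_\H+c_k$ with $c_k:=\|(A_k-A_\infty)Z^*\|_\H+\|B_k-B_\infty\|_\H$. Since $s_k\to0$ and $\u^k\to\u^*$ uniformly, and the Lipschitz-in-$\u$ hypotheses force $\frac{\partial}{\partial\u}\T(\u^{k-1},\cdot)\to\frac{\partial}{\partial\u}\T(\u^*,\cdot)$ and $\frac{\partial}{\partial\ome}\T(\u^{k-1},\cdot)\to\frac{\partial}{\partial\ome}\T(\u^*,\cdot)$ uniformly, while $Z^*$ is bounded via the uniformly bounded inverse $(\I-\frac{\partial}{\partial\u}\T)^{-1}$, the coefficients converge uniformly and $c_k\to0$; Lemma~\ref{lem akbk} then gives $\sup_{\ome}\|W^k\|_\H\to0$. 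The delicate point is precisely verifying that $A_k$ remains a uniform contraction despite the $s_k$-perturbation and that the $s_k$-weighted Hessian and $\H_\ome^{-1}$-derivative terms in $B_k$ are uniformly negligible — this is where the schedule $s_k=s/(k+1)$ and the uniform spectral bounds $\H_{ub}\succeq\H_\ome\succeq\H_{lb}\succ0$ do the work.

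Finally, combining the two uniform limits through the chain-rule formulas and using the uniform continuity of $\frac{\partial}{\partial\u}\ell$ and $\frac{\partial}{\partial\ome}\ell$ on the relevant compacta together with the uniform boundedness of $Z^k$ (immediate from $\|Z^k\|_\H\le\rho\|Z^{k-1}\|_\H+\sup_k\|B_k\|_\H$) and of $Z^*$, I would conclude $\sup_{\ome\in\Omega}\|\nabla\varphi_k(\ome)-\nabla\varphi(\ome)\|_\H\to0$, which is the claim.
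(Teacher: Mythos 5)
Your proposal is correct and follows essentially the same route as the paper's own proof: first uniform convergence $\sup_{\ome\in\Omega}\|\u^k(\ome)-\u^*(\ome)\|_{\H}\to 0$ from the contraction factor $\mu+(1-\mu)\rho<1$ plus vanishing $s_k$-terms and the paper's Lemma~\ref{lem akbk}, then the identical linear recursion for the Jacobian $\partial\u^k/\partial\ome$ obtained by differentiating the update and comparing with the implicit-function identity at $\u^*(\ome)$, closed again by Lemma~\ref{lem akbk}, and finally the same three-term chain-rule estimate for $\nabla\varphi_k-\nabla\varphi$. The only deviations are minor refinements rather than a different argument: you fold the Hessian step into the contraction constant via $\H$-nonexpansiveness of $\I-s_k\H^{-1}\nabla^2_{\u\u}\ell$ (the paper instead carries a vanishing additive $\mu s_{k+1}L_\ell/\lambda_{\min}(\H)$ in the coefficient), and you explicitly track the $s_k$-weighted terms coming from $\partial_\ome(\H_{\ome}^{-1})$ and the cross partial $\nabla_{\u\ome}\ell$ — terms the paper's Eq.~\eqref{eq:partial uk} silently omits — at the price of an extra smoothness assumption on $\ome\mapsto\H_{\ome}$ that the paper never states.
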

	\begin{proof}
		According to the update scheme of $\u^{k+1}$ given in Eq.~\eqref{simple_bilevel_alg},
		since $U = \R^n$, we have for all $\ome \in \Omega$,
		\begin{equation}\label{eq:uk+1}
			\u^{k+1}(\ome) =  \mu \v^{k+1}_u(\ome) + (1-\mu) \v^{k+1}_l(\ome)
			= \mu \left( \u^k(\ome) - s_{k+1} \H^{-1}\frac{\partial }{\partial \u}\ell(\u^k(\ome),\ome) \right)
			+ (1-\mu) \T(\u^k(\ome),\ome).
		\end{equation}
		As $\u^*(\ome)$ is the fixed point of $\T$, we have $\T(\u^*(\ome),\ome) = \u^*(\ome)$.
		Thus 
		\begin{equation*}
			\u^{k+1}(\ome) - \u^*(\ome) = \mu \left( \u^k(\ome) - \u^*(\ome) \right) 
			+ (1-\mu) \left( \T(\u^k(\ome),\ome) - \T(\u^*(\ome),\ome) \right) 
			- \mu s_{k+1} \H^{-1}\frac{\partial }{\partial \u}\ell(\u^k(\ome),\ome).
		\end{equation*}
		Then
		\begin{equation*}
			\begin{aligned}
				& \left\| \u^{k+1}(\ome) - \u^*(\ome) \right\|_\H \\
				& \le \mu \left\| \u^k(\ome) - \u^*(\ome) \right\|_\H
				+ (1-\mu) \left\| \T(\u^k(\ome),\ome) - \T(\u^*(\ome),\ome) \right\|_\H
				+ \mu s_{k+1} \left\| \H^{-1}\frac{\partial }{\partial \u}\ell(\u^k(\ome),\ome) \right\|_\H. 
			\end{aligned}	
		\end{equation*}
		From the contraction of $\D(\cdot,\ome)$, we have 
		$\T(\cdot,\ome)$ is contractive with respect to $\| \cdot \|_{\H}$, i.e., for all $(\u_1,\u_2) \in \R^n \times \R^n$,
		$\|\T(\u_1,\ome) - \T(\u_2,\ome) \|_{\H} \le \rho \| \u_1 - \u_2\|_{\H}$,
		where $\rho \in (0,1)$.
		The $L_\ell$-smoothness of $\ell$ yields that
		\begin{equation*}
			\begin{aligned}
				\left\| \H^{-1}\frac{\partial }{\partial \u}\ell(\u^k(\ome),\ome) \right\|_\H
				&\le \frac{1}{\sqrt{\lambda_{\min}(\H)}} \left\| \frac{\partial }{\partial \u}\ell(\u^k(\ome),\ome) \right\|
				\le \frac{1}{\sqrt{\lambda_{\min}(\H)}} \left( \left\| \frac{\partial }{\partial \u}\ell(\u^*(\ome),\ome) \right\| + L_\ell \left\| \u^k(\ome) - \u^*(\ome) \right\| \right), \\
			\end{aligned}
		\end{equation*}
		and $\| \frac{\partial }{\partial \u}\ell(\u^*(\ome),\ome) \|$ is bounded for all $\ome \in \Omega$.
		Denote $M_\ell^* :=\sup_{\ome \in \Omega} \| \frac{\partial }{\partial \u}\ell(\u^*(\ome),\ome) \|$,
		and thus
		\begin{equation} \label{eq:H}
			\begin{aligned}
				\left\| \H^{-1}\frac{\partial }{\partial \u}\ell(\u^k(\ome),\ome) \right\|_\H
				&\le \frac{1}{\sqrt{\lambda_{\min}(\H)}} \left( M_\ell^* + L_\ell \left\| \u^k(\ome) - \u^*(\ome) \right\|  \right).
			\end{aligned}
		\end{equation}
		Therefore, 
		\begin{equation*}
			\begin{aligned}
				& \left\| \u^{k+1}(\ome) - \u^*(\ome) \right\|_\H \\
				& \le \left( \mu + (1-\mu) \rho \right) \left\| \u^k(\ome) - \u^*(\ome) \right\|_\H 
				+ \frac{\mu s_{k+1}}{\sqrt{\lambda_{\min}(\H)}} \left( M_\ell^* + L_\ell \left\| \u^k(\ome) - \u^*(\ome) \right\|  \right) \\
				& \le \left( \mu + (1-\mu) \rho + \frac{\mu s_{k+1} L_\ell}{\lambda_{\min}(\H)} \right) \left\| \u^k(\ome) - \u^*(\ome) \right\|_\H + \frac{\mu s_{k+1}M_\ell^*}{\sqrt{\lambda_{\min}(\H)}} . 
			\end{aligned}	
		\end{equation*}
		As $s_{k+1} \rightarrow 0$ and $\rho \in (0,1)$, there exists $n_0 \in \mathbb{N}$, such that $\mu + (1-\mu) \rho + \frac{\mu s_{k+1} L_\ell}{\lambda_{\min}(\H)} \in (0,1), \forall k>n_0$.
		Then we obtain from Lemma~\ref{lem akbk} that 
		\begin{equation} \label{eq:sup uk+1 - u*}
			\sup_{\ome \in \Omega} \| \u^k(\ome) - \u^*(\ome) \|_\H \rightarrow 0,\ \text{as}\ k \rightarrow \infty.
		\end{equation}

		By taking derivative with respect to $\ome$ on both sides of Eq.~\eqref{eq:uk+1}, we have 
		\begin{equation}\label{eq:partial uk}
			\frac{\partial \u^{k+1}(\ome)}{\partial \ome}
			= \mu \left( \frac{\partial \u^k(\ome)}{\partial \ome} - s_{k+1} \H^{-1} \nabla_{\u\u} \ell(\u^k(\ome),\ome) \frac{\partial \u^k(\ome)}{\partial \ome} \right)
			+ (1-\mu) \left( 
			\frac{\partial \T(\u^k(\ome),\ome)}{\partial \u}  \frac{\partial \u^k(\ome)}{\partial \ome} 
			+ \frac{\partial \T(\u^k(\ome),\ome)}{\partial \ome} \right)  .
		\end{equation}
		From $\T(\u^*(\ome),\ome) = \u^*(\ome)$, we obtain 
		\begin{equation} \label{eq:partial u*}
			\frac{\partial \u^*(\ome)}{\partial \ome}
			= \frac{\partial \T(\u^*(\ome),\ome)}{\partial \u}\frac{\partial \u^*(\ome)}{\partial \ome} + \frac{\partial \T(\u^*(\ome),\ome)}{\partial \ome}.
		\end{equation}
		Then combining Eq.~\eqref{eq:partial uk} and Eq.~\eqref{eq:partial u*} derives 
		\begin{equation*}
			\begin{aligned}
				\frac{\partial \u^{k+1}(\ome)}{\partial \ome} - \frac{\partial \u^*(\ome)}{\partial \ome}
				= & \mu \left( \frac{\partial \u^k(\ome)}{\partial \ome} - \frac{\partial \u^*(\ome)}{\partial \ome} \right)
				+ (1-\mu) \frac{\partial \T(\u^k(\ome),\ome)}{\partial \u} \left( \frac{\partial \u^k(\ome)}{\partial \ome} - \frac{\partial \u^*(\ome)}{\partial \ome} \right) \\
				& + (1-\mu)\left( \frac{\partial \T(\u^k(\ome),\ome)}{\partial \u} - \frac{\partial \T(\u^*(\ome),\ome)}{\partial \u} \right) \frac{\partial \u^*(\ome)}{\partial \ome} \\
				& + (1-\mu) \left( \frac{\partial \T(\u^k(\ome),\ome)}{\partial \ome} - \frac{\partial \T(\u^*(\ome),\ome)}{\partial \ome} \right) \\
				& - \mu s_{k+1} \H^{-1} \nabla_{\u\u} \ell(\u^k(\ome),\ome) \left( \frac{\partial \u^k(\ome)}{\partial \ome} - \frac{\partial \u^*(\ome)}{\partial \ome} \right) \\
				& - \mu s_{k+1} \H^{-1} \nabla_{\u\u} \ell(\u^k(\ome),\ome) \frac{\partial \u^*(\ome)}{\partial \ome}.
			\end{aligned}		
		\end{equation*}
		
		Hence, 
		\begin{equation} \label{eq:partial uk+1 - partial u*}
			\begin{aligned}
				\sup_{\ome \in \Omega} \left\| \frac{\partial \u^{k+1}(\ome)}{\partial \ome} - \frac{\partial \u^*(\ome)}{\partial \ome} \right\|_\H
				\le & \left( \mu 
				+ (1-\mu) \sup_{\ome \in \Omega}\left\| \frac{\partial \T(\u^k(\ome),\ome)}{\partial \u} \right\|_\H 
				+ \mu s_{k+1} \sup_{\ome \in \Omega}\left\| \H^{-1} \nabla_{\u\u} \ell(\u^k(\ome),\ome) \right\|_\H \right) \\
				& \times \sup_{\ome \in \Omega}\left\| \frac{\partial \u^k(\ome)}{\partial \ome} - \frac{\partial \u^*(\ome)}{\partial \ome} \right\|_\H \\		
				& + (1-\mu) \sup_{\ome \in \Omega}\left\| \frac{\partial \T(\u^*(\ome),\ome)}{\partial \u} - \frac{\partial \T(\u^k(\ome),\ome)}{\partial \u} \right\|_\H 
				\sup_{\ome \in \Omega} \left\| \frac{\partial \u^*(\ome)}{\partial \ome} \right\|_\H \\
				& + (1-\mu) \sup_{\ome \in \Omega}\left\| \frac{\partial \T(\u^*(\ome),\ome)}{\partial \ome} - \frac{\partial \T(\u^k(\ome),\ome)}{\partial \ome} \right\|_\H \\
				& + \mu s_{k+1} \sup_{\ome \in \Omega}\left\| \H^{-1} \nabla_{\u\u} \ell(\u^k(\ome),\ome) \right\|_\H 
				\sup_{\ome \in \Omega} \left\| \frac{\partial \u^*(\ome)}{\partial \ome} \right\|_\H.
			\end{aligned}		
		\end{equation}
		Next, before using Lemma~\ref{lem akbk}, we first show that the last three terms on the right hand side of the above inequality converge to 0 as $k\rightarrow \infty$.
		From the Lipschitz continuity of $\frac{\partial \T}{\partial \u}$ and $\frac{\partial \T}{\partial \ome}$, 
		since $\u^{k+1}(\ome)$ uniformly converges to $\u^*(\ome)$ with respect to $\| \cdot \|_\H$ as $k \rightarrow \infty$ as proved in Eq.~\eqref{eq:sup uk+1 - u*},
		we have $\sup_{\ome \in \Omega}\left\| \frac{\partial \T(\u^*(\ome),\ome)}{\partial \u} - \frac{\partial \T(\u^k(\ome),\ome)}{\partial \u} \right\|_\H \rightarrow 0$
		and $\sup_{\ome \in \Omega}\left\| \frac{\partial \T(\u^*(\ome),\ome)}{\partial \ome} - \frac{\partial \T(\u^k(\ome),\ome)}{\partial \ome} \right\|_\H \rightarrow 0$
		as $k \rightarrow \infty$.
		From Eq.~\eqref{eq:partial u*}, $\left( \I - \frac{\partial \T(\u^*(\ome),\ome)}{\partial \u} \right) \frac{\partial \u^*(\ome)}{\partial \ome} = \frac{\partial \T(\u^*(\ome),\ome)}{\partial \ome}$,
		i.e., $\frac{\partial \u^*(\ome)}{\partial \ome} = \left( \I - \frac{\partial \T(\u^*(\ome),\ome)}{\partial \u} \right)^{-1} \frac{\partial \T(\u^*(\ome),\ome)}{\partial \ome}$.
		Along with the Lipschitz continuity of $\frac{\partial \T}{\partial \u}$ and $\frac{\partial \T}{\partial \ome}$, 
		we have  $\frac{\partial \u^*(\ome)}{\partial \ome}$ is continuous on the compact set $\Omega$,
		and thus 
		\begin{equation} \label{eq:sup partial u*}
			\sup_{\ome \in \Omega} \left\| \frac{\partial \u^*(\ome)}{\partial \ome} \right\|_\H < + \infty.
		\end{equation}
		From the twice continuous differentiability of $\ell$, 
		it holds that
		$\sup_{\ome \in \Omega} \left\| \H^{-1} \nabla_{\u\u} \ell(\u^k(\ome),\ome) \right\|_\H 
		\le \sup_{\ome \in \Omega} \frac{1}{\sqrt{\lambda_{\min}(\H)}} \left\| \nabla_{\u\u} \ell(\u^k(\ome),\ome) \right\| < + \infty$.
		Then from $s_{k+1} \rightarrow 0$ as $k \rightarrow \infty$,
		we have $\mu s_{k+1} \sup_{\ome \in \Omega}\left\| \H^{-1} \nabla_{\u\u} \ell(\u^k(\ome),\ome) \right\|_\H 
		\sup_{\ome \in \Omega} \left\| \frac{\partial \u^*(\ome)}{\partial \ome} \right\|_\H \rightarrow 0$ as $k \rightarrow \infty$.
		Thus, the last three terms in Eq.\eqref{eq:partial uk+1 - partial u*} converge to 0 as $k \rightarrow \infty$.
		
		As for the coefficient $\mu 
		+ (1-\mu) \sup_{\ome \in \Omega}\left\| \frac{\partial \T(\u^k(\ome),\ome)}{\partial \u} \right\|_\H 
		+ \mu s_{k+1} \sup_{\ome \in \Omega}\left\| \H^{-1} \nabla_{\u\u} \ell(\u^k(\ome),\ome) \right\|_\H$ in Eq.\eqref{eq:partial uk+1 - partial u*},
		from the contraction of $\T$ and the Lipschitz continuity of $\frac{\partial \T}{\partial \u}$, we have $\sup_{\ome \in \Omega} \left\| \frac{\partial \T(\u^k(\ome),\ome)}{\partial \u} \right\|_\H \le \rho <1$.
		Since $\sup_{\ome \in \Omega}\left\| \H^{-1} \nabla_{\u\u} \ell(\u^k(\ome),\ome) \right\|_\H < + \infty$ and $s_{k+1} \rightarrow 0$ as $k \rightarrow \infty$,
		i.e., $\mu s_{k+1} \sup_{\ome \in \Omega}\left\| \H^{-1} \nabla_{\u\u} \ell(\u^k(\ome),\ome) \right\|_\H \rightarrow 0$ as $k \rightarrow \infty$,
		there exists $n_0 \in \mathbb{N}$, such that 
		$$\mu 
		+ (1-\mu) \sup_{\ome \in \Omega}\left\| \frac{\partial \T(\u^k(\ome),\ome)}{\partial \u} \right\|_\H 
		+ \mu s_{k+1} \sup_{\ome \in \Omega}\left\| \H^{-1} \nabla_{\u\u} \ell(\u^k(\ome),\ome) \right\|_\H \in (0,1), \forall k>n_0.$$
		Therefore, by applying Lemma~\ref{lem akbk} on Eq.~\eqref{eq:partial uk+1 - partial u*}, we obtain 
		\begin{equation} \label{eq:sup partial uk -partial u*}
			\sup_{\ome \in \Omega} \left\| \frac{\partial \u^k(\ome)}{\partial \ome} - \frac{\partial \u^*(\ome)}{\partial \ome} \right\|_\H \rightarrow 0,\ \text{as}\ k \rightarrow \infty.
		\end{equation}

		Finally, we will prove 
		\begin{equation*}
			\sup_{\ome \in \Omega} \left\| \nabla \varphi_k(\ome) - \nabla \varphi(\ome) \right\|_\H \rightarrow 0,\ \text{as}\ k \rightarrow \infty.
		\end{equation*}
		From the definition of $\varphi_k(\ome) = \ell(\u^k(\ome),\ome)$ and $\varphi(\ome)= \ell(\u^*(\ome),\ome)$, we have
		$$
		\nabla \varphi_k(\ome) = \frac{\partial \ell(\u^k(\ome),\ome) }{\partial \u} \frac{\partial \u^k(\ome)}{\partial \ome}
		+ \frac{\partial \ell(\u^k(\ome),\ome)}{\partial \ome},
		$$
		$$
		\nabla \varphi(\ome) = \frac{\partial \ell(\u^*(\ome),\ome) }{\partial \u} \frac{\partial \u^*(\ome)}{\partial \ome}
		+ \frac{\partial \ell(\u^*(\ome),\ome)}{\partial \ome}.
		$$
		Thus, 
		\begin{equation*}
			\begin{aligned}
				\nabla \varphi_k(\ome) - \nabla \varphi(\ome) 
				= &
				\frac{\partial \ell(\u^k(\ome),\ome) }{\partial \u} 
				\left( \frac{\partial \u^k(\ome)}{\partial \ome} - \frac{\partial \u^*(\ome)}{\partial \ome} \right) \\
				& + \left( \frac{\partial \ell(\u^k(\ome),\ome) }{\partial \u} - \frac{\partial \ell(\u^*(\ome),\ome) }{\partial \u} \right) \frac{\partial \u^*(\ome)}{\partial \ome}
				+ \left( \frac{\partial \ell(\u^k(\ome),\ome)}{\partial \ome} - \frac{\partial \ell(\u^*(\ome),\ome)}{\partial \ome} \right).
			\end{aligned}		
		\end{equation*}
		Then we have the following estimation
		\begin{equation} \label{eq:sup nabla varphi}
			\begin{aligned}
				\sup_{\ome \in \Omega} \| \nabla \varphi_k(\ome) - \nabla \varphi(\ome) \|_\H
				\le &
				\sup_{\ome \in \Omega} \left\| \frac{\partial \ell(\u^k(\ome),\ome) }{\partial \u} \right\|_\H
				\sup_{\ome \in \Omega} \left\| \frac{\partial \u^k(\ome)}{\partial \ome} - \frac{\partial \u^*(\ome)}{\partial \ome} \right\|_\H \\
				& + \sup_{\ome \in \Omega} \left\| \frac{\partial \ell(\u^k(\ome),\ome) }{\partial \u} - \frac{\partial \ell(\u^*(\ome),\ome) }{\partial \u} \right\|_\H 
				\sup_{\ome \in \Omega} \left\| \frac{\partial \u^*(\ome)}{\partial \ome} \right\|_\H \\
				& + \sup_{\ome \in \Omega} \left\| \frac{\partial \ell(\u^k(\ome),\ome)}{\partial \ome} - \frac{\partial \ell(\u^*(\ome),\ome)}{\partial \ome} \right\|_\H.
			\end{aligned}		
		\end{equation}
		We have obtained $\sup_{\ome \in \Omega} \left\| \frac{\partial \u^k(\ome)}{\partial \ome} - \frac{\partial \u^*(\ome)}{\partial \ome} \right\|_\H \rightarrow 0,\ \text{as}\ k \rightarrow \infty$ in Eq.~\eqref{eq:sup partial uk -partial u*},
		and $\sup_{\ome \in \Omega} \left\| \frac{\partial \u^*(\ome)}{\partial \ome} \right\|_\H < + \infty$ in Eq.~\eqref{eq:sup partial u*}.
		Then from the $L_\ell$-smoothness of $\ell(\cdot,\ome)$ and 
		the twice continuous differentiability of $\ell$ on $\R^n \times \Omega$,
		where $\Omega$ is compact,
		we have $\sup_{\ome \in \Omega} \left\| \frac{\partial \ell(\u^k(\ome),\ome) }{\partial \u} - \frac{\partial \ell(\u^*(\ome),\ome) }{\partial \u} \right\|_\H$
		and $\sup_{\ome \in \Omega} \left\| \frac{\partial \ell(\u^k(\ome),\ome) }{\partial \ome} - \frac{\partial \ell(\u^*(\ome),\ome) }{\partial \ome} \right\|_\H$
		converge to 0 as $k \rightarrow \infty$.
		Also, it holds that $\sup_{\ome \in \Omega} \left\| \frac{\partial \ell(\u^k(\ome),\ome) }{\partial \u} \right\|_\H < + \infty$ 
		from Eq.~\eqref{eq:H}.
		Therefore, the three terms on the right hand side of Eq.~\eqref{eq:sup nabla varphi} all converge to 0 as $k \rightarrow \infty$,
		which derives
		\begin{equation*}
			\sup_{\ome \in \Omega} \| \nabla \varphi_k(\ome) - \nabla \varphi(\ome) \|_\H \rightarrow 0,\ \text{as}\ k \rightarrow \infty.
		\end{equation*}
	\end{proof}

	\begin{thm}
		Suppose Assumption~\ref{assum_stationary ell} and Assumption~\ref{assum_stationary D} are satisfied, 
		$\frac{\partial}{\partial \u} \T(\u,\ome)$ and $\frac{\partial}{\partial \ome} \T(\u,\ome)$ are Lipschitz continuous with respect to $\u$,
		and $\hat{\S}(\ome)$ is nonempty for all $\ome \in \Omega$. 
		Let $\{\u^k(\ome)\}$ be the sequence generated by Eq.~\eqref{simple_bilevel_alg} with $\mu \in (0,1)$ and $s_k = \frac{s}{k+1}$, $s \in (0, \frac{\lambda_{\min}(\H_{lb})}{L_{\ell}} )$.
		Let $\ome^K$ be an $\varepsilon_K$-stationary point of $\varphi_{K}(\ome)$, i.e., 
		\begin{equation*}
			\| \nabla \varphi_K(\ome^K) \| = \varepsilon_K.
		\end{equation*}
		Then if $\varepsilon_K \rightarrow 0$, we have that any limit point $\bar{\ome}$ of the sequence $\{\ome^K\}$ is a stationary point of $\varphi$, i.e., 
		\begin{equation*}
			\nabla \varphi(\bar{\ome}) = 0.
		\end{equation*}
	\end{thm}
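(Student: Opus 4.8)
The plan is to deduce the result directly from the uniform convergence of the approximate gradients established in Proposition~\ref{prop1}, combined with the continuity of $\nabla\varphi$ and a standard three-term triangle-inequality decomposition. First I would observe that since $\H_{lb}\preceq\H_{\ome}\preceq\H_{ub}$ for all $\ome\in\Omega$, the norms $\|\cdot\|_{\H_{\ome}}$ are uniformly equivalent to the Euclidean norm; in particular $\sqrt{\lambda_{\min}(\H_{lb})}\,\|v\|\le\|v\|_{\H_{\ome}}$, so the conclusion $\sup_{\ome\in\Omega}\|\nabla\varphi_k(\ome)-\nabla\varphi(\ome)\|_{\H}\to0$ of Proposition~\ref{prop1} immediately yields $\sup_{\ome\in\Omega}\|\nabla\varphi_k(\ome)-\nabla\varphi(\ome)\|\to0$ in the Euclidean norm used in the statement.

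Next I would establish that $\nabla\varphi$ is continuous on $\Omega$. Recall from the proof of Proposition~\ref{prop1} that $\varphi(\ome)=\ell(\u^*(\ome),\ome)$ with $\u^*(\ome)$ the unique fixed point, and
\begin{equation*}
\nabla\varphi(\ome)=\frac{\partial\ell(\u^*(\ome),\ome)}{\partial\u}\frac{\partial\u^*(\ome)}{\partial\ome}+\frac{\partial\ell(\u^*(\ome),\ome)}{\partial\ome}.
\end{equation*}
Since $\sup_{\ome\in\Omega}\|\u^k(\ome)-\u^*(\ome)\|_{\H}\to0$ by Eq.~\eqref{eq:sup uk+1 - u*} and each $\u^k(\cdot)$ is continuous (as $U=\R^n$, the iteration in Eq.~\eqref{simple_bilevel_alg} composes continuous maps), $\u^*(\cdot)$ is a uniform limit of continuous maps, hence continuous. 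Moreover the implicit formula $\frac{\partial\u^*(\ome)}{\partial\ome}=(\I-\frac{\partial\T(\u^*(\ome),\ome)}{\partial\u})^{-1}\frac{\partial\T(\u^*(\ome),\ome)}{\partial\ome}$ is valid because the contraction in Assumption~\ref{assum_stationary D} forces $\|\frac{\partial\T}{\partial\u}\|_{\H}\le\rho<1$, so $\I-\frac{\partial\T}{\partial\u}$ is invertible; together with the Lipschitz continuity of $\frac{\partial\T}{\partial\u}$, $\frac{\partial\T}{\partial\ome}$ and the twice continuous differentiability of $\ell$, this shows $\nabla\varphi$ is continuous on the compact set $\Omega$.

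Now let $\bar{\ome}$ be any limit point of $\{\ome^K\}$, which exists since $\Omega$ is compact, and pick a subsequence $\{\ome^{K_j}\}$ with $\ome^{K_j}\to\bar{\ome}$. The key estimate is the decomposition
\begin{equation*}
\|\nabla\varphi(\bar{\ome})\|\le\|\nabla\varphi(\bar{\ome})-\nabla\varphi(\ome^{K_j})\|+\sup_{\ome\in\Omega}\|\nabla\varphi(\ome)-\nabla\varphi_{K_j}(\ome)\|+\|\nabla\varphi_{K_j}(\ome^{K_j})\|.
\end{equation*}
The first term tends to $0$ by continuity of $\nabla\varphi$ and $\ome^{K_j}\to\bar{\ome}$; the second term tends to $0$ by the Euclidean uniform convergence derived from Proposition~\ref{prop1}; and the third term equals $\varepsilon_{K_j}\to0$ by the $\varepsilon_K$-stationarity hypothesis. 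Letting $j\to\infty$ gives $\|\nabla\varphi(\bar{\ome})\|=0$, i.e. $\nabla\varphi(\bar{\ome})=0$, as claimed.

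I expect the only genuinely delicate point to be the continuity of $\nabla\varphi$, and in particular of $\frac{\partial\u^*}{\partial\ome}$: this rests on the uniform equivalence of the $\H_{\ome}$-norms and on the invertibility of $\I-\frac{\partial\T(\u^*(\ome),\ome)}{\partial\u}$ guaranteed by the contraction assumption, both of which are already in hand from the proof of Proposition~\ref{prop1}. Once continuity is secured, the remaining argument is the routine three-term splitting above.
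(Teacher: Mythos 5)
Your proposal is correct and takes essentially the same route as the paper's own proof: both rest on Proposition~\ref{prop1} for the uniform convergence $\sup_{\ome\in\Omega}\|\nabla\varphi_k(\ome)-\nabla\varphi(\ome)\|\rightarrow 0$, combine it with the $\varepsilon_K$-stationarity of $\ome^K$ via a triangle inequality, and conclude by the continuity of $\nabla\varphi$. The only differences are cosmetic: you bound $\|\nabla\varphi(\bar{\ome})\|$ directly by three terms where the paper bounds $\|\nabla\varphi(\ome^l)\|$ by two and then lets $l\rightarrow\infty$, and you explicitly record the $\H_{\ome}$-to-Euclidean norm equivalence and the continuity of $\nabla\varphi$ (via invertibility of $\I-\frac{\partial}{\partial\u}\T(\u^*(\ome),\ome)$ under the contraction assumption), details the paper leaves implicit or defers to the proof of Proposition~\ref{prop1}.
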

	
	\begin{proof}
		For any limit point $\bar{\ome}$ of the sequence $\{\ome^K\}$, let $\{\ome^{l}\}$ be a subsequence of $\{\ome^K\}$ such that $\ome^{l} \rightarrow \bar{\ome} \in \Omega$. For any $\epsilon > 0$, as shown in Proposition~\ref{prop1}, there exists $k_1$ such that 
		\begin{equation*}
			\sup_{\ome \in \Omega} \| \nabla \varphi_k(\ome) - \nabla \varphi(\ome) \| \le \epsilon/2, \quad \forall k \ge k_1.
		\end{equation*}
		Since $\varepsilon_k \rightarrow 0$, there exists $k_2 > 0$ such that $\varepsilon_k \le  \epsilon/2$ for any $k \ge k_2$. Then, for any $l \ge \max(k_1,k_2)$, we have
		\begin{equation*}
			\|\nabla \varphi(\ome^l) \| \le \|\nabla \varphi(\ome^l) - \nabla \varphi_l(\ome^l) \| + \| \nabla \varphi_l(\ome^l)  \| \le  \epsilon.
		\end{equation*}
		Taking $l \rightarrow \infty$ in the above inequality, and by the continuity of $\nabla \varphi$, we get
		\begin{equation*}
			\|\nabla \varphi(\bar{\ome}) \| \le \epsilon.
		\end{equation*}
		Since $\epsilon$ is arbitrarily chosen, we obtain $\nabla \varphi(\bar{\ome}) = 0$.
	\end{proof}

	\newpage
	\section{Detailed descriptions for $\D$ in Section \ref{sec:application}}
	\label{sec:appendix B about D}
	
	\subsection{Proximal Gradient Method ($\D_{\mathtt{PG}}$)}
	Consider the following convex minimization problem
	\begin{equation}\label{PGM_prob}
		\min_{\u \in \R^{n}} f(\u) + g(\u) 
	\end{equation}
	where $f, g :\R^{n} \rightarrow \R $ are proper, closed, and convex functions, 
	and $f$ is a continuously differentiable function with a Lipschitz continuous gradient.
	The proximal gradient method for solving problem Eq.~\eqref{PGM_prob} reads as
	\begin{equation}\label{PGM_scheme}
		\u^{k+1} = \underset{\u}{\mathrm{argmin}}~\left\{ f(\u^k) + \langle \nabla f(\u^k) ,\u - \u^k \rangle + g(\u) +  \frac{1}{2\gamma}\|\u- \u^k\|^2_\G \right\}, \\
	\end{equation}
	where $\G \succeq 0$ and $\|\u\|_\G^2:= \langle \u, \G\u\rangle$. By parameterizing functions $f$, $g$ and matrix $\G$ by hyper-parameter $\ome$ in Eq.~\eqref{PGM_scheme} to make them learnable, we can obtain $\D_{\mathtt{PG}}$ in the following form,
	\begin{equation}\label{PGM_scheme_p}
		\D_{\mathtt{PG}}(\u^{k},\ome) = \underset{\u}{\mathrm{argmin}}~\left\{ f(\u^k,\ome) + \langle \nabla_\u f(\u^k,\ome) ,\u - \u^k \rangle + g(\u,\ome) +  \frac{1}{2\gamma}\|\u- \u^k\|^2_{\G(\ome)} \right\}.
	\end{equation}
	
	Next we show that $\D_{\mathtt{PG}}$ satisfies Assumption~\ref{assum_T} under the following standing assumption.
	\begin{assumption}\label{assum_pg}
		For any $\ome \in \Omega$, $f(\cdot,\ome)$ and $g(\cdot,\ome)$  are proper closed convex functions and $f(\cdot,\ome)$ are $L_{f}$-smooth. And there exist $\H_{ub} \succeq \H_{lb} \succ 0$ such that  $\H_{ub} \succeq \G(\ome) \succeq \H_{lb}$ for each $\ome \in \Omega$.
	\end{assumption}
	
	\begin{proposition}
		Suppose Assumption~\ref{assum_pg} holds and $\gamma\in (0,2\lambda_{\min}(\H_{lb})/L_f)$. Then $\D_{\mathtt{PG}}$ satisfies Assumption~\ref{assum_T}.
	\end{proposition}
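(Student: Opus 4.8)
The plan is to rewrite $\D_{\mathtt{PG}}(\cdot,\ome)$ as a composition of a forward gradient step and a proximal step, both taken in the $\G(\ome)$-geometry, and then verify each factor is non-expansive with respect to $\|\cdot\|_{\G(\ome)}$. Throughout I would take $\H_\ome := \G(\ome)$, so that Assumption~\ref{assum_pg} immediately supplies $\H_{ub} \succeq \H_\ome \succeq \H_{lb} \succ 0$, the outer requirement of Assumption~\ref{assum_T}. First I would complete the square in the $\|\cdot\|_{\G(\ome)}$-norm inside Eq.~\eqref{PGM_scheme_p}: writing $\langle \nabla_\u f(\u,\ome), \bar{\u} - \u\rangle = \langle \G(\ome)^{-1}\nabla_\u f(\u,\ome), \bar{\u} - \u\rangle_{\G(\ome)}$ and absorbing the linear term into the quadratic shows that, up to a constant independent of the minimization variable $\bar{\u}$, the objective equals $g(\bar{\u},\ome) + \frac{1}{2\gamma}\|\bar{\u} - w\|_{\G(\ome)}^2$ with $w = \u - \gamma\,\G(\ome)^{-1}\nabla_\u f(\u,\ome)$. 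Hence
\[
\D_{\mathtt{PG}}(\cdot,\ome) = \mathrm{prox}_{\gamma g(\cdot,\ome)}^{\G(\ome)} \circ \big( \I - \gamma\, \G(\ome)^{-1}\nabla_\u f(\cdot,\ome) \big),
\]
where the proximal map is computed with respect to $\|\cdot\|_{\G(\ome)}$. Since $g(\cdot,\ome)$ is proper closed convex and $\G(\ome)\succ 0$, the inner objective is strongly convex, so this proximal map is single-valued and well defined.

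Next I would bound each factor. The proximal operator $\mathrm{prox}_{\gamma g(\cdot,\ome)}^{\G(\ome)}$ is firmly non-expansive, hence non-expansive, with respect to $\|\cdot\|_{\G(\ome)}$ (a standard property of proximal maps, cf.\ \cite{Heinz-MonotoneOperator-2011}), exactly the kind of estimate already invoked in the proof of Lemma~\ref{simple_bilevel_lem}. For the forward step I would reuse the cocoercivity computation from the proof of Lemma~\ref{lem_bounded}: the $L_f$-smoothness and convexity of $f(\cdot,\ome)$ give that $\G(\ome)^{-1}\nabla_\u f(\cdot,\ome)$ is $\frac{\lambda_{\min}(\G(\ome))}{L_f}$-cocoercive with respect to $\langle\cdot,\cdot\rangle_{\G(\ome)}$ and $\|\cdot\|_{\G(\ome)}$. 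Expanding the squared $\G(\ome)$-norm of the difference of two forward steps and applying cocoercivity produces a residual term proportional to $\gamma\big(\gamma - 2\lambda_{\min}(\G(\ome))/L_f\big)$, which is $\le 0$ precisely when $\gamma \le 2\lambda_{\min}(\G(\ome))/L_f$. Here is the main point to get right: the prescribed range is $\gamma \in (0, 2\lambda_{\min}(\H_{lb})/L_f)$, and since $\G(\ome)\succeq \H_{lb}$ forces $\lambda_{\min}(\G(\ome))\ge \lambda_{\min}(\H_{lb})$, we indeed have $\gamma < 2\lambda_{\min}(\H_{lb})/L_f \le 2\lambda_{\min}(\G(\ome))/L_f$, so the forward step is non-expansive for every $\ome$. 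A composition of non-expansive operators is non-expansive, which establishes part~(1) of Assumption~\ref{assum_T} with $\H_\ome = \G(\ome)$.

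Finally, for part~(2) I would note that $\D_{\mathtt{PG}}(\cdot,\ome)$ is single-valued and continuous: the forward map is continuous because $\nabla_\u f(\cdot,\ome)$ is (Lipschitz) continuous, and the proximal map is non-expansive, hence continuous. A single-valued continuous operator on $\R^n$ has closed graph, giving closedness of $\mathrm{gph}\,\D_{\mathtt{PG}}(\cdot,\ome)$. The only genuinely delicate step is the factor-of-two bookkeeping in the cocoercivity estimate, together with the monotonicity $\G(\ome)\succeq\H_{lb}\Rightarrow\lambda_{\min}(\G(\ome))\ge\lambda_{\min}(\H_{lb})$ that converts the $\ome$-dependent threshold into the uniform bound on $\gamma$; everything else is a direct specialization of the non-expansiveness machinery already developed for Lemmas~\ref{lemma2.1} and~\ref{lem_bounded}.
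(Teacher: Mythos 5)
Your proof is correct and takes essentially the same route as the paper's: the paper likewise writes $\D_{\mathtt{PG}}(\cdot,\ome)$ as the forward--backward composition $\left( \I + \gamma \G(\ome)^{-1}\partial_\u g(\cdot,\ome) \right)^{-1}\left( \I - \gamma \G(\ome)^{-1} \nabla_\u f( \cdot ,\ome)  \right)$ with $\H_{\ome} = \G(\ome)$, and simply cites \cite{Cui2019}[Lemma 3.2] and \cite{Heinz-MonotoneOperator-2011}[Proposition 4.25] for the cocoercivity and non-expansiveness facts that you verify by hand, your step-size bookkeeping via $\lambda_{\min}(\G(\ome)) \ge \lambda_{\min}(\H_{lb})$ matching the paper's stated range. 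The only cosmetic difference is closedness, where the paper invokes outer-semicontinuity of $\partial_\u g(\cdot,\ome)$ together with continuity of $\nabla_\u f(\cdot,\ome)$ while you observe that a single-valued continuous map has closed graph; both arguments are immediate and equivalent here.
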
 
	\begin{proof}
		Since $\D_{\mathtt{PG}}( \cdot, \ome) = \left( \I + \gamma \G(\ome)^{-1}\partial_\u g(\cdot,\ome) \right)^{-1}\left( \I - \gamma \G(\ome)^{-1} \nabla_\u f( \cdot ,\ome)  \right)$, and $\nabla\u f( \cdot ,\ome)$ is $L_f$-Lipschitz continuous, by \cite{Cui2019}[Lemma 3.2], \cite{Heinz-MonotoneOperator-2011}[Proposition 4.25] and Assumption~\ref{assum_pg}, we have $\D_{\mathtt{PG}}( \cdot, \ome)$ satisfies Assumption~\ref{assum_T} (1) when setting $\G(\ome)$ as $\H_{\ome}$ for any $\ome \in \Omega$, with $\gamma\in (0,2\lambda_{\min}(\G_{lb})/L_f)$. The closedness of $\D_{\mathtt{PG}}( \cdot, \ome)$ follows from the outer-semicontinuity of $\partial_\u g(\cdot,\ome)$ and continuity of $\nabla_\u f( \cdot ,\ome)$, and then the proof is completed.
	\end{proof}

	\subsection{Proximal Augmented Lagrangian Method ($\D_{\mathtt{ALM}}$)} \label{sec:appendix B1 DPG}
	
	Consider the following convex minimization problem with linear constraints 
	\begin{equation}\label{ALM_prb1}
		\begin{aligned}
			\min_{\u \in \R^{n}} & ~~f(\u) \\
			s.t.~~ & \A\u =  \b,
		\end{aligned}
	\end{equation}
	where $f :\R^{n} \rightarrow \R $ is a proper closed convex function. Proximal ALM for solving problem Eq.~\eqref{ALM_prb1} is given by
	\begin{equation}\label{ALM_scheme}
		\begin{aligned}
			\u^{k+1} &= \underset{\u}{\mathrm{argmin}}~\left\{ f(\u) + \langle \lamm^k,\A\u - \b\rangle + \frac{\beta}{2}\|\A\u - \b\|^2 + \frac{1}{2}\|\u- \u^k\|_\G^2\right\}, \\
			\lamm^{k+1} & = \lamm^k + \beta( \A\u^{k+1} -\b),
		\end{aligned}
	\end{equation}
	where $\beta>0$.
	We can parameterize functions $f$, matrices $\A$ and $\G$, and vector $\b$ by hyper-parameter $\theta$ in Eq.~\eqref{ALM_scheme} to make them learnable as the following
	\begin{equation}\label{ALM_scheme_p}
		\begin{aligned}
			\u^{k+1} &= \underset{\u}{\mathrm{argmin}}~\left\{ f(\u,\theta) + \langle \lamm^k, \A(\theta)\u - \b\rangle + \frac{\beta}{2}\|\A(\theta)\u - \b(\theta)\|^2 + \frac{1}{2}\|\u- \u^k\|_{\G(\theta)}^2\right\}, \\
			\lamm^{k+1} & = \lamm^k + \beta( \A(\theta)\u^{k+1} - \b(\theta)).
		\end{aligned}
	\end{equation}
	By setting $\beta$ as the hyper-parameter and letting ${\ome}:= (\theta,\beta)$ as hyper-parameters for scheme Eq.~\eqref{ALM_scheme_p}, we can define $\D_{\mathtt{ALM}}$ by scheme Eq.~\eqref{ALM_scheme_p} as the following
	\begin{equation}
		\D_{\mathtt{ALM}}(\u^k, \ome) = \u^{k+1}.
	\end{equation}

	We can show that $\D_{\mathtt{ALM}}$ satisfies Assumption~\ref{assum_T} under the following standing assumption.
	\begin{assumption}\label{assum_alm}
		For any ${\ome} \in \Omega$, $f(\cdot,\theta)$ is proper closed convex functions. And there exist $\beta_{ub} \ge \beta_{lb} > 0$ and $\G_{ub} \succeq \G_{lb} \succ 0$ such that $\beta \in [\beta_{lb}, \beta_{ub}]$ and $\G_{ub} \succeq \G(\ome) \succeq \G_{lb}$ for each $\ome \in \Omega$.
	\end{assumption}

	\begin{proposition}
		Suppose Assumption~\ref{assum_alm} holds. Then $\D_{\mathtt{ALM}}$ satisfies Assumption~\ref{assum_T}.
	\end{proposition}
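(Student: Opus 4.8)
The plan is to recognize the proximal ALM recursion in Eq.~\eqref{ALM_scheme_p} as the resolvent of a maximal monotone saddle-point operator taken in a suitably weighted metric, and then to read off non-expansiveness from the firm non-expansiveness of resolvents (cf.~\cite{Heinz-MonotoneOperator-2011}). First I would make explicit that the natural state variable of the scheme is the concatenated primal-dual pair $\z = (\u,\lamm)$, so that $\D_{\mathtt{ALM}}(\cdot,\ome)$ is understood as the map $\z^k \mapsto \z^{k+1}$, and I would propose as the candidate metric
\begin{equation*}
	\H_\ome = \begin{pmatrix} \G(\theta) & 0 \\ 0 & \frac{1}{\beta}\I \end{pmatrix},
\end{equation*}
which is symmetric positive definite whenever $\G(\theta) \succ 0$ and $\beta > 0$, both guaranteed by Assumption~\ref{assum_alm}.

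Next I would derive the resolvent identity. The term $\frac{1}{2}\|\u - \u^k\|_{\G(\theta)}^2$ with $\G(\theta) \succeq \G_{lb} \succ 0$ makes the $\u$-subproblem strongly convex, so $\u^{k+1}$ is its unique minimizer and $\D_{\mathtt{ALM}}$ is single-valued. Writing the first-order optimality condition of that subproblem and substituting the dual update $\lamm^{k+1} = \lamm^k + \beta(\A(\theta)\u^{k+1} - \b(\theta))$ to eliminate $\lamm^k$ collapses the primal condition to $0 \in \partial_\u f(\u^{k+1},\theta) + \A(\theta)^\top \lamm^{k+1} + \G(\theta)(\u^{k+1} - \u^k)$, while rearranging the dual update yields $0 = (\b(\theta) - \A(\theta)\u^{k+1}) + \frac{1}{\beta}(\lamm^{k+1} - \lamm^k)$. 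Introducing the saddle operator
\begin{equation*}
	\mathcal{B}(\u,\lamm) := \begin{pmatrix} \partial_\u f(\u,\theta) + \A(\theta)^\top \lamm \\ \b(\theta) - \A(\theta)\u \end{pmatrix},
\end{equation*}
these two conditions combine into $0 \in \mathcal{B}(\z^{k+1}) + \H_\ome(\z^{k+1} - \z^k)$, i.e.\ $\z^{k+1} = (\I + \H_\ome^{-1}\mathcal{B})^{-1}\z^k$.

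Then I would invoke monotone-operator theory. Since $f(\cdot,\theta)$ is proper closed convex, $\partial_\u f(\cdot,\theta)$ is maximal monotone, and adding the continuous skew-symmetric linear coupling $(\u,\lamm)\mapsto(\A(\theta)^\top\lamm,\, -\A(\theta)\u)$ (plus the constant translation by $\b(\theta)$) preserves maximal monotonicity, so $\mathcal{B}$ is maximal monotone. Equipping $\R^n\times\R^m$ with the inner product $\langle\cdot,\cdot\rangle_{\H_\ome} := \langle\cdot,\H_\ome\,\cdot\rangle$ makes $\H_\ome^{-1}\mathcal{B}$ maximal monotone with respect to $\langle\cdot,\cdot\rangle_{\H_\ome}$, because $\langle \H_\ome^{-1}\mathcal{B}\z_1 - \H_\ome^{-1}\mathcal{B}\z_2,\, \z_1-\z_2\rangle_{\H_\ome} = \langle \mathcal{B}\z_1-\mathcal{B}\z_2,\, \z_1-\z_2\rangle \ge 0$. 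Its resolvent $(\I+\H_\ome^{-1}\mathcal{B})^{-1}$ is therefore firmly non-expansive, hence non-expansive, with respect to $\|\cdot\|_{\H_\ome}$, which is precisely Assumption~\ref{assum_T}(1); continuity of the resolvent delivers the closed graph in Assumption~\ref{assum_T}(2).

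Finally, the uniform bounds follow by setting $\H_{ub} = \mathrm{diag}(\G_{ub},\beta_{lb}^{-1}\I)$ and $\H_{lb} = \mathrm{diag}(\G_{lb},\beta_{ub}^{-1}\I)$ and using $\G_{ub}\succeq\G(\ome)\succeq\G_{lb}$ together with $\beta\in[\beta_{lb},\beta_{ub}]$, so that $\H_{ub} \succeq \H_\ome \succeq \H_{lb} \succ 0$. The main obstacle I anticipate is the careful bookkeeping of the metric change --- verifying that the resolvent is firmly non-expansive in $\|\cdot\|_{\H_\ome}$ rather than the Euclidean norm --- and confirming the maximal monotonicity of $\mathcal{B}$ via the appropriate sum rule when combining $\partial_\u f$ with the skew-symmetric coupling; by contrast, the algebraic reduction to the resolvent form is routine once the state is taken to be the full pair $\z=(\u,\lamm)$.
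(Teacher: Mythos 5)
Your proof is correct and follows essentially the same route as the paper: both identify $\D_{\mathtt{ALM}}$ acting on the primal--dual pair as the resolvent $(\H_\ome + \Phi_\ome)^{-1}\H_\ome$ of the maximal monotone saddle operator (your $\mathcal{B}$ is exactly the paper's $\Phi_\ome$) in the $\|\cdot\|_{\H_\ome}$-metric, and conclude firm non-expansiveness, closedness, and the uniform bounds $\H_{ub}\succeq\H_\ome\succeq\H_{lb}$ from Assumption~\ref{assum_alm}. Your version merely fills in details the paper delegates to citations --- deriving the resolvent identity from the optimality conditions, obtaining maximal monotonicity via the sum rule rather than Rockafellar's saddle-function corollary, getting closedness from continuity of the single-valued non-expansive resolvent instead of outer-semicontinuity of $\partial_\u f$, and writing $\H_{ub}$, $\H_{lb}$ explicitly.
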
 
	\begin{proof}
		As shown in \cite{ALM}, $\D_{\mathtt{ALM}}(\cdot, {\ome}) = (\Phi_{\ome} + \H_{\ome})^{-1}\H_{\ome}$ with 
		$$\Phi_{\ome}(\u,\ome) = 
		\begin{pmatrix}
			\partial_\u f(\u,\theta) + \A(\theta)^\top\lamm \\  - \A(\theta)\u +  \b(\theta)
		\end{pmatrix} \quad \text{and} \quad \H_\omega = 
		\begin{pmatrix}
			\G(\theta) & 0 \\  0 & \frac{1}{\beta}\textbf{I}
		\end{pmatrix}. 
		$$
		Since $\Phi_{\omega}$ is maximal monotone mapping (see, e.g, \cite{rockafellar}[Corollary 37.5.2]), and $\H_{\ome} \succ 0$ from Assumption~\ref{assum_alm}, it can be easily verified that $\D_{\mathtt{ALM}}(\cdot, \ome)$ is firmly non-expansive with respect to $\|\cdot\|_{\H_{\ome}}$, and thus $\D_{\mathtt{ALM}}( \cdot, \omega)$ satisfies Assumption~\ref{assum_T} (1) for any ${\ome} \in \Omega$. The closedness of $\D_{\mathtt{ALM}}( \cdot, \ome)$ follows from the outer-semicontinuity of $\partial_\u f(\cdot,{\ome})$. Next, Assumption~\ref{assum_alm} implies the existences of $\H_{ub} \succeq \H_{lb} \succ 0$ such that $\H_{ub} \succeq \H_{\ome} \succeq \H_{lb}$ for each ${\ome} \in \Omega$ and then the conclusion follows immediately.
	\end{proof}

	\subsection{Composition of $\D_{\num}$ and $\D_{\net}$  ($\D_{\num} \circ \D_{\net}$)}
	
	\begin{proposition}
		Suppose $\D_{\num}$ and $\D_{\net}$ satisfy Assumption~\ref{assum_T} with the same $\H_{\ome}$. Then $\D_{\num} \circ \D_{\net}$ satisfies Assumption~\ref{assum_T}.
	\end{proposition}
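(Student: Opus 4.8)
The plan is to verify both parts of Assumption~\ref{assum_T} for the composite operator $\D_{\num}(\cdot,\ome) \circ \D_{\net}(\cdot,\ome)$, reusing verbatim the \emph{same} triple $(\H_{ub},\H_{lb},\H_{\ome})$ that the hypothesis supplies simultaneously for $\D_{\num}$ and $\D_{\net}$. For part~(1) I would simply chain the two non-expansiveness estimates. Fixing $\ome \in \Omega$ and $\u_1,\u_2 \in \R^n$, non-expansiveness of $\D_{\num}(\cdot,\ome)$ applied at the intermediate points $\D_{\net}(\u_i,\ome)$, followed by non-expansiveness of $\D_{\net}(\cdot,\ome)$, gives
\begin{equation*}
\| \D_{\num}(\D_{\net}(\u_1,\ome),\ome) - \D_{\num}(\D_{\net}(\u_2,\ome),\ome) \|_{\H_{\ome}} \le \| \D_{\net}(\u_1,\ome) - \D_{\net}(\u_2,\ome) \|_{\H_{\ome}} \le \| \u_1 - \u_2 \|_{\H_{\ome}},
\end{equation*}
which is exactly Assumption~\ref{assum_T}(1) for the composition. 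The shared weighting matrix $\H_{\ome}$ is precisely what makes this telescoping legitimate; with different norms on the two factors the middle inequality would not match up.

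For part~(2), closedness of the composite graph, I anticipate the main obstacle: composition of closed \emph{set-valued} maps need not be closed in general, so one cannot conclude from the two closedness hypotheses alone. The resolution is to exploit that here each factor is single-valued and, being non-expansive with respect to $\|\cdot\|_{\H_{\ome}}$, is $1$-Lipschitz and hence continuous; since $\H_{lb} \preceq \H_{\ome} \preceq \H_{ub}$ with $\H_{lb} \succ 0$, the norm $\|\cdot\|_{\H_{\ome}}$ is topologically equivalent to the Euclidean norm, so this continuity is unambiguous. I would then argue sequentially: take $(\u^j,\v^j) \in \mathrm{gph}(\D_{\num}\circ\D_{\net})$ with $(\u^j,\v^j) \to (\u,\v)$; continuity of $\D_{\net}(\cdot,\ome)$ forces $\D_{\net}(\u^j,\ome) \to \D_{\net}(\u,\ome)$, and then continuity of $\D_{\num}(\cdot,\ome)$ forces $\v^j = \D_{\num}(\D_{\net}(\u^j,\ome),\ome) \to \D_{\num}(\D_{\net}(\u,\ome),\ome)$; uniqueness of limits yields $\v = \D_{\num}(\D_{\net}(\u,\ome),\ome)$, i.e.\ $(\u,\v) \in \mathrm{gph}(\D_{\num}\circ\D_{\net})$, so the composite graph is closed.

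Finally, because the same $\H_{ub} \succeq \H_{lb} \succ 0$ and the same selection $\H_{ub} \succeq \H_{\ome} \succeq \H_{lb}$ serve both factors, they serve the composition unchanged, so Assumption~\ref{assum_T} holds in full for $\D_{\num}\circ\D_{\net}$. The only genuinely delicate point is the closedness step, where the saving observation is that single-valuedness together with non-expansiveness upgrades the abstract closedness of each factor to ordinary continuity, a property under which composition is well behaved.
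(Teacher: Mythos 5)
Your proposal is correct, and part (1) matches the paper's argument exactly: the paper also chains the two non-expansiveness estimates, which is legitimate precisely because both factors share the same $\H_{\ome}$. On part (2) you take a genuinely different route. The paper never passes through continuity: it fixes $(\u^k,\v^k)\to(\bar{\u},\bar{\v})$ in the composite graph, uses non-expansiveness of $\D_{\net}(\cdot,\ome)$ only to conclude that the intermediate points $\D_{\net}(\u^k,\ome)$ are \emph{bounded}, extracts a convergent subsequence $\D_{\net}(\u^i,\ome)\to\bar{w}$, and then invokes the closedness hypothesis of each factor to place $(\bar{\u},\bar{w})\in\mathrm{gph}\,\D_{\net}(\cdot,\ome)$ and $(\bar{w},\bar{\v})\in\mathrm{gph}\,\D_{\num}(\cdot,\ome)$. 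You instead upgrade single-valuedness plus non-expansiveness (together with the norm equivalence coming from $\H_{ub}\succeq\H_{\ome}\succeq\H_{lb}\succ 0$) to $1$-Lipschitz continuity of each factor, after which the whole intermediate sequence converges and neither a subsequence extraction nor the closedness hypothesis is needed. Your version is more elementary and exposes something the paper's proof hides: for single-valued operators defined on all of $\R^n$, item (2) of Assumption~\ref{assum_T} is implied by item (1), so the composite's graph is closed automatically. The paper's compactness-based argument is more conservative; since it uses non-expansiveness only for boundedness and leans on graph closedness itself, it would survive in settings where the factors are merely closed and locally bounded rather than continuous, which fits how the paper verifies its other instances (e.g.\ $\D_{\mathtt{ALM}}$ via outer-semicontinuity of subdifferentials). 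Your opening caveat that compositions of closed maps need not be closed in general is also correct, and it accurately identifies why some extra ingredient—your continuity, or the paper's boundedness—must be supplied.
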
 
	\begin{proof}
		Since $\D_{\num}$ and $\D_{\net}$ satisfy Assumption~\ref{assum_T} with the same $\H_{\ome}$, it can be easily verified from the definition that $\D_{\num} \circ \D_{\net}$ satisfies Assumption~\ref{assum_T} (1) with $\H_{\ome}$. 
		For the closedness of $\D_{\num}( \cdot, \ome) \circ \D_{\net}( \cdot, \ome)$, for any fixed $\ome \in \Omega$, consider any sequence $\{(\u^k,\v^k)\} \in \mathrm{gph} (\D_{\num}( \cdot, \ome) \circ \D_{\net}( \cdot, \ome))$ satisfying $(\u^k,\v^k) \rightarrow (\bar{\u}, \bar{\v})$. Since $\D_{\net}( \cdot, \ome)$ satisfies Assumption~\ref{assum_T} (1) with $\H_{\ome} \succ 0$, and $\{\u^k\}$ is bounded, we have $\D_{\net}( \u^k, \ome)$ is bounded, and thus there exists a subsequence $\{(\u^i,\v^i)\} \subseteq \{(\u^k,\v^k)\}$ such that $\D_{\net}( \u^i, \ome) \rightarrow \bar{w}$. 
		Then it follows from the closedness of $\D_{\net}( \cdot, \ome)$ and $\D_{\num}( \cdot, \ome)$, 
		that $(\bar{\u}, \bar{\ome}) \in \mathrm{gph} \D_{\net}( \cdot, \ome)$ and $(\bar{\ome}, \bar{\v}) \in \mathrm{gph} \D_{\num}( \cdot, \ome) $, and thus $(\bar{\u}, \bar{\v}) \in \mathrm{gph} (\D_{\num}( \cdot, \ome) \circ \D_{\net}( \cdot, \ome))$. 
		Therefore, the closedness of $\D_{\num}( \cdot, \ome) \circ \D_{\net}( \cdot, \ome)$ follows and the proof is completed.
	\end{proof}
	\begin{remark}
		Given any non-expansive $\D_{\net}$  (which can be achieved by spectral normalization) and any positive-definite matrix $\H_{\ome}$, by setting  $\D_{\mathtt{net^*}}=\H_{\ome}^{-1/2}\D_{\net}\H_{\ome}^{1/2}$, we can obtain that $\D_{\mathtt{net^*}}$  satisfies Assumption~\ref{assum_T} with $\H_{\ome}$.
		%
		
	\end{remark}

	\subsection{Summary of Operator $\D$}

	\begin{table}[!htbp]\small \label{tab:appendix summary of D}
		\centering
		\caption{Summary of operator $\D$ and $\H_{\ome}$.}
		\renewcommand\arraystretch{1.2}
		\setlength{\tabcolsep}{1mm}{
			\renewcommand\arraystretch{1.1}
			\resizebox{\textwidth}{!}{
				\begin{tabular}{ccc}
					\hline
					$\D$ & Operator&$\H_{\ome}$ \\
					\hline
					\multirow{3}{*}{$\D_\num$}&${\mathtt{PG}}:\u^{k+1} = \underset{\u}{\mathrm{argmin}}~\left\{ f(\u^k,\ome) + \langle \nabla_\u f(\u^k,\ome) ,\u - \u^k \rangle + g(\u,\ome) +  \frac{1}{2\gamma}\|\u- \u^k\|^2_{\G(\ome)}\right\}$ &$\G({\ome})$\\
					&${\mathtt{ALM}}:\left\{\	
					\begin{aligned}
						\u^{k+1} &= \underset{\u}{\mathrm{argmin}}~\left\{ f(\u,\theta) + \langle \lamm^k, \A(\theta)\u - \b\rangle + \frac{\beta}{2}\|\A(\theta)\u - \b(\theta)\|^2 + \frac{1}{2}\|\u- \u^k\|_{\G(\theta)}^2\right\} \\
						\lamm^{k+1} & = \lamm^k + \beta( \A(\theta)\u^{k+1} - \b(\theta))
					\end{aligned} \right.
					$&
					$\begin{pmatrix}
						\G(\theta) & 0 \\  0 & \frac{1}{\beta}I
					\end{pmatrix}$ \\
					\hline
					
					$\D_\net$ & Various networks with non-expansive property($1$-Lipschitz continuous)
					&$\textbf{I}$\\		
					\hline
					
					$\D_{\num} \circ \D_{\net}$& $\D_{\num} \circ\left( \H_{\num,\ome}^{-1/2}\D_{\net}\H_{\num,\ome}^{1/2}\right) $ where $\D_{\net}$ is non-expansive &$\H_{\num,\ome}$\\	
					
					\hline
				\end{tabular}
			}
		}
	\end{table}

	\newpage
	\section{Experimental Details} \label{sec:appendix C experiments}

	Our experiments were mainly conducted on a PC with Intel Core i9-10900KF CPU (3.70GHz), 128GB RAM and two NVIDIA GeForce RTX 3090 24GB GPUs. 
    In all experiments, we use synthetic datasets,
    and adopt the Adam optimizer for updating variable $\ome$. 
    

	\subsection{Sparse Coding} \label{sec:appendix C1 sparse coding}
	For sparse coding, we set batch size=128, random seed=1126, training set size=10000. The testing set size depends on the size of each image. 
    Because we conduct unsupervised single image training, we do not use the MSE loss between the clear picture and the generated picture as the upper loss, but instead use the same unsupervised loss as in~\cite{xie2019differentiable}.
    Here $\D_\net$ is set to be $\D_{\mathtt{DLADMM}}$ and is given as follows
	\begin{equation}
		\begin{aligned}
			\u_1^{k+1} &=\underset{\u_1}{\operatorname{argmin}}\left\{\Vert\u_1\Vert_1+\left\langle\beta  \Q^{T}\left( \Q \u_1^{k}+  \u_2^{k}-\b+\lamm^{k} / \beta\right), \u_1\right\rangle+\frac{\rho_{1}}{2}\left\|\u_1-\u_1^{k}\right\|^{2}\right\}, \\
			\u_2^{k+1} &=\underset{\u_2}{\operatorname{argmin}}\left\{\Vert\u_2\Vert_1+\left\langle\beta \left( \Q \u_1^{k+1}+  \u_2^{k}-\b+\lamm^{k} / \beta\right), \u_2\right\rangle+\frac{\rho_{2}}{2}\left\|\u_2-\u_2^{k}\right\|^{2}\right\}, \\
			\lamm^{k+1} &=\lamm^{k}+\gamma \beta\left( \Q \u_1^{k+1}+  \u_2^{k+1}-\b\right).
		\end{aligned}
	\end{equation}

	For comparing with DLADMM more directly, we set $\D_\num$ to be $\I$ (the identity operator) here.
	We choose $ \left(\rho_{1}, \rho_{2}\right) \in \left\{\rho_{1} \geq \beta L_{\Q}^{2}, \rho_{2} \geq \beta L_{\mathbf{I}}^{2}\right\}$ to ensure the non-expansive hypothesis.
	All methods follow the general setting of hyper-parameters given in Table~\ref{hyper list}.

	\begin{table}[!htbp]
		\centering
		\caption{Values for hyper-parameters of sparse coding.}
		\label{hyper list}
		\renewcommand\arraystretch{1.2}
		\setlength{\tabcolsep}{1mm}{
			\begin{tabular}{cc|cc}
				\hline
				Hyper-parameters & Value&Hyper-parameters & Value \\
				\hline
				Epochs & 100 &Learning rate & $0.0002 * 0.5^{epoch / 30}$ \\
				Stage & 15 &Ratio $\beta$& $0.1$\\
				Batch size & 128&Ratio $\gamma$& $1$ \\
				\hline
			\end{tabular}
		}
	\end{table}

	\begin{figure}[!htbp]
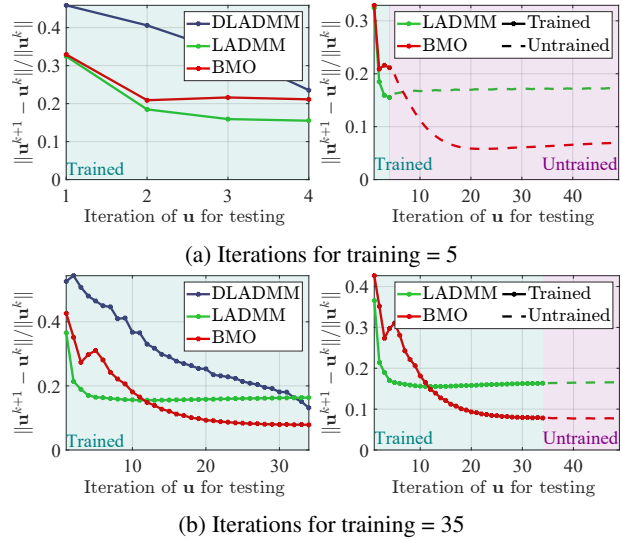

		\centering
		\begin{subfigure}[t]{0.48\textwidth}{
				\includegraphics[height=3cm,width=4cm]{5inneru.pdf}
				\includegraphics[height=3cm,width=4cm]{5inneruex.pdf}}
			\subcaption{Iterations for training = 5} \label{sub:25}
		\end{subfigure}

		\begin{subfigure}[t]{0.48\textwidth}{
				\includegraphics[height=3cm,width=4cm]{35inneru.pdf}
				\includegraphics[height=3cm,width=4cm]{35inneruex.pdf}}
			\subcaption{Iterations for training = 35}	\label{sub:35}
		\end{subfigure}

		\caption{Convergence curves of $\Vert\u^{k+1}-\u^{k}\Vert/\Vert\u^{k}\Vert$ with respect to $k$, the number of iterations of $\u$ for testing, after (\subref{sub:25}) 5 and (\subref{sub:35}) 35 iterations for training. 
			Solid lines on the right column represent the iterations for testing are less than those for training (trained iterations), while dotted lines represent the iterations for testing are more than those for training (untrained iterations). 
			It can be seen that our method can successfully learn a non-expansive mapping with different number of iterations for training. 	
		}
		\label{fig:inner loop_appendix}
	\end{figure}

	In addition to Figure~\ref{fig:inner loop}, we show more results to demonstrate the impact on the number of iterations for training in Figure~\ref{fig:inner loop_appendix},
	and we can see that our method remain stable when the number of iterations for training changes.
	Note that for DLADMM, the number of iterations for training have to be more than those for testing, so in the right column we only show the curves of LADMM and BMO.

	\subsection{Image Deconvolution} \label{sec:appendix C2 deconvolution}
	 
	As for network architectures $\D_\net$, we use DRUNet which consists of four scales. 
	Each scale has an identity skip connection between 2 $\times$ 2 strided convolution (SConv) downscaling and 2 $\times$ 2 transposed convolution (TConv) upscaling operators. 
	The number of channels in each layer from the first scale to the fourth scale are 64, 128, 256 and 512, respectively. 
	Four successive residual blocks are adopted in the downscaling and upscaling of each scale. 
	For numerical update $\D_\num$, by using the auxiliary variable $\z=\W\u$, 
	we transform the problem as $\Vert\Q\W^{-1}\z-\b\Vert^2_2$ with a regularization term $\Vert\z\Vert_1$. 
	For this application, the numerical operator $\D_{\mathtt{PG}}$ is given by
	\begin{equation}
		\z^{k+1}=\underset{\u}{\operatorname{argmin}}\left\{\Vert\z^k\Vert_1+\left\langle\W^{-\top}\Q^\top(\Q\W^{-1}\z^k-\b), \z-\z^{k}\right\rangle+\frac{1}{2 }\left\|\z-\z^{k}\right\|_{\G(\ome)}^{2}\right\},\label{pglearn_solve}
	\end{equation}
	where $\G(\ome)$ is a parameterized diagonal matrix defined in Appendix~\ref{sec:appendix B1 DPG}.
	Here we use MSE as the upper loss function.
	We follow the general setting of hyper-parameters given in Table~\ref{hyp}.
	\begin{table}[!htbp]
		\centering
		\caption{Values for hyper-parameters of image deconvolution.} 
		\label{hyp}
		\renewcommand\arraystretch{1.2}
		\setlength{\tabcolsep}{1mm}{
			\begin{tabular}{cc|cc}
				\hline
				Hyper-parameters & Value &Hyper-parameters & Value \\
				\hline
				Epochs & 10000 &Learning rate & 0.0001\\
				Stage & 8 &Ratio $\mu$& $0.3$\\
				Batch size & 1&Ratio $\alpha$& $0.9$ \\
				\hline
			\end{tabular}
		}
	\end{table}

	\subsection{Rain Streak Removal}\label{sec:appendix C3 rain}
	In the rain streak removal task, for dataset, we use Rain100L and Rain100H~\cite{derain}. 
	For network architecture $\D_\net$, we use a 3-layer convolutional network with $\u_b,\u_r$ and $\b$ as the network input to estimate $\u_r$, 
	and a 2-layer convolutional network with $\u_r$ and $\b$ as the input to estimate $\u_b$. 
	In the network for estimating $\u_r$, we use some prior information of $\u_r$ as input just like~\cite{wang2020model}. 
	Such a problem falls into the form of problem in Eq.~\eqref{ALM_prb1} as the following,
	$$
	f(\u)=\frac{1}{2}\Vert\u_b+\u_r-\b\Vert_2^2+\kappa_b\Vert\v_b\Vert_1+\kappa_r\Vert\v_r\Vert_1 \ \text{s.t.}\ \A\u=0,
	$$
	where
	$$
	\u=\left[ \u_b;\u_r;\v_b;\v_r\right], 
	\A=	\left(\begin{array}{cccc}  
		\mathbf{I}&0&-\mathbf{I}&0\\
		0&\nabla&0&-\mathbf{I}\\
	\end{array}\right).
	$$
	Then the numerical operator $\D_{\mathtt{ALM}}$ with hyper-parameters $\beta,\rho_{\u_b}$, $\rho_{\u_r}$, $\rho_{\v_b}$ and $\rho_{\v_r}$ reads as 
	\begin{equation}
		\begin{aligned}
			\u_b^{k+1}&=\arg\min\limits_{\u_b}\{\left\langle (\u_b^k+\u_r^k-\b)+\beta \nabla^\top(\nabla \u_b^k-\v_b^k)+\lamm_b^k,\u_b\right\rangle +\frac{\rho_{\u_b}}{2}\Vert (\u_b-\u_b^k)\Vert_2^2\},\\
			\u_r^{k+1}&=\arg\min\limits_{\u_r}\{\left\langle (\u_b^k+\u_r^k-\b)+\beta (\u_r^k-\v_r^k)+\lamm_r^k,\u_r\right\rangle +\frac{\rho_{\u_r}}{2}\Vert (\u_r-\u_r^k)\Vert_2^2\},\\
			\v_b^{k+1}&=\arg\min\limits_{\v_b}\{\kappa_b \Vert \v_b\Vert_1+\left\langle -\beta (\nabla \u_b^{k+1}-\v_b^k)+\lamm_b^k,\v_b\right\rangle +\frac{\rho_{\v_b}}{2}\Vert  (\v_b-\v_b^k)\Vert_2^2\},\\
			\v_r^{k+1}&=\arg\min\limits_{\v_r}\{\kappa_m \Vert \v_r\Vert_1+\left\langle -\beta ( \u_r^{k+1}-\v_r^k)+\lamm_r^k,\v_r\right\rangle +\frac{\rho_{\v_r}}{2}\Vert  (\v_r-\v_r^k)\Vert_2^2\},\\
			\lamm_b^{k+1}&=\lamm_b^k+\beta (\nabla \u_b^{k+1}-\v_b^{k+1}),\\
			\lamm_r^{k+1}&=\lamm_r^k+\beta (\u_r^{k+1}-\v_r^{k+1}).
		\end{aligned}
	\end{equation}

	By setting $\ome = (\beta,\rho_{\u_b}$, $\rho_{\u_r}$, $\rho_{\v_b})$, we have
	\begin{equation}
		\H_{\ome}=\left(\begin{array}{ccccc}  
			\rho_{\u_b}-\beta\mathbf{I}&0&0&0&0\\
			0&\rho_{\u_r}-\beta\nabla^\top\nabla&0&0&0\\
			0&0&\rho_{\v_b}-\beta\mathbf{I}&0&0\\
			0&0&0&\rho_{\u_b}-\beta\mathbf{I}&0\\
			0&0&0&0&\frac{1}{\beta}\mathbf{I}\\
		\end{array}\right).
	\end{equation}
	In practice, we choose $\Omega$ such that $ \H_{\ome} \succ 0$ for each $\ome \in \Omega$,
	and $\H_{\ome}$ can be inverted quickly by Fourier transform.
	Here we use MSE as the upper loss function.
	We follow the general setting of hyper-parameters given in Table~\ref{tab:derain list}.
	\begin{table}[!htbp]
		\centering
		\caption{Values for hyper-parameters of rain streak removal.}
		\label{tab:derain list}
		\renewcommand\arraystretch{1.1}
		\setlength{\tabcolsep}{1mm}{
			\begin{tabular}{cc|cc}
				\hline
				Hyper-parameter & Value&Hyper-parameter & Value\\
				\hline
				Epochs & 100&Learning rate & 0.001 \\
				Stage & 17 &Ratio $\mu$& $0.1$\\
				Batch size & 16 &Ratio $\alpha$& $0.9$ \\
				\hline
			\end{tabular}
		}
	\end{table}

\end{document}